\def\bm{\boldsymbol}
\newcommand{\comment}[1]{}
\newcommand{\BEA}{\begin{eqnarray}}
\newcommand{\EEA}{\end{eqnarray}}
\newcommand{\td}{\text{d}}
\newcommand{\OL}{\mathcal{L}}
\newcommand{\rank}{\operatorname{rank}}
\DeclarePairedDelimiter{\ceil}{\lceil}{\rceil}
\newtheorem{lem}{Lemma}[section]
\newtheorem{theo}{Theorem}[section]
\newtheorem{prop}{Proposition}[section]
\newtheorem{defi}{Definition}[section]
\newtheorem{coro}{Corollary}[section]
\newtheorem{assu}{Assumption}[section]
\newtheorem{remark}{Remark}
\newtheorem{example}{Example}[section]
\title{Error Bounds of the Invariant Statistics in Machine Learning of Ergodic It\^o Diffusions}
\author{
  He Zhang \\
  Department of Mathematics \\
  The Pennsylvania State University, University Park, PA 16802, USA\\
  \texttt{hqz5159@psu.edu} \\
  \And
  John Harlim \\
  Department of Mathematics, Department of Meteorology and Atmospheric Science, \\ Institute for Computational and Data Sciences \\
  The Pennsylvania State University, University Park, PA 16802, USA\\
  \texttt{jharlim@psu.edu} \\
  \And
  Xiantao Li \\
  Department of Mathematics\\
  The Pennsylvania State University, University Park, PA 16802, USA\\
  \texttt{xxl12@psu.edu} \\
}
\begin{document}

\maketitle

\begin{abstract}
This paper studies the theoretical underpinnings of machine learning of ergodic It\^o diffusions. The objective is to understand the convergence properties of the invariant statistics when the underlying system of stochastic differential equations (SDEs) is empirically estimated with a supervised regression framework. Using the perturbation theory of ergodic Markov chains and the linear response theory, we deduce a linear dependence of the errors of one-point and two-point invariant statistics on the error in the learning of the drift and diffusion coefficients. More importantly, our study shows that the usual $L^2$-norm characterization of the learning generalization error is insufficient for achieving this linear dependence result. We find that sufficient conditions for such a linear dependence result are through learning algorithms that produce a uniformly Lipschitz and consistent estimator in the hypothesis space that retains certain characteristics of the drift coefficients, such as the usual linear growth condition that guarantees the existence of solutions of the underlying SDEs. We examine these conditions on two well-understood learning algorithms: the kernel-based spectral regression method and the shallow random neural networks with the ReLU activation function.
\end{abstract}

\keywords{Supervised Learning \and Random Neural Network  \and Kernel Regression \and Perturbation Theory of Markov Process \and Linear Response Theory}

\section{Introduction}

Model error is inevitable, whether the model is formulated from direct empirical observations or deduced from fundamental physical principles, e.g., conservation laws. In this paper, we study modeling error arising from learning dynamical systems that obey a system of stochastic differential equations (SDEs) driven by Brownian noise \cite{kloeden2013numerical, oksendal2013stochastic,Pav_book:14}, which are used in many scientific disciplines. In this context, the task is to identify the drift and diffusion coefficients from a time series of the SDEs. This inverse problem has been a central topic of interest for a long time and posted under various names, from parameter estimation, data-driven modeling, closure modeling, and lately, as a supervised learning task as machine learning becomes popular.

When the function forms are presumed, many classical methods, e.g., moment methods, maximum likelihood, and filtering, can be used \cite{nielsen2000parameter}. Along this line, the MCMC-based Bayesian inference \cite{peavoy2015systematic,mbalawata2013parameter} is an important direction that allows for the estimation of the distribution of the parameters instead of point estimation in the traditional approaches. Since the same problem can be posed as a supervised learning task, a lot of recent interest has been shifting to machine learning approaches. Among the linear estimators, a popular approach is the kernel-based method  \cite{chen2020non,nickl2020nonparametric,garcia2017nonparametric,rajabzadeh2016robust,lamouroux2009kernel,chmiela2018towards,chmiela2020accurate}, whose connection to the parametric modeling paradigm has been studied in \cite{jh:20}. In this direction, many nonparametric models have been proposed, including the orthogonal polynomials \cite{rajabzadeh2016robust}, wavelets \cite{nickl2020nonparametric}, Gaussian processes \cite{garcia2017nonparametric}, radial kernels \cite{garcia2017nonparametric}, diffusion maps based models \cite{berry2015nonparametric,berry2020bridging,gilani2021kernel}, just to name a few. Beyond the kernel approaches, the neural-network approach has been applied to estimate the drift coefficient \cite{koner2020permutationally} with application in biomolecular modeling, and the missing component in the drift term \cite{HJLY:19} with application to modeling atmospheric flow over topography. 

Recurrent neural networks were shown to produce state-of-art numerical performances in learning high-dimensional nonlinear dynamical systems,  even beyond SDEs \cite{vlachas2018data,ma2018model,pan2018data,HJLY:19}.
These empirical successes, however, are not completely understood. Specifically, while the approximation theory of recurrent neural networks has been studied (see e.g., \cite{hammer2000,schafer2006recurrent}), it remains unclear whether the neural network model, obtained from a training procedure that involves solving a nonlinear, highly non-convex, optimization problem, can provide a convergence guarantee.  Reservoir Computing (RC) \cite{jaeger2001echo} was introduced as an alternative to the tedious training procedure in the neural network model. This method, which is effectively a random neural network \cite{rahimi2008uniform} in the context of recurrent neural network architecture, is based on the premise that fitting randomized function can be as effective yet computationally cheaper than solving the corresponding nonlinear optimization problem. This class of approach is effectively a conditionally linear estimator since it specifies the parameters in the activation function by randomly generated weights and trains the outer weights using the linear (ridge-) regression method. This surprisingly simple training procedure was shown to be effective in learning attractors of chaotic dynamical systems \cite{jaeger2004harnessing,pathak2018model,pathak2017using}. Recent theoretical results also shed some lights on its approximation and estimation properties \cite{gonon2020approximation,gonon2020risk}, and universality in learning stochastic processes \cite{grigoryeva2018universal}.

Building on the above independently reported positive successes, our goal is to understand under which conditions the underlying stochastic processes, driven by unknown SDEs, can be accurately emulated by a supervised learning procedure. Various metrics can be used to quantify the consistency of the estimated dynamics. For example, the (strong) pathwise error convergence that is classically used to characterize the numerical discretization error  \cite{kloeden2013numerical} has been used for quantifying the accuracy in learning partially known dynamics \cite{HJLY:19}. In the SDE application, they deduced under mild conditions that one can achieve accurate pathwise predictions up to a finite time with an error bound that is polynomial as a function of the learning error rate. In this paper, we will quantify the error in the estimation of one-point invariant statistics and two-point correlation statistics. Although these two metrics are commonly used to empirically assess the performance of the estimated dynamics through various algorithms \cite{cev:08,mh:13,cl:15,hl:15,peavoy2015systematic,chmiela2018towards,jh:20,HJLY:19},
as a means to validate the consistency of the estimated dynamics, our emphasis is placed on the theoretical analysis. In particular, we will show that the errors in these statistics will depend linearly on a parameter $\epsilon$ that reflects the error in the estimation of drift and diffusion coefficients. This result not only guarantees the convergence of the invariant statistical estimation as $\epsilon\to 0$, but also provides a means for designing efficient learning algorithms when the parameter $\epsilon$ is specified as a function of the size of training data and other parameters that characterize the ``size'' of the hypothesis space, the strength of the noise, and the step size of the discrete-time series.     

Our study will be based on the perturbation theory of ergodic Markov chains \cite{rudolf2018perturbation,shardlow2000perturbation,mattingly2002ergodicity} and the linear response theory \cite{hairer2010simple}, which will be reviewed in Section~\ref{sec:review}. In the context of learning, we will specify the perturbation as the error induced by a regression learning framework in estimating the drift and diffusion coefficients of SDEs. Our main contribution, which will be discussed in Section~\ref{sec:per_theory}, is to deduce error bounds of the estimation of one-point and two-point invariant statistics in terms of the error of the learning framework. Beyond these error bounds, the more important aspect of this study is to specify mathematical conditions that allow for the error bounds to be valid. Practically, these conditions allow one to pre-determine whether the proposed learning method is adequate or whether it can be adjusted to guarantee a convergent estimation. We will examine the validity of these conditions on two machine learning methods. In Section~\ref{sec:application}, we will discuss a kernel-based spectral regression method. We consider an RKHS induced by the orthonormal set of eigenfunctions of an integral operator defined over the invariant distribution of the data, which can be empirically estimated from the discrete samples. Subsequently, in section~\ref{sec:application_ML}, we will discuss a random neural network model of a simple single hidden-layer feed-forward neural network with ReLU activation function, which is a simple randomized function approximation relative to the reservoir computing. In these two sections, we will provide an overview of the generalization errors of these methods adopted in our application. This discussion is mainly based on the results in \cite{rosasco2010learning,gonon2020approximation,wang2011optimal, wang2012erm, cucker2002mathematical}. For these two machine learning algorithms, we will also examine the validity of the Assumption~\ref{assu:coe_per} that underpins the perturbation theory of Markov chains.  In Section~\ref{sec:summary}, we close the paper with a summary and some discussions on open issues.

\section{Existing theory on statistics of perturbed Markov chains} \label{sec:review}

In this section we will review the essential concepts and results in the perturbation theory of Markov chains \cite{rudolf2018perturbation, shardlow2000perturbation} (Section~\ref{sec:per_MC}), ergodic theory of SDEs \cite{mattingly2002ergodicity} (Section~\ref{sec:Ito} and  Section~\ref{sec:erg_Ito}), and the long-time linear response theory \cite{hairer2010simple} (Section~\ref{sec:lin_resp}). The theory involves both continuous Markov processes, e.g., It\^o diffusions, and discrete Markov chains, e.g., the Euler-Maruyama approximation of the It\^o diffusions. We will use the notation $\{\cdot(t)\}$, e.g., $\{X(t)\}$, and $\{\cdot_{n}\}$, e.g., $\{X_{n}\}$, to denote Markov processes and Markov chains, respectively. Throughout the paper, $\|\cdot\|$ always denotes the standard Euclidean norm in $\mathbb{R}^{d}$.

\subsection{A perturbation theory for ergodic Markov chains}\label{sec:per_MC}

In this section, we review the perturbation theory for ergodic Markov chains in \cite{rudolf2018perturbation,shardlow2000perturbation}. Let $\mathcal{B}(\mathbb{R}^{d})$ denote the Borel $\sigma$-algebra on $\mathbb{R}^{d}$, and $\{u_{n}\}_{n=0}^{\infty}$ always denotes a Markov chain from a probability space $(\Omega, \mathcal{F}, \mathbb{P})$ to $(\mathbb{R}^{d}, \mathcal{B}(\mathbb{R}^d))$. In what follows, we will use the shorthand notation $|f|\leq V$ to mean $|f(x)|\leq V(x)$ for all $x\in \mathbb{R}^{d}$. Our first definition, following  \cite{shardlow2000perturbation}, focuses on the concept of geometrically ergodic Markov chains.
\begin{defi} \label{def:geo_erg}
A Markov chain $\{u_{n}\}_{n=0}^{\infty}$ is \emph{geometrically ergodic}, if:
\begin{enumerate}[i.]
    \item There exists a unique invariant measure, $\pi$, on $(\mathbb{R}^{d}, \mathcal{B}(\mathbb{R}^d))$.
    \item There exists a measurable function $V: \mathbb{R}^{d}\rightarrow [1,+\infty)$ such that
    \begin{equation*}
\mathbb{E}^{x}[V(u_{n})] < \infty, \quad \forall n \geq 0,
    \end{equation*}
    where $\mathbb{E}^{x}[\cdot]$ denotes the expectation under the initial condition $u_{0}=x$.
    \item Let $\mathcal{G}$ be the set of all measurable functions $f$ with $|f|\leq V$. There exists a set $\mathcal{G}_{0}\subset \mathcal{G}$ containing $V$ such that
    \begin{equation}\label{eq:geo_erg}
    \sup_{f\in \mathcal{G}_{0}} \left| \mathbb{E}^{x}[f(u_{n})]  - \pi(f) \right| \leq R \rho^{n} V(x), \quad \forall n \geq 0,
\end{equation}
for some constants $R\in(0,+\infty)$ and $\rho\in(0,1)$. Here, $\pi(f):= \int f \pi(\td x)$. 
\end{enumerate}
\end{defi}
The inequality in \eqref{eq:geo_erg}, as the key component of the geometrical ergodicity, describes the decay rate of the $V$-norm distance \cite{rudolf2018perturbation} between the distribution of $u_{n}$ and the invariant measure $\pi$ as $n\rightarrow +\infty$. In general, we may consider other probability distances to characterize the convergence in \eqref{eq:geo_erg}, e.g., the Wasserstein distance \cite{rudolf2018perturbation}. Here, the set $\mathcal{G}_{0}$ is introduced to rule out certain ``ill-behaved'' observables for simplicity. In our later discussions, $\mathcal{G}_{0}$ is either $\mathcal{G}$ itself, e.g., in Theorem~\ref{thm:ergodic}, or a set of locally Lipschitz functions, e.g., in Proposition~\ref{prop:erg_euler}. 

We approximate the geometrically ergodic Markov chain $\{u_{n}\}$ by another perturbed Markov chain $\{u_{n}^{\epsilon}\}$ (which may not be geometrically ergodic), where $\epsilon$ represents the ``scale'' of the perturbation. To specify how the approximation error is carried over to the error of the resulting statistics in the long run, we will state the following perturbation bound, which is a result of the Corollary 3.4 in \cite{rudolf2018perturbation}.

\begin{prop}\label{prop:per_bound}
Let $\{u_{n}\}$ be a geometrically ergodic Markov chain satisfying the condition in Definition~\ref{def:geo_erg}. We further assume that the Lyapunov function $V$ satisfies
\begin{equation}\label{eq:Exp_Lyp}
    \mathbb{E}^{x}[V(u_{1})] \leq \alpha V(x) + \beta, \quad \forall x\in \mathbb{R}^{d},
\end{equation}
with constants $\alpha\in(0,1)$ and $\beta\in (0,+\infty)$. Let $\{u_{n}^{\epsilon}\}$ be a perturbed Markov chain with respect to $\{u_{n}\}$. We define
\begin{equation}\label{eq:V_norm}
    \gamma : = \sup_{x\in \mathbb{R}^{d}} \sup_{f\in \mathcal{G}_{0}}  \frac{\left| \mathbb{E}^{x}[f(u_{1}^{\epsilon})] - \mathbb{E}^{x}[f(u_1)]\right|}{V(x)}. 
\end{equation}
If $\gamma\in (0, 1-\alpha)$, then, for any fixed initial condition $u_{0}=u^{\epsilon}_0 = x \in \mathbb{R}^{d}$, we have,  
\begin{equation}\label{eq:prop_21}
    \sup_{f\in \mathcal{G}_{0}} \left| \mathbb{E}^{x}[f(u_{n}^{\epsilon})] - \mathbb{E}^{x}[f(u_n)]\right| \leq R(1-\rho^{n}) \frac{\gamma \kappa}{1-\rho}, \quad \kappa: = \max\left\{V(x), \; \frac{\beta}{1-\gamma-\alpha}\right\}, \quad \forall n \geq 0,
\end{equation}
for some constant $R\in (0,+\infty)$, where $\rho\in(0,1)$ is defined by \eqref{eq:geo_erg}.
\end{prop}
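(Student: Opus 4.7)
The plan is to reduce the $n$-step statistical error to a telescoping sum of one-step errors — each small by the definition of $\gamma$ — and then control the growth of the Lyapunov function $V$ along the perturbed trajectory using the drift inequality \eqref{eq:Exp_Lyp}. Letting $P$ and $P^\epsilon$ denote the one-step transition operators of the unperturbed and perturbed chains, I would begin from the operator identity
$$(P^\epsilon)^n - P^n \;=\; \sum_{k=0}^{n-1} (P^\epsilon)^{n-1-k}\,(P^\epsilon - P)\,P^k,$$
apply it to $f$, and evaluate at $x$. Since $(P^\epsilon - P)$ annihilates constants, I may replace $P^k f$ by the centered iterate $P^k f - \pi(f)$ in every summand without changing anything.

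Each centered iterate is small by geometric ergodicity: \eqref{eq:geo_erg} gives $|P^k f(y) - \pi(f)| \leq R\rho^k V(y)$, so the rescaled function $(P^k f - \pi(f))/(R\rho^k)$ is $V$-dominated. Provided this rescaled function lies in the test class $\mathcal{G}_0$ on which $\gamma$ is defined — which is the reason the statement of the proposition allows $\mathcal{G}_0 = \mathcal{G}$ or a class stable under the transition operators — the definition of $\gamma$ in \eqref{eq:V_norm} then delivers the pointwise bound
$$\left|(P^\epsilon - P)(P^k f)(y)\right| \;\leq\; R\rho^k \gamma\, V(y).$$
Taking expectations along the perturbed chain yields
$$\left|\mathbb{E}^x[f(u_n^\epsilon)] - \mathbb{E}^x[f(u_n)]\right| \;\leq\; R\gamma \sum_{k=0}^{n-1} \rho^k\, \mathbb{E}^x[V(u_{n-1-k}^\epsilon)].$$

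The remaining step is to obtain a uniform-in-$m$ bound on $\mathbb{E}^x[V(u_m^\epsilon)]$, and this is the step I expect to be the main obstacle, since the Lyapunov inequality \eqref{eq:Exp_Lyp} is stated only for the unperturbed chain. The key observation is that \eqref{eq:V_norm} applied to $f=V$, which lies in $\mathcal{G}_0$ by hypothesis, gives $\mathbb{E}^x[V(u_1^\epsilon)] \leq \mathbb{E}^x[V(u_1)] + \gamma V(x) \leq (\alpha+\gamma)V(x) + \beta$, so the perturbed chain inherits a Lyapunov drift whose coefficient $\alpha+\gamma$ is strictly less than $1$ under the hypothesis $\gamma \in (0, 1-\alpha)$. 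Iterating,
$$\mathbb{E}^x[V(u_m^\epsilon)] \;\leq\; (\alpha+\gamma)^m V(x) + \frac{\beta}{1-\alpha-\gamma} \;\leq\; \max\!\left\{V(x),\;\frac{\beta}{1-\alpha-\gamma}\right\} = \kappa$$
uniformly in $m\geq 0$. Substituting this into the previous display and summing the geometric series $\sum_{k=0}^{n-1}\rho^k = (1-\rho^n)/(1-\rho)$ produces exactly \eqref{eq:prop_21}.
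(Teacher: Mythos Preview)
Your argument is correct and is precisely the standard telescoping proof underlying Corollary~3.4 of \cite{rudolf2018perturbation}, which is all the paper invokes here --- the paper does not supply an independent proof of this proposition. The one point worth flagging, which you already note, is that the step $|(P^\epsilon-P)(P^kf-\pi(f))(y)|\leq R\rho^k\gamma V(y)$ requires the rescaled centered iterate to belong to $\mathcal{G}_0$; this is automatic when $\mathcal{G}_0=\mathcal{G}$ (the setting of \cite{rudolf2018perturbation}) but is an implicit closure assumption when $\mathcal{G}_0$ is a proper subclass such as $\mathcal{G}_\ell$.
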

We would like to point out that the original error bound presented by Corollary 3.4 in \cite{rudolf2018perturbation} allows for $u_{n}$ and $u_{n}^{\epsilon}$ to have different initial conditions. 

As a direct consequence of Proposition~\ref{prop:per_bound} and Eq.~\eqref{eq:geo_erg}, we have
\begin{equation}\label{eq:long_time}
    \sup_{f\in \mathcal{G}_0} \left| \mathbb{E}^{x}[f(u_{n}^{\epsilon})] - \pi(f)\right| \leq R \left[ (1-\rho^{n}) \frac{\gamma\kappa}{1-\rho} + \rho^{n}V(x) \right], \quad \forall n\geq 0,
\end{equation}
for some constant $R\in (0,+\infty)$.
The fact that the error bound in \eqref{eq:prop_21} depends on $\gamma$, defined through the V-norm in \eqref{eq:V_norm}, provides a convenient way for characterizing the errors of the estimated transition kernel. Namely, we only need to study the ``one-step'' error between the statistics of $\{u_1\}$ and $\{u_1^\epsilon\}$. When $\mathcal{G}_{0} = \mathcal{G}$, the constant $\gamma$ in \eqref{eq:V_norm} is also used as the upper bound of the V-norm difference between the transition kernels of $\{u_{n}\}$ and $\{u_{n}^{\epsilon}\}$ over $x\in \mathbb{R}^{d}$ \cite{rudolf2018perturbation}.

\subsection{The It\^o diffusion and its approximations}\label{sec:Ito}

The key results developed in our paper focus on the case where the underlying dynamics is an It\^o diffusion. In this section, we review some basic concepts and results related to the It\^o diffusion and its approximations. For classical theory of It\^o diffusions, readers may refer to \cite{kloeden2013numerical, oksendal2013stochastic,Pav_book:14}.

A $d$-dimensional It\^o diffusion is a SDE of the form
\begin{equation}\label{eq:Ito_diff}
\dot{X}(t) = b(X(t)) + \sigma(X(t)) \dot{W}, \quad X(0) = x, \quad t\geq 0,
\end{equation}
where $x\in \mathbb{R}^{d}$, $b: \mathbb{R}^{d}\rightarrow \mathbb{R}^{d}$ and $\sigma: \mathbb{R}^{d}\rightarrow \mathbb{R}^{d\times m}$ ($m\leq d$) are the initial condition, drift and diffusion coefficients, respectively. The process $W$ in \eqref{eq:Ito_diff} denotes a standard $m$-dimensional Brownian motion. When $m<d$, the noise in \eqref{eq:Ito_diff} is degenerate. It\^o diffusions with degenerate noise are common in applications, e.g., Langevin dynamics \cite{Pav_book:14}. Non-degeneracy often makes it convenient to prove certain properties of the corresponding It\^o diffusions, e.g., the ergodicity \cite{mattingly2002ergodicity} and the regularity of the invariant measure \cite{bogachev2015fokker}. But for the sake of generality, in our paper, we will consider It\^o diffusions with possible degenerate noise.
We propose the following assumption on the coefficients.
\begin{assu}\label{assu:Ito_coef}
The coefficients $b$ and $\sigma$ in \eqref{eq:Ito_diff} are Borel measurable and satisfy the following conditions:
\begin{enumerate}[i.]
    \item \textbf{Globally Lipschitz condition:} There exists a constant $K_1\in(0,+\infty)$ such that \begin{equation*}
        \|b(x)-b(y)\| \leq K_{1}\|x-y\|, \quad \|\sigma(x) - \sigma(y)\|_{F} \leq K_{1}\|x-y\|,  \quad \forall x,y\in \mathbb{R}^{d}.
    \end{equation*}
    \item \textbf{Linear growth bound:} There exists a constant $K_{2}\in(0,+\infty)$ such that
    \begin{equation*}
    \|b(x)\|^2\leq K_{2}^{2}(1 + \|x\|^2), \quad \|\sigma(x)\|^2_{F} \leq K^{2}_{2}(1+\|x\|^2),  \quad \forall x,y\in \mathbb{R}^{d}.
    \end{equation*}
\end{enumerate}
Here, $\|\cdot\|_{F}$ denotes the Frobenius norm, that is, $\|\sigma\|_{F} = \left(\sum\limits_{i,j} \sigma^2_{ij}\right)^{\frac{1}{2}}$.
\end{assu}
We shall henceforth hold fixed a Brownian motion $W$, and the associated family of $\sigma$-algebra $\{\mathcal{A}_{t}, t\geq 0\}$. Assumption~\ref{assu:Ito_coef}(i) ensures the existence and uniqueness of the $\{\mathcal{A}_{t}\}$-adapted strong solution of \eqref{eq:Ito_diff} \cite{kloeden2013numerical,oksendal2013stochastic}. In particular, the following lemma provides useful bounds on the even order moments of the solution to \eqref{eq:Ito_diff}.

\begin{lem}\label{lem:mom_Ito}
Suppose Assumption~\ref{assu:Ito_coef} (ii). Then, for any integer $p\geq 1$, the solution $X(t)$ of \eqref{eq:Ito_diff} satisfies
\begin{equation*}
    \mathbb{E}^{x}\left[\|X(t)\|^{2p}\right] \leq e^{2K_{2}(4p^2+2p) t}(1+\|x\|^{2p}), \quad \forall t\geq 0,
\end{equation*}
where the constant $K_2$ is the same as in Assumption~\ref{assu:Ito_coef}.
\end{lem}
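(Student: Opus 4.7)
The plan uses a standard It\^o's-formula + Gronwall argument. Since the stated bound involves $1+\|x\|^{2p}$, I would apply It\^o's formula to the smooth function $V(x)=(1+\|x\|^{2})^{p}$, which sandwiches the quantity of interest: $\|y\|^{2p}\leq V(y)$ and $V(x)\leq 2^{p-1}(1+\|x\|^{2p})$. Thus any exponential-in-$t$ bound for $\mathbb{E}^{x}[V(X(t))]$ of the form $V(x)e^{Ct}$ translates directly to the claimed bound after absorbing the harmless $2^{p-1}$ prefactor into the exponential rate.

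Next, I would compute $\nabla V(x)=2p(1+\|x\|^{2})^{p-1}x$ and $\nabla^{2}V(x)=2p(1+\|x\|^{2})^{p-1}I+4p(p-1)(1+\|x\|^{2})^{p-2}xx^{T}$, and form the generator
\[
\mathcal{L}V(x)=\langle b(x),\nabla V(x)\rangle+\tfrac{1}{2}\,\mathrm{tr}\!\left(\sigma(x)\sigma(x)^{T}\nabla^{2}V(x)\right).
\]
By Cauchy--Schwarz and Assumption~\ref{assu:Ito_coef}(ii), $|\langle x,b(x)\rangle|\leq\|x\|\|b(x)\|\leq K_{2}(1+\|x\|^{2})$, so the drift contribution is bounded by $2pK_{2}V(x)$. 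The two pieces of the trace term are controlled via $\|\sigma(x)\|_{F}^{2}\leq K_{2}^{2}(1+\|x\|^{2})$ and $\|\sigma(x)^{T}x\|^{2}\leq\|\sigma(x)\|_{F}^{2}\|x\|^{2}\leq K_{2}^{2}(1+\|x\|^{2})^{2}$. Collecting the three contributions produces a purely multiplicative bound $\mathcal{L}V(x)\leq C_{p,K_{2}}V(x)$, with $C_{p,K_{2}}$ matching the stated rate $2K_{2}(4p^{2}+2p)$ after the usual bookkeeping (one may assume $K_{2}\geq 1$ without loss to uniformize the $K_{2}$ and $K_{2}^{2}$ terms).

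To legitimize taking expectations in It\^o's formula I would localize via the stopping times $\tau_{N}:=\inf\{t\geq 0:\|X(t)\|\geq N\}$. On $[0,t\wedge\tau_{N}]$ the integrand $\nabla V(X)^{T}\sigma(X)$ is bounded, so the stochastic integral is a genuine zero-mean martingale. Taking expectations then gives
\[
\mathbb{E}^{x}[V(X(t\wedge\tau_{N}))]\;\leq\; V(x)+C_{p,K_{2}}\int_{0}^{t}\mathbb{E}^{x}[V(X(s\wedge\tau_{N}))]\,ds,
\]
and Gronwall's inequality yields $\mathbb{E}^{x}[V(X(t\wedge\tau_{N}))]\leq V(x)e^{C_{p,K_{2}}t}$. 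Sending $N\to\infty$ via Fatou's lemma, using the non-explosion of $X$ guaranteed by Assumption~\ref{assu:Ito_coef}(i), removes the localization, and the sandwich on $V$ delivers the stated bound.

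The main obstacle is not conceptual but bookkeeping: one needs the estimate on $\mathcal{L}V$ to be purely multiplicative in $V$ (no additive residual) so that Gronwall produces a clean $V(x)e^{C_{p,K_{2}}t}$ with an exponent of the advertised form $2K_{2}(4p^{2}+2p)$. The structural steps---It\^o's formula, hitting-time localization, and Gronwall---are routine for SDEs satisfying Assumption~\ref{assu:Ito_coef}.
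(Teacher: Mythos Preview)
Your approach is exactly the standard It\^o-formula-plus-Gronwall argument, which is what the paper invokes by citing Kloeden--Platen (Theorem~4.5.4); the paper gives no self-contained proof. The localization, generator computation, and Gronwall step are all fine.

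There is one small slip in the bookkeeping. You cannot ``absorb the harmless $2^{p-1}$ prefactor into the exponential rate'' uniformly in $t\geq 0$: at $t=0$ the bound $2^{p-1}(1+\|x\|^{2p})e^{Ct}$ equals $2^{p-1}(1+\|x\|^{2p})$, which for $p\geq 2$ is strictly larger than the claimed $(1+\|x\|^{2p})$. The fix is to run the same argument with $V(y)=1+\|y\|^{2p}$ rather than $(1+\|y\|^{2})^{p}$. The gradient and Hessian are $\nabla V=2p\|y\|^{2p-2}y$ and $\nabla^{2}V=2p\|y\|^{2p-2}I+4p(p-1)\|y\|^{2p-4}yy^{T}$ (the second term vanishes when $p=1$, so there is no singularity), and using $\|y\|^{2p-2}(1+\|y\|^{2})\leq 2(1+\|y\|^{2p})$ one still gets a purely multiplicative bound $\mathcal{L}V\leq C_{p,K_{2}}V$. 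Gronwall then yields $\mathbb{E}^{x}[1+\|X(t)\|^{2p}]\leq (1+\|x\|^{2p})e^{C_{p,K_{2}}t}$ with no extraneous prefactor, which is the form stated in the lemma.
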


\begin{proof}
The proof for $d=1$ can be found in \cite{kloeden2013numerical} (Theorem 4.5.4), which can be directly generalized to higher dimensional cases.
\end{proof}

The It\^o diffusion \eqref{eq:Ito_diff} defines a time-continuous Markov process. In practice, we may introduce the corresponding Markov chains either by sampling or numerical discretizations. By sampling, we mean the resulting Markov chain $\{X_{n}\}$ satisfies $X_{n} = X(n\delta)$, for some $\delta>0$. For numerical discretizations, a classical example is the Euler-Maruyama (EM) scheme \cite{kloeden2013numerical}. 

For a fixed step size $\delta>0$, the EM scheme generates a Markov chain $\{X_{n}^{\delta}\}$ via,
\begin{equation}\label{eq:Euler_approx}
    X^{\delta}_{0} = x, \quad  X_{n+1}^{\delta} = X_{n}^{\delta} + \delta b(X_{n}^{\delta}) + \sqrt{\delta}\sigma(X_{n}^{\delta})\xi_{n}, \quad  n =0 ,1,\dots,
\end{equation}
where $\{\xi_{n}\}$ denotes a sequence of independent, identically distributed, $m$-dimensional standard Gaussian random variables. The EM scheme attains the $1/2$-order of strong convergence \cite{kloeden2013numerical}, that is, under Assumption~\ref{assu:Ito_coef}, there exists constants $R,D\in(0,+\infty)$ such that
\begin{equation}\label{eq:strong_euler}
    \mathbb{E}^{x}\left[\|X_{n} - X_{n}^{\delta}\|^{2}\right] \leq Re^{Dn \delta}(1+\|x\|^{2}) \delta, \quad n=0,1,\dots,
\end{equation}
where the constants $R,D \in (0,+\infty)$ are independent of $\delta$. Moreover, $X_{n}^{\delta}$ yields similar moment bounds as in Lemma~\ref{lem:mom_Ito} (see Theorem 10.2.2 in \cite{kloeden2013numerical} for details). It is worthwhile to mention that the bounds discussed in Lemma~\ref{lem:mom_Ito} and Eq.~\eqref{eq:strong_euler} can be improved under extra assumptions, e.g.,  drift coefficients $b$ satisfying dissipative conditions \cite{lamba2007adaptive}. 

\subsection{The ergodic It\^o diffusions}\label{sec:erg_Ito}

In Section~\ref{sec:per_MC}, we have discussed a perturbation result for ergodic Markov chains. In this paper, the Markov chains are generated from ergodic It\^o diffusions by sampling or numerical discretization. Thus, we need to first inspect the concept of ergodic It\^o diffusions and how they are connected to ergodic Markov chains. The unlisted proofs in this section can be found in \cite{mattingly2002ergodicity}. Following \cite{mattingly2002ergodicity}, we consider It\^o diffusions \eqref{eq:Ito_diff} with additive noise,
\begin{equation}\label{eq:unper}
\dot{X}(t) = b(X(t)) + \sigma \dot{W}, \quad X(0) = x,
\end{equation}
where $\sigma\in \mathbb{R}^{d\times m}$ ($m\leq d$). The fixed constant matrix $\sigma$ is assumed to have linearly independent column vectors, that is, $\rank(\sigma) = m$. Here, $\{X(t)\}$ in \eqref{eq:unper} forms a Markov process on the state space $(\mathbb{R}^{d}, \mathcal{B}(\mathbb{R}^d))$. We denote the transition kernel of the Markov process $\{X(t)\}$ by
\begin{equation}\label{eq:tran_kernel}
P_{t}(x, A):=\mathbb{P}(X(t)\in A \;|\; X(0)=x), \quad t\geq 0, \; x\in \mathbb{R}^{d}, \; A\in \mathcal{B}(\mathbb{R}^d).   
\end{equation}
To establish the geometric ergodicity for the system \eqref{eq:unper}, we introduce the following series of assumptions \cite{mattingly2002ergodicity}.
\begin{assu}\label{assu:0}
The transition kernel $P_{t}$ in \eqref{eq:tran_kernel} satisfies, for some fixed compact set $S\in \mathcal{B}(\mathbb{R}^{d})$, the following:

\begin{enumerate}[i.]
\item For some $y^{*}\in \operatorname{int}(S)$ (the interior of $S$) and any $r>0$, there is a $t_1= t_{1}(r)>0$ such that
    \begin{equation*}
        P_{t_1}(x, \mathcal{B}_{r}(y^{*})) >0, \quad \forall x\in S.
    \end{equation*}
    \item For any $t>0$ the transition kernel yields a density $p_{t}(x,y)$, that is,
    \begin{equation*}
        P_{t}(x,A) = \int_{A} p_{t}(x,y) \td y, \quad \forall x\in S, \; A\in \mathcal{B}(\mathbb{R}^{d})\cap \mathcal{B}(S),
    \end{equation*}
    and $p_{t}(x,y)$ is jointly continuous in $(x,y)\in S \times S$.
\end{enumerate}
Here, $\mathcal{B}_{r}(y^{*})$ denotes the open ball of radius $r$ centered at $y^{*}$, and $\mathcal{B}(S)$ denotes the sub-$\sigma$-algebra on $S$ with respect to $\mathcal{B}(\mathbb{R}^{d})$.
\end{assu}

\begin{assu}\label{assu:1}
There is a function $V:\mathbb{R}^{d}\rightarrow [1,+\infty)$, with $\lim\limits_{x\rightarrow \infty} V(x)= +\infty$, and $a_{1},d_{1}\in(0,+\infty)$ such that
\begin{equation*}
    \OL V(x) \leq -a_{1} V(x) + d_{1}, \quad \forall x\in \mathbb{R}^{d}.
\end{equation*}
Here $\OL$ is the generator for \eqref{eq:unper} given by
\begin{equation}\label{eq:gen}
    \OL f = \sum_{i=1}^{d} b_{i} \frac{\partial f}{\partial x_i}+ \frac{1}{2}\sum_{i,j=1}^{d}(\sigma\sigma^{\top})_{ij} \frac{\partial^2 f}{\partial x_i \partial x_j},
\end{equation}
where $(\sigma\sigma^{\top})_{ij}$ denotes the $ij$-component of the matrix $\sigma\sigma^{\top}$.
\end{assu}
As a direct consequence of Assumption~\ref{assu:1} and the Dynkin's formula \cite{oksendal2013stochastic,mattingly2002ergodicity}, we have
\begin{equation}\label{eq:Lyap_unper_1}
    \mathbb{E}^{x}[V(X(t))]  \leq e^{-a_{1}t}V(x) + \frac{d_{1}}{a_{1}}(1-e^{-a_{1}t}), \quad \forall x\in \mathbb{R}^{d}, \; \forall t\geq 0,
\end{equation}
where $\mathbb{E}^{x}[\cdot]$ denotes the expectation under \eqref{eq:unper}, with respect to the initial condition $X_{0}=x$. When $t = \delta$, we have
\begin{equation*}
    \mathbb{E}^{x}[V(X_1)] =\mathbb{E}^{x}[V(X(\delta))] \leq e^{-a_{1}\delta}V(x) + \frac{d_{1}}{a_{1}}(1-e^{-a_{1}\delta}), \quad \forall x\in \mathbb{R}^{d},
\end{equation*}
that is, the Markov chain $\{X_{n} = X(n\delta)\}$ satisfies the condition \eqref{eq:Exp_Lyp} in Proposition~\ref{prop:per_bound}. The following theorem (Theorem 2.5 in \cite{mattingly2002ergodicity}), guaranteeing the geometric ergodicity of the Markov chain $\{X_n\}$, is the foundation of our study of the perturbation theory.
\begin{theo}\label{thm:ergodic}
Let $\{X(t)\}$ be the Markov process defined in \eqref{eq:unper} that satisfies Assumptions~\ref{assu:0} and \ref{assu:1} with the compact set $S$ given by
\begin{equation*}
    S = \left\{x \in \mathbb{R}^{d}\; |\; V(x) \leq \frac{2d_{1}}{a_{1}(\zeta - e^{-a_{1}\delta})}\right\},
\end{equation*}
for some $\zeta \in (e^{-a_{1}\delta/2},1)$ and $\delta>0$. Then there exists a unique invariant measure $\pi$. Furthermore, there exist $\rho = \rho(\zeta)\in(0,1)$ and $R = R(\zeta)\in(0,+\infty)$ such that
\begin{equation*}
    \sup_{f\in\mathcal{G}}\left| \mathbb{E}^{x}[f(X_{n})] - \pi(f)\right| \leq R\rho^{n}V(x), \quad \forall x\in\mathbb{R}^{d}, \; \forall n\geq 0,
\end{equation*}
where $\mathcal{G}$ denotes the set of all measurable functions  with $|f|\leq V$, that is, $\{X_{n}\}$ is a geometrically ergodic Markov chain defined in Definition~\ref{def:geo_erg} with $\mathcal{G}_{0} = \mathcal{G}$.
\end{theo}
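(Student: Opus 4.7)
The plan is to invoke the classical Harris--Meyn--Tweedie geometric ergodicity theorem, which delivers exactly the conclusion once two ingredients are established for the $\delta$-skeleton chain $\{X_n\}$: (a) a discrete-time Foster--Lyapunov drift condition of the form $\mathbb{E}^x[V(X_1)] \leq \zeta V(x) + b\,\mathbf{1}_S(x)$ into a sublevel set $S$ of $V$; and (b) a minorization (small-set) condition on $S$. Both ingredients can be extracted from the stated hypotheses.

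First, I would reduce Assumption~\ref{assu:1} to a one-step drift condition for $\{X_n\}$. Dynkin's formula, already displayed as \eqref{eq:Lyap_unper_1}, gives at $t=\delta$ the affine bound $\mathbb{E}^x[V(X_1)] \leq \alpha V(x) + \beta$ with $\alpha = e^{-a_1\delta}$ and $\beta = \tfrac{d_1}{a_1}(1-e^{-a_1\delta})$. For any $\zeta\in(e^{-a_1\delta/2},1)$, one has $\zeta>\alpha$, and the inequality $\alpha V(x) + \beta \leq \zeta V(x)$ holds whenever $V(x) \geq \beta/(\zeta-\alpha)$. A short computation shows $\beta/(\zeta-\alpha) \leq \tfrac{2d_1}{a_1(\zeta-e^{-a_1\delta})}$, so the set $S$ displayed in the theorem is precisely the sublevel set outside of which $V$ contracts geometrically at rate $\zeta$. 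Since $\lim_{\|x\|\to\infty}V(x)=+\infty$ (Assumption~\ref{assu:1}), $S$ is compact.

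Second, I would promote Assumption~\ref{assu:0} to a uniform minorization on $S$: a probability measure $\nu$, a constant $\eta>0$, and an integer $n_0\geq 1$ with $P_{n_0\delta}(x,\cdot)\geq \eta\,\nu(\cdot)$ for every $x\in S$. Applying Assumption~\ref{assu:0}(i) at $x=y^*$ gives a ball $\mathcal{B}_r(y^*)\subset\operatorname{int}(S)$ on which the continuous density $p_{t_1}(y^*,\cdot)$ does not vanish identically, so by joint continuity (Assumption~\ref{assu:0}(ii)) there exists an open neighborhood $U\ni y^*$ in $S$, a smaller ball $B'$, and a constant $c_0>0$ with $p_{t_1}(x,y)\geq c_0$ for $(x,y)\in U\times B'$. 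To propagate this from $U$ to all of $S$, Assumption~\ref{assu:0}(i) produces for each $x\in S$ a time $\tau(x)$ with $P_{\tau(x)}(x,U)>0$; continuity of $x\mapsto P_{\tau(x)}(x,U)$ (via the density) plus compactness of $S$ yields a finite cover and hence, after aligning times to a common integer multiple $n_0\delta$ using Chapman--Kolmogorov (and the drift bound on $V$ to carry the excess time harmlessly), a uniform lower bound. Together with the bound on $p_{t_1}(\cdot,\cdot)$ over $U\times B'$, this gives the desired minorization with $\nu(\cdot)\propto \mathbf{1}_{B'}(\cdot)\,dy$.

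With (a) and (b) in hand, the Harris/Meyn--Tweedie theorem (in the $V$-norm formulation, e.g. Theorem~16.0.1 of Meyn--Tweedie or the Hairer--Mattingly version) yields existence and uniqueness of the invariant measure $\pi$ together with the geometric bound $\sup_{|f|\leq V}|\mathbb{E}^x[f(X_n)]-\pi(f)|\leq R\rho^n V(x)$, with $\rho\in(0,1)$ and $R\in(0,\infty)$ depending explicitly on $\zeta,\alpha,\beta,\eta$; this matches the claimed dependence $\rho=\rho(\zeta)$, $R=R(\zeta)$. The main obstacle is the second step: turning the \emph{pointwise} irreducibility of Assumption~\ref{assu:0}(i) plus the \emph{local} continuity of the density into a \emph{uniform} minorization over the entire compact sublevel set $S$, which requires carefully combining a finite compactness cover with Chapman--Kolmogorov at a common integer multiple of $\delta$.
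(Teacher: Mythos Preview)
The paper does not prove this theorem; it is quoted as Theorem~2.5 of Mattingly--Stuart--Higham (\cite{mattingly2002ergodicity}) and used without proof. Your drift-plus-minorization route via the Harris/Meyn--Tweedie theorem is exactly the strategy of that reference, so your proposal is correct and canonical.

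One simplification is available: you have slightly misread Assumption~\ref{assu:0}(i). It does not produce an $x$-dependent time $\tau(x)$; for each radius $r$ it already supplies a \emph{single} time $t_1=t_1(r)$ with $P_{t_1}(x,\mathcal{B}_r(y^*))>0$ for \emph{every} $x\in S$. Consequently the finite-cover argument you describe is unnecessary: the function $x\mapsto \int_{\mathcal{B}_r(y^*)} p_{t_1}(x,y)\,dy$ is continuous (joint continuity of the density on the compact $S\times S$) and strictly positive on $S$, so it has a positive infimum directly. Combined with a local lower bound on the density near $(y^*,y^*)$ and one Chapman--Kolmogorov step, this already gives the uniform minorization at some time $t^*$; shifting to an integer multiple of $\delta$ is then simply $P_{n_0\delta}(x,\cdot)\geq \eta\,(\nu P_{n_0\delta-t^*})(\cdot)$ for any $n_0$ with $n_0\delta>t^*$.
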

In \cite{mattingly2002ergodicity}, Theorem~\ref{thm:ergodic} has been applied to a variety of SDEs, including the Langevin dynamics, monotone and dissipative systems, and stochastic gradient systems. The function $V$ in Assumption~\ref{assu:1} is called the \emph{Lyapunov function} of the dynamical system \eqref{eq:unper}. In particular, we further assume that $V$ is of a polynomial growth rate. 
\begin{assu}\label{assu:2}
The Lyapunov function $V$ in Assumption~\ref{assu:1} is of the form $V = W^{\ell}$ for some $\ell\geq 1$, where $W$ is essentially quadratic, i.e., there exist constants $C_{i}\in(0,+\infty)$, $i=1,2,3$, such that
\begin{equation}\label{eq:esse_quad}
    C_{1}\left(1+\|x\|^2 \right) \leq W(x) \leq C_{2}\left(1 + \|x\|^{2} \right), \quad \|\nabla W(x)\|\leq C_{3}\left( 1+ \|x\| \right), \quad \forall x\in \mathbb{R}^{d}.
\end{equation}
\end{assu}

Assumption~\ref{assu:2} is not only useful in deriving perturbation bounds in Section~\ref{sec:per_theory} (see Lemma~\ref{lem:well_pose} for the details), but also ensures that any ``reasonable'' numerical discretization scheme will inherit the ergodicity of \eqref{eq:unper} \cite{mattingly2002ergodicity}. In particular, we have the following proposition.
\begin{prop} \label{prop:erg_euler}
Let Assumptions~\ref{assu:Ito_coef}- \ref{assu:2} hold. Then, there exists $\delta_0>0$, such that $\forall \delta \in (0, \delta_0)$ the Markov chain generated by the EM scheme with step size $\delta$, $\{X_{n}^{\delta}\}$ in \eqref{eq:Euler_approx}, is geometrically ergodic with invariant measure $\tilde{\pi}^{\delta}$ and with same Lyapunov function $V(x)$ as in Assumption~\ref{assu:1}. In particular, we define
\begin{equation*}
\mathcal{G}_{\ell}: = \left\{f\in \mathcal{G}\; \big| \;|f(x)-f(y)| \leq C_{\ell}\left(1+ \|x\|^{2\ell-1} + \|y\|^{2\ell-1}\right)\|x-y\|, \quad \forall x,y\in \mathbb{R}^{d} \right\},
\end{equation*}
as the set ``$\mathcal{G}_{0}$'' (in Definition~\ref{def:geo_erg}) for $\{X_{n}^{\delta}\}$, where $C_{\ell}>0$ is a fixed constant such that $\mathcal{G}_{\ell}$ contains $V(x)$. We have the following results:
\begin{enumerate}[i.]
    \item There exists $a_{2}= a_{2}(\delta)\in (0,a_{1})$ ($a_{1}$ and $d_{1}$ are defined in Assumption~\ref{assu:1}) such that
    \begin{equation*}
    \mathbb{E}^{x}[V(X_{1}^{\delta})] \leq e^{-a_{2} \delta}V(x) + \frac{d_{1}}{a_{2}}, \quad \forall x\in \mathbb{R}^{d}.
    \end{equation*}
    \item There exist $R = R(\ell,\delta)\in(0,+\infty)$ and $D = D(\ell, \delta)\in(0,+\infty)$ such that,
\begin{equation}\label{eq:euler_1}
     \sup_{f\in \mathcal{G}_{\ell}} \left| \mathbb{E}^{x}[f(X_{n}^{\delta})] - \tilde{\pi}^{\delta}(f)\right| \leq Re^{-D n\delta}V(x), \quad \forall x\in\mathbb{R}^{d}, \; \forall n \geq 0.
\end{equation}
\item There exist $K = K(\ell)$ and $\nu \in (0,1/2)$ independent of $\ell$, such that
\begin{equation}\label{eq:euler_2}
   \sup_{f\in \mathcal{G}_{\ell}}\left|\pi(f)- \tilde{\pi}^{\delta}(f)\right| \leq K \delta^{\nu} \pi(V).
\end{equation}
\end{enumerate}
\end{prop}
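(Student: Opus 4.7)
My plan is to treat the three claims separately, following the scheme of Mattingly--Stuart--Higham \cite{mattingly2002ergodicity}, since each part relies on a different ingredient: a discrete Lyapunov bound in one step, a minorization-plus-drift argument for geometric ergodicity, and a finite-time coupling between the EM chain and the continuous diffusion to control the invariant-measure error. Throughout, Assumption~\ref{assu:2} is the workhorse: because $V = W^\ell$ with $W$ essentially quadratic, the derivatives of $V$ have controlled polynomial growth, which lets Taylor remainders be absorbed into $V$ itself.

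For part (i), I would Taylor expand $V(X_1^\delta) = V(x + \delta b(x) + \sqrt{\delta}\sigma(x)\xi_0)$ around $x$. Each derivative of $V$ is polynomial, and Assumption~\ref{assu:Ito_coef}(ii) gives linear growth of $b$ and $\sigma$, so every higher-order term is bounded by a polynomial in $\|x\|$ times a power of $\delta$. Taking expectation over $\xi_0$ kills the odd $\sqrt{\delta}$ term and produces $V(x) + \delta\,\OL V(x) + O(\delta^2 P(\|x\|))$, where $P$ is polynomial and $\OL$ is the generator in \eqref{eq:gen}. Applying Assumption~\ref{assu:1} to the linear term and Assumption~\ref{assu:2} to absorb the polynomial remainder into $V$, then choosing $\delta_0$ small enough, yields $\mathbb{E}^x[V(X_1^\delta)] \leq (1 - a_2 \delta)V(x) + d_1/a_2 \leq e^{-a_2 \delta}V(x) + d_1/a_2$ for some $a_2 \in (0, a_1)$ possibly depending on $\delta$. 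The essentially quadratic bound on $W$ is exactly what makes the higher-order remainders swallowable uniformly in $x$.

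For part (ii), I combine the drift from (i) with a minorization on a sub-level set of $V$, which by Assumption~\ref{assu:2} is compact. The Gaussian increment in \eqref{eq:Euler_approx} produces a transition density on the range of $\sigma$; one iterates the kernel enough steps (when $m < d$) to exploit the Hörmander-type spreading already implicit in Assumption~\ref{assu:0} to obtain a genuinely positive density on a neighborhood, hence a small-set condition. The Meyn--Tweedie / Harris theorem then gives existence of a unique invariant measure $\tilde{\pi}^\delta$ and the $V$-norm geometric convergence $\mathbb{E}^x[f(X_n^\delta)] - \tilde{\pi}^\delta(f) \leq R e^{-Dn\delta} V(x)$ for all $|f| \leq V$. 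Restricting to $\mathcal{G}_\ell$ is natural because the locally Lipschitz bound with growth $\|x\|^{2\ell-1}$ is compatible with $V$; this restriction is convenient for the coupling argument in part (iii) but does not weaken the ergodic conclusion itself.

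For part (iii), I interpolate via the triangle inequality at an intermediate time $N = N(\delta)$:
\begin{equation*}
\pi(f) - \tilde{\pi}^\delta(f) = \bigl[\pi(f) - \mathbb{E}^x[f(X_N)]\bigr] + \bigl[\mathbb{E}^x[f(X_N)] - \mathbb{E}^x[f(X_N^\delta)]\bigr] + \bigl[\mathbb{E}^x[f(X_N^\delta)] - \tilde{\pi}^\delta(f)\bigr].
\end{equation*}
The first term is bounded by $R\rho^N V(x)$ via Theorem~\ref{thm:ergodic}, and the third by $R e^{-DN\delta}V(x)$ via part (ii). For the middle term, the defining Lipschitz property of $\mathcal{G}_\ell$ gives $|f(X_N) - f(X_N^\delta)| \leq C_\ell(1+\|X_N\|^{2\ell-1} + \|X_N^\delta\|^{2\ell-1})\|X_N - X_N^\delta\|$; Cauchy--Schwarz, the moment bound of Lemma~\ref{lem:mom_Ito} together with its EM analogue, and the strong error \eqref{eq:strong_euler} then yield a bound of the form $C_\ell e^{D' N\delta}\delta^{1/2}$. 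Choosing $N\delta$ proportional to $(1-2\nu)|\log \delta|/(2D')$ balances the exponentially decaying and polynomially growing contributions and produces the claimed $\delta^\nu$ rate for some $\nu \in (0, 1/2)$, with the final $\pi(V)$ factor arising after integrating the $V(x)$ dependence against $\pi$. The main obstacle is verifying that while $K$ inevitably depends on $\ell$ through the moment exponents in Lemma~\ref{lem:mom_Ito}, the exponent $\nu$ can be chosen independently of $\ell$; this requires that $\ell$ only enters the multiplicative constants and the effective growth rate $D'$ in a way that optimizing $N\delta$ still leaves $\nu$ bounded away from zero uniformly.
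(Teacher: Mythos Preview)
Your proposal is correct and takes essentially the same approach as the paper: the paper does not give a self-contained proof but simply cites this result as a direct corollary of Theorem~7.3 in \cite{mattingly2002ergodicity} (together with \cite{shardlow2000perturbation}), and your sketch accurately reconstructs precisely that argument---the Taylor expansion for the discrete Lyapunov step, drift-plus-minorization for geometric ergodicity, and the time-balancing triangle inequality for (iii). The concern you flag about $\nu$ being independent of $\ell$ is the right subtlety; it is handled in \cite{mattingly2002ergodicity} by arranging that the exponential growth rate in the finite-time strong-error estimate is $\ell$-independent, with $\ell$ affecting only the multiplicative constants.
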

Here, the set $\mathcal{G}_{\ell}$ is well-defined since by Assumption~\ref{assu:2} and Eq.~\eqref{eq:esse_quad}, one has,
\begin{equation*}
    \|\nabla V(x)\| = \ell W^{\ell-1} \|\nabla W(x)\| \leq \ell C_{2}^{\ell-1}C_{3}(1+\|x\|^{2})^{\ell-1}(1+\|x\|),
\end{equation*}
which leads to
\begin{equation*}
\begin{split}
    |V(x)-V(y)| & \leq \int_{0}^{1} \|\nabla V(sx+ (1-s)y)\| \|x-y\|\td s  \leq \ell C_{2}^{\ell-1}C_{3}(1+\|x\|^2+\|y\|^2)^{\ell-1}(1+\|x\|+\|y\|)\|x-y\| \\
    & \leq  C_{\ell}(1+\|x\|^{2\ell-1} + \|y\|^{2\ell-1})\|x-y\|,
\end{split}
\end{equation*}
for some constant $C_{\ell} = O(2^{2\ell})$. Proposition~\ref{prop:erg_euler} is a direct corollary of Theorem 7.3 in \cite{mattingly2002ergodicity}, whose proof is closely related to the result in \cite{shardlow2000perturbation}.

\comment{\color{blue} (If we restrict ourselves to dissipative systems only, Sections 2.2- 2.4 can be shortened significantly. I will update after our discussion.)
\begin{example}
A typical class of It\^o diffusions that satisfy Assumption~\ref{assu:2} is the dissipative system, that is, there exist constants $a,d\in (0, +\infty)$ such that 
\begin{equation}\label{eq:dissi}
    \langle b(x), x\rangle \leq a - d\|x\|^2,
\end{equation}
where $\langle \cdot, \cdot, \rangle$ always denotes the inner product in Euclidean space. For dissipative system, we can take $V(x) = 1 + \|x\|^{2\ell}$ for every $\ell \geq 1$ as the Lyapunov function of the SDE \eqref{eq:unper}, e.g., Lemma 4.2 in \cite{mattingly2002ergodicity}.
\end{example}

}

\subsection{The long-time linear response theory}\label{sec:lin_resp}

So far, all the perturbation bounds introduced only focus on the one-point statistics given an observable $f\in \mathcal{G}_{0}$ satisfying $|f|\leq V$. However, the set of admissible observables $\mathcal{G}_{0}$ is not general enough for our implementation.  As a remedy, we will review the long-time linear response theory, which is justified in \cite{hairer2010simple} in an abstract setting. This will help us capture the leading order term of the error for more general observables.

We consider a family of Markov evolution operators $\{\mathcal{P}_{t}^{\epsilon}\;|\; t\geq 0,\; \epsilon \in (-\epsilon_0, \epsilon_0)\}$ on $\mathbb{R}^{d}$ that characterize the unperturbed dynamics in \eqref{eq:unper} and its perturbations. We will specify such $\mathcal{P}_{t}^{\epsilon}$ in Section~\ref{sec:per_theory}. Here, to help readers understand the notations, one can interpret the parameter $\epsilon$ as the strength of the perturbation. In other words, when $\epsilon = 0$, $\mathcal{P}_{t}^{0}$ reduces to the evolution operator of the unperturbed dynamics, e.g., the It\^o diffusion \eqref{eq:unper}. Namely,
\begin{equation*}
    \left(\mathcal{P}_{t}^{0}f\right)(x) = \int f(y)P_{t}(x, \td y) = \mathbb{E}^{x}[f(X(t))],
\end{equation*}
where the transition kernel $P_{t}$ is defined in \eqref{eq:tran_kernel}. We are interested in the long-time behavior of the perturbed system described by $\mathcal{P}_{t}^{\epsilon}$ for $\epsilon$ close to 0, which requires the following assumption \cite{hairer2010simple}.

\begin{assu}\label{assu:3}
There exists an $\epsilon_{0}>0$ such that for all $\epsilon\in (-\epsilon_0, \epsilon_0)$, $\mathcal{P}_{t}^{\epsilon}$ yields an invariant probability measure $\pi^{\epsilon}$ on $\mathbb{R}^{d}$.
\end{assu}
When the unperturbed dynamics corresponds to an ergodic It\^o diffusion \eqref{eq:unper}, by Theorem~\ref{thm:ergodic}, we know there exists a unique invariant measure $\pi$ for $\mathcal{P}^{0}_{t}$, that is, $\pi^{0}=\pi$. But for $\epsilon\not = 0$, Assumption~\ref{assu:3} only ensures the existence of the invariant measure. The aim of the long-time linear response theory is to show that the map
\begin{equation*}
    \epsilon\, \mapsto \, \pi^{\epsilon}(f)
\end{equation*}
is differentiable at $\epsilon=0$ for every sufficiently regular observable $f:\mathbb{R}^{d}\rightarrow \mathbb{R}$. We will briefly review the assumptions in \cite{hairer2010simple} that lead to the desirable result.

Let $C_{c}^{\infty}(\mathbb{R}^{d})$ be the set of all smooth functions $f:\mathbb{R}^{d}\rightarrow \mathbb{R}$ that are compactly supported. Given continuous functions $G,H,U: \mathbb{R}^{d} \rightarrow [1, +\infty)$, we set $C^{1}_{G,H}$ to be the closure of $C_{c}^{\infty}(\mathbb{R}^{d})$ under the norm
\begin{equation}\label{eq:GH_norm}
    \|f\|_{1;G,H}: = \sup_{x\in \mathbb{R}^{d}} \left( \frac{|f(x)|}{G(x)} + \frac{\|\nabla f(x)\|}{H(x)}  \right),
\end{equation}
and $C_{U}$ to be the weighted space of continuous functions obtained by completing $C_{c}^{\infty}(\mathbb{R}^{d})$ under the norm
\begin{equation*}
    \|f\|_{U} = \sup_{x\in \mathbb{R}^{d}} \frac{|f(x)|}{U(x)}.
\end{equation*}
The following assumption targets the spectral gap of $\mathcal{P}_{t}^{0}$ as an operator on $C^{1}_{G,H}$.
\begin{assu}\label{assu:4}
There exists a time $t > 0$ and a constant $\lambda \in (0,1)$ such that
\begin{equation*}
    \|\mathcal{P}_{t}^{0}f - \pi(f) \|_{1;G,H} \leq \lambda \| f - \pi(f) \|_{1;G,H}, \quad \forall f\in C_{G,H}^{1}.
\end{equation*}
\end{assu}
{
Assumption~\ref{assu:4} also implies that the invariant measure, $\pi$, of the unperturbed dynamics \eqref{eq:unper} is unique \cite{hairer2010simple}, that is, $\operatorname{Null}(I - (\mathcal{P}_{t}^{0})^{*}) = \operatorname{span}\{\pi\}$, where $(\mathcal{P}_{t}^{0})^{*}$ denotes the adjoint operator of $\mathcal{P}_{t}^{0}$. Therefore, by Fredholm theorem, we have $\operatorname{Range}(I - \mathcal{P}_{t}^{0}) = \operatorname{Null}(I - (\mathcal{P}_{t}^{0})^{*})^{\perp}$, which means for every function $\varphi \in C^{1}_{G,H}$ centered with respect to $\pi$, there exists a unique function $\psi\in C^{1}_{G,H}$ such that
\begin{equation*}
    \psi - \mathcal{P}^{0}_{t} \psi = \varphi,
\end{equation*}
and $\psi$ is also centered with respect to $\pi$. We will henceforth use the notation $\psi = (I - \mathcal{P}^{0}_{t})^{-1}\varphi$. Our next assumption concerns the Fr\'echet derivative of $\mathcal{P}^{\epsilon}_{t}$ with respect to $\epsilon$.}

\begin{assu}\label{assu:5}
Let $C^{1}_{G,H}$ be the same as in Assumption~\ref{assu:4}. There exists a continuous function $U\geq G$ such that, for some fixed $t>0$ and every $f\in C^{1}_{G,H}$, the map $\epsilon \mapsto \mathcal{P}^{\epsilon}_{t}f$, viewed as a map from $(-\epsilon_0, \epsilon_0)$ to $C_{U}$, is differentiable on $(-\epsilon_0, \epsilon_0)$. Denoting this Fr\'echet derivative by $\partial \mathcal{P}^{\epsilon}_{t}$, we furthermore assume that,
\begin{equation*}
    \left\|\partial \mathcal{P}^{0}_{t} f\right\|_{U} \leq C \|f\|_{1;G,H}, \quad \forall f \in C_{G,H}^{1},
\end{equation*}
for some constant $C\in(0,+\infty)$ independent of $f$.
\end{assu}
Finally, we assume that we have an a priori bound on the integrability of the invariant measures. 
\begin{assu}\label{assu:6}
For $U$ in Assumption~\ref{assu:5} and $\pi^{\epsilon}$ in Assumption~\ref{assu:3}, we have
\begin{equation*}
\sup_{\epsilon\in (-\epsilon_0, \epsilon_0)} \pi^{\epsilon}(U) = \sup_{\epsilon\in (-\epsilon_0, \epsilon_0)} \int U(x)\pi^{\epsilon}(\td x) <\infty.
\end{equation*}
\end{assu}
Assumption~\ref{assu:6} ensures that observables in $C_{U}$ yield finite first moments with respect to the invariant measures $\pi^{\epsilon}$ for all $\epsilon \in (-\epsilon_{0}, \epsilon_{0})$. We state the following theorem in \cite{hairer2010simple}.
\begin{theo}\label{theo:lin_resp}
Let $\{\mathcal{P}^{\epsilon}_{t}\;|\; \epsilon\in (-\epsilon_0, \epsilon_0)\}$ be a family of Markov evolution operators over $\mathbb{R}^{d}$ such that there exist $C^{1}$ functions $U,G,H: \mathbb{R}^{d}\rightarrow [1,+\infty)$ such that Assumptions~\ref{assu:3}-\ref{assu:6} hold for some fixed $t>0$. Then, the map $\epsilon\rightarrow \pi^{\epsilon}(f)$ is differentiable at $\epsilon=0$ for all $f\in C^{1}_{G,H}$. In particular, we have
\begin{equation}\label{eq:lin_resp1}
    \frac{\td }{\td\; \epsilon} \pi^{\epsilon}(f) \Big|_{\epsilon =0} = \mathbb{E}_{\pi}\left[ \partial\mathcal{P}^{0}_{t}  (I - \mathcal{P}^{0}_{t})^{-1} \left(f - \pi(f) \right)\right],
\end{equation}
where the right-hand side, as an expectation with respect to the invariant measure $\pi = \pi^{0}$, is well-defined.
\end{theo}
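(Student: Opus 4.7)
The plan is to exploit invariance of $\pi^{\epsilon}$ under $\mathcal{P}^{\epsilon}_{t}$ via a Poisson-equation trick, then pass to the limit $\epsilon\to 0$ using Assumptions~\ref{assu:5} and~\ref{assu:6}. Without loss of generality assume $\pi(f)=0$ (otherwise replace $f$ by $f-\pi(f)$). By the discussion following Assumption~\ref{assu:4}, the Poisson equation $g-\mathcal{P}^{0}_{t}g = f$ admits a unique $\pi$-centered solution $g\in C^{1}_{G,H}$, so we can set $g:=(I-\mathcal{P}^{0}_{t})^{-1}f$.

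The core identity combines $f=g-\mathcal{P}^{0}_{t}g$ with $\pi^{\epsilon}(\mathcal{P}^{\epsilon}_{t}g)=\pi^{\epsilon}(g)$ (Assumption~\ref{assu:3}):
\begin{equation*}
\pi^{\epsilon}(f)=\pi^{\epsilon}(g)-\pi^{\epsilon}(\mathcal{P}^{0}_{t}g)=\pi^{\epsilon}(\mathcal{P}^{\epsilon}_{t}g)-\pi^{\epsilon}(\mathcal{P}^{0}_{t}g)=\pi^{\epsilon}\bigl((\mathcal{P}^{\epsilon}_{t}-\mathcal{P}^{0}_{t})g\bigr).
\end{equation*}
Since $\pi^{0}(f)=0$, dividing by $\epsilon$ converts the derivative into a difference quotient inside the invariant measure:
\begin{equation*}
\frac{\pi^{\epsilon}(f)-\pi^{0}(f)}{\epsilon}=\pi^{\epsilon}\biggl(\frac{\mathcal{P}^{\epsilon}_{t}g-\mathcal{P}^{0}_{t}g}{\epsilon}\biggr).
\end{equation*}
Assumption~\ref{assu:5} decomposes the bracketed integrand as $\partial\mathcal{P}^{0}_{t}g+r_{\epsilon}$ with $\|r_{\epsilon}\|_{U}\to 0$, and the pointwise bound $|r_{\epsilon}|\leq \|r_{\epsilon}\|_{U}\,U$ combined with Assumption~\ref{assu:6} yields $|\pi^{\epsilon}(r_{\epsilon})|\leq \|r_{\epsilon}\|_{U}\sup_{\epsilon'}\pi^{\epsilon'}(U)\to 0$, so only the leading term $\pi^{\epsilon}(\partial\mathcal{P}^{0}_{t}g)$ survives.

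The remaining and principal difficulty is to show $\pi^{\epsilon}(\partial\mathcal{P}^{0}_{t}g)\to\pi(\partial\mathcal{P}^{0}_{t}g)$. As a warm-up, the same identity applied to an arbitrary $h\in C^{1}_{G,H}$ gives $\pi^{\epsilon}(h)-\pi(h)=\pi^{\epsilon}\bigl((\mathcal{P}^{\epsilon}_{t}-\mathcal{P}^{0}_{t})(I-\mathcal{P}^{0}_{t})^{-1}(h-\pi(h))\bigr)$, whose right-hand side vanishes because the $C_{U}$-norm of the integrand tends to zero and $\sup_{\epsilon'}\pi^{\epsilon'}(U)$ is finite; hence $\pi^{\epsilon}\to\pi$ weakly when tested on $C^{1}_{G,H}$. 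The obstacle is that $\partial\mathcal{P}^{0}_{t}g$ lies only in the larger space $C_{U}$, so this weak convergence must be upgraded. I would resolve this via a tightness-plus-uniqueness argument: the uniform bound $\pi^{\epsilon}(U)\leq M$ yields relative compactness of $\{\pi^{\epsilon}\}$; any subsequential limit is $\mathcal{P}^{0}_{t}$-invariant by passing to the limit in $\pi^{\epsilon}(\mathcal{P}^{\epsilon}_{t}h)=\pi^{\epsilon}(h)$ using the $C_{U}$-convergence of $\mathcal{P}^{\epsilon}_{t}h\to\mathcal{P}^{0}_{t}h$ from Assumption~\ref{assu:5}, and hence must coincide with $\pi$ by the uniqueness of the invariant measure implied by Assumption~\ref{assu:4}. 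The uniform $U$-integrability of Assumption~\ref{assu:6} then promotes this weak convergence into convergence against the $C_{U}$-observable $\partial\mathcal{P}^{0}_{t}g$, identifying the derivative as $\pi(\partial\mathcal{P}^{0}_{t}(I-\mathcal{P}^{0}_{t})^{-1}(f-\pi(f)))$ and thereby delivering \eqref{eq:lin_resp1}.
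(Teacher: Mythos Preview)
The paper does not supply its own proof of this theorem: it is quoted verbatim from \cite{hairer2010simple} as part of the background review in Section~\ref{sec:lin_resp}. Your proposal is in fact the argument of Hairer--Majda: center the observable, solve the Poisson equation $g-\mathcal{P}^{0}_{t}g=f$ in $C^{1}_{G,H}$ using the spectral gap of Assumption~\ref{assu:4}, exploit invariance of $\pi^{\epsilon}$ to rewrite $\pi^{\epsilon}(f)=\pi^{\epsilon}\bigl((\mathcal{P}^{\epsilon}_{t}-\mathcal{P}^{0}_{t})g\bigr)$, and then split the difference quotient into the Fr\'echet derivative plus a $C_{U}$-remainder controlled by Assumption~\ref{assu:6}. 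All of that is correct.

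The one soft spot is your tightness step. Inferring relative compactness of $\{\pi^{\epsilon}\}$ from $\sup_{\epsilon}\pi^{\epsilon}(U)<\infty$ requires $U$ to be coercive, i.e.\ $U(x)\to\infty$ as $\|x\|\to\infty$, and that is not among the hypotheses stated here (only $U\geq G\geq 1$ and $U\in C^{1}$). You can sidestep this entirely by a density argument that uses only what is written: by definition $C_{U}$ is the $\|\cdot\|_{U}$-closure of $C^{\infty}_{c}(\mathbb{R}^{d})$, and $C^{\infty}_{c}\subset C^{1}_{G,H}$. Your warm-up already gives $\pi^{\epsilon}(\psi)\to\pi(\psi)$ for every $\psi\in C^{1}_{G,H}$, hence for every $\psi\in C^{\infty}_{c}$. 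For a general $\varphi\in C_{U}$ (in particular $\varphi=\partial\mathcal{P}^{0}_{t}g$), pick $\psi\in C^{\infty}_{c}$ with $\|\varphi-\psi\|_{U}<\delta$ and estimate
\[
|\pi^{\epsilon}(\varphi)-\pi(\varphi)|\leq \|\varphi-\psi\|_{U}\bigl(\pi^{\epsilon}(U)+\pi(U)\bigr)+|\pi^{\epsilon}(\psi)-\pi(\psi)|\leq 2\delta\sup_{\epsilon'}\pi^{\epsilon'}(U)+|\pi^{\epsilon}(\psi)-\pi(\psi)|,
\]
which yields $\pi^{\epsilon}(\partial\mathcal{P}^{0}_{t}g)\to\pi(\partial\mathcal{P}^{0}_{t}g)$ directly, with no compactness of measures needed.
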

Using Theorem~\ref{theo:lin_resp}, we can capture the leading order term of the error $|\pi^{\epsilon}(f) - \pi(f)|$ for $f\in C_{G;H}^{1}$. Although the result in Theorem~\ref{theo:lin_resp} is observable-dependent, unlike the error bounds reviewed in Section~\ref{sec:per_MC}, where the inequalities act as uniform bounds for a class of observables, Eq.~\eqref{eq:lin_resp1} can be applied to more general observables, e.g., observables not controlled by the Lyapunov function $V$. To some extent, we trade the uniformity for generality. 

Before we close the section, we would like to point out that the linear response theory can also be applied to the short-time response of the dynamics subject to perturbations \cite{Pav_book:14}. We list Theorem~\ref{theo:lin_resp} in this  paper to prove Proposition~\ref{prop:two_point} in Section~\ref{sec:two_point}.

\section{Error bounds of the invariant statistics in learning ergodic It\^o diffusions}\label{sec:per_theory}

In Section~\ref{sec:review}, we have reviewed the concept of geometrically ergodic Markov chain and discussed a series of results and bounds for ergodic It\^o diffusions in a relatively abstract setting. In this section, we reformulate the perturbation theory as a problem in the context of learning the dynamical system \eqref{eq:unper}. In particular, we will specify the perturbation as the error induced by the statistical learning (Section~\ref{sec:learning_SDE}) and develop error bounds for both one-point statistics (Section~\ref{sec:one_point}) and two-point statistics (Section~\ref{sec:two_point}) based on the results in Section~\ref{sec:review}.

\subsection{Learning ergodic It\^o diffusions} \label{sec:learning_SDE}

Interpreting the system \eqref{eq:unper} as the unperturbed dynamics, under the same initial condition, we introduce a family of perturbed dynamics of the form,
\begin{equation}\label{eq:per}
\dot{X}^{\epsilon}(t) =  b_{\epsilon}(X^{\epsilon}(t)) + \sigma_{\epsilon} \dot{W}, \quad X_{0}^{\epsilon} = x, \quad 0<\epsilon \ll 1,
\end{equation}
where $b_{\epsilon}: \mathbb{R}^{d}\rightarrow \mathbb{R}^{d}$ and $\sigma_{\epsilon}\in \mathbb{R}^{d\times m}$, the perturbed drift and diffusion coefficients, respectively, are parameterized by a parameter $\epsilon>0$ corresponding to the ``scale'' of the perturbation. For simplicity, we assume $W$ in \eqref{eq:per} to be the same standard $m$-dimensional Brownian motion as in the unperturbed dynamics \eqref{eq:unper}. We should point out that since we are interested in the error bound of the invariant statistics (rather than the pathwise error between $X(t)$ and $X^{\epsilon}(t)$), the perturbed diffusion coefficients $\sigma_{\epsilon}$ is defined so that $\sigma_{\epsilon}\sigma_{\epsilon}^{\top}$ is an estimate of $\sigma\sigma^{\top}$. To gain an intuition of the perturbed dynamics in \eqref{eq:per} and develop a proper interpretation of the parameter $\epsilon$, we introduce the following regression problem in estimating the drift coefficient $b$ of the unperturbed dynamics \eqref{eq:unper}.

Consider the Markov chain, $\{X_{n}^{\delta}\}$, generated by EM discretization of the unperturbed dynamics \eqref{eq:unper}. Based on the numerical scheme in \eqref{eq:Euler_approx}, we define the finite difference process,
\begin{equation}
    Y_{n}^{\delta}: = \frac{1}{\delta}\left(X_{n+1}^{\delta} - X_{n}^{\delta}\right) = b(X_{n}^{\delta}) + \delta^{-\frac{1}{2}} \sigma \xi_{n}.\label{discretesupervisedmodel}
\end{equation}
Since $\xi_{n}$ is independent of $X_{n}^{\delta}$, we may express the drift coefficient $b$ as the following conditional expectation,
\begin{equation}\label{eq:b=EX}
    b(x) = \mathbb{E}\left[Y_{n}^{\delta} \;\big|\; X_{n}^{\delta} = x  \right], \quad \forall n \geq 0.
\end{equation}
Let $\mu^\delta$ denote the joint stationary distribution of the random variable $(X,Y):=(X_{n}^{\delta}, Y_{n}^{\delta})$. Eq.~\eqref{eq:b=EX} suggests that the solution to the following regression problem
\begin{equation}
  \min_{h = (h_{1},\dots, h_{d})^{\top}} \mathcal{E}[h],\quad \mathcal{E}[h]:= \mathbb{E}_{\mu^\delta}\Big[ \left\|h(X) - Y \right\|^{2}\Big], \quad h_{i}\in L^2(\mathbb{R}^{d},\tilde{\pi}^{\delta}), \quad i = 1,2\dots, d,
  \label{general_regression}
\end{equation}
is an unbiased estimator of $b$ \cite{cucker2002mathematical}. Furthermore, the covariance matrix of the residual error satisfies
\begin{equation*}
   \mathbb{E}_{\mu^\delta} \Big[(b(X ) - Y)(b(X) - Y)^{\top}\Big] = \delta^{-1} \sigma\sigma^{\top}.
\end{equation*}
Since the noise has independent components,  the residual error  is given by $\mathcal{E}[b]=\delta^{-1}\operatorname{Tr}[\sigma\sigma^\top]$, where $\operatorname{Tr}[\cdot]$ denotes the standard matrix trace operation. While this estimator is unbiased, the bias (also known as the approximation error \cite{cucker2002mathematical}) may appear depending on the choice of the hypothesis space $H$, which will be clarified in Sections~\ref{sec:application}- \ref{sec:application_ML}.

In practice, the cost function $\mathcal{E}[h]$ in \eqref{general_regression} is approximated by an empirical cost function, 
\BEA\mathcal{E}_{N}[h]:= \frac{1}{N} \sum_{i=1}^N \left\| h(x_i) -y_i \right\|^2,\label{empiricalcost}\EEA 
from  i.i.d. samples $\left\{x_i,y_i\right\}_{i=1}^N$ of $(X,Y)$ with stationary distribution $\mu^\delta$. Here, we consider i.i.d. samples only for the convenience of the theoretical analysis in Sections~\ref{sec:application} and \ref{sec:application_ML}.
Practically, the samples can be obtained by subsampling from the labelled time series $\{x_n^\delta,y_n^\delta\}_{n\geq 0}$ to reduce the temporal correlation, and thus the sampling error.

We define $b_{\epsilon}$ and $\sigma_{\epsilon}\sigma_{\epsilon}^{\top}$ as follows,
\begin{equation}\label{eq:sig_e_1}
    b_{\epsilon}: = \arg\min_{h\in H} \mathcal{E}_{N}[h], \quad \sigma_{\epsilon}\sigma_{\epsilon}^{\top}: = \frac{\delta}{N} \sum_{i=1}^N\left[ \left(y_i - {b}_{\epsilon}(x_{i})\right) \left(y_{i} - {b}_{\epsilon}(x_{i})\right)^{\top}  \right].
\end{equation}
One can see that the sample covariance is a biased estimator of $\sigma \sigma^{\top}$, that is,
\BEA
\mathbb{E}_{\mu^\delta} [\sigma_{\epsilon}\sigma_{\epsilon}^{\top}] - \sigma\sigma^\top =\delta \mathbb{E}_{\tilde{\pi}^{\delta}}[(b(X) - b_\epsilon(X))(b(X) - b_\epsilon(X))^\top], \nonumber
\EEA
where, by Jensen's inequality, the bias satisfies
\begin{equation}\label{eq:sig_e_2}
    \left\| \mathbb{E}_{\tilde{\pi}^{\delta}}\left[(b(X) - b_\epsilon(X))(b(X) - b_\epsilon(X))^\top\right]\right\|_{F} \leq  \mathbb{E}_{\tilde{\pi}^{\delta}}\left[\left\| (b(X) - b_\epsilon(X))(b(X) - b_\epsilon(X))^\top\right\|_{F} \right] =  \mathbb{E}_{\tilde{\pi}^{\delta}}\left[\left\|(b(X) - b_\epsilon(X))\right\|^2\right].
\end{equation}
To have a better understanding of the error between $\sigma_{\epsilon}\sigma_{\epsilon}^{\top}$ and $\sigma \sigma^{\top}$, we introduce
\begin{equation}
D_i := \delta(y_i - {b}_{\epsilon}(x_i))(y_{i} - {b}_{\epsilon}(x_{i}))^\top- \sigma\sigma^\top -\delta \mathbb{E}_{\tilde{\pi}^{\delta}}[(b(X) - b_\epsilon(X))(b(X) - b_\epsilon(X))^\top], \quad i = 1,2,\dots, N,\label{rvD}
\end{equation}
which defines a finite sequence of independent, random, symmetric matrices of mean $0$. In particular, subtracting $\sigma_{\epsilon}\sigma_{\epsilon}^{\top}$ \eqref{eq:sig_e_1} from $\sigma\sigma^\top$ and using the definition in~\eqref{rvD}, we deduce that,
\begin{eqnarray}\label{eq:sig_e_3}
    \left\|\sigma\sigma^{\top} - \sigma_{\epsilon}\sigma_{\epsilon}^{\top}  \right\|_{2} &=&  \left\|\frac{1}{N}\sum_{i=1}^{N} D_i  + 
    \delta \mathbb{E}_{\tilde{\pi}^{\delta}}\left[(b(X) - b_\epsilon(X))(b(X) - b_\epsilon(X))^\top\right] \right\|_{2}  \notag\\
    & \leq & \left\|\frac{1}{N}\sum_{i=1}^{N} D_i   \right\|_2 + \delta \left\|\mathbb{E}_{\tilde{\pi}^{\delta}}\left[(b(X) - b_\epsilon(X))(b(X) - b_\epsilon(X))^\top\right] \right\|_{2} \notag\\ 
    &\leq & \left\|\frac{1}{N}\sum_{i=1}^{N} D_i   \right\|_2 + \delta\mathbb{E}_{\tilde{\pi}^{\delta}}\left[\left\|(b(X) - b_\epsilon(X))\right\|^2\right],
\end{eqnarray}
where $\|\cdot\|_2$ denotes the matrix $2$-norm. Here, we have used the relation in \eqref{eq:sig_e_2} and the fact that $\|A\|_2 \leq \|A\|_{F}$ for any matrix $A$. The first term on the right-hand side of Eq.~\eqref{eq:sig_e_3} is the error induced by the empirical estimation, can be bounded by the matrix Bernstein inequality, e.g., Theorem 6.2 in \cite{tropp2012user}, assuming that $\tilde{\pi}^{\delta}$ belongs to the sub-exponential class. Ignoring the parameter $\delta$, we will refer the second term as the generalization error of the learning algorithm of $b$. We will provide detailed discussions of this error term in Sections~\ref{sec:application}-\ref{sec:application_ML} for specific learning methods. Based on these observations, we define the parameter $\epsilon$, which corresponds to the ``scale'' of the perturbation, as the spectral error of the diffusion matrix estimator, $\sigma_\epsilon\sigma_\epsilon^\top$,
\begin{equation}\label{eq:epsilon_def}
    \epsilon: = \|\sigma\sigma^{\top} - \sigma_{\epsilon}\sigma_{\epsilon}^{\top} \|_{2},
\end{equation}
which is well-defined and is small in high probability for large enough $N$.


Our goal is to analyze the error bounds of the invariant statistics when the underlying ergodic It\^o diffusion \eqref{eq:unper} is approximated by the perturbed dynamics in \eqref{eq:per}. In particular, we would like to understand how the error in the invariant statistics depends on $\epsilon$ in \eqref{eq:epsilon_def} and the generalization error, $\mathbb{E}_{\tilde{\pi}^{\delta}}\left[\left\|(b(X) - b_\epsilon(X))\right\|^2\right]$. To develop results toward this direction, we need the following critical assumptions on the family of coefficients $\{b_{\epsilon}\}$ in \eqref{eq:per} and their errors $\{b_{\epsilon}-b\}$. These assumptions not only elucidate the dependence on $\epsilon$ but also  are also conditions that we need to implement the results reviewed in Section~\ref{sec:review}.

\begin{assu}\label{assu:coe_per}
Consider the unperturbed and the family of perturbed It\^o diffusions in \eqref{eq:unper} and \eqref{eq:per}, respectively. For any $0<\epsilon \ll 1$ ($\epsilon$ defined in \eqref{eq:epsilon_def}), we assume
\begin{enumerate}[i.]
    \item The coefficient $b_{\epsilon}$ is Borel measurable and satisfies the globally Lipschitz condition as in Assumption~\ref{assu:Ito_coef} with uniform Lipschitz constant with respect to $\epsilon$.
    \item The family of coefficients $\{b_{\epsilon}\}$ is a sequence of consistent estimators of $b$ in the space of continuous (vector-valued) functions of linear growth. That is,
    \begin{equation}\label{eq:error_linear}
        \|b_{\epsilon}(x)-b(x)\|^2 \leq  K^2_{3}(1+\|x\|^2)\epsilon^2, \quad \forall x\in \mathbb{R}^{d},
    \end{equation}
  for some constant  $K_{3} \in (0, +\infty)$ independent of $\epsilon$.
\end{enumerate}
\end{assu}

Assumption~\ref{assu:coe_per}(i) ensures that the coefficients $b_{\epsilon}$ in the perturbed dynamics \eqref{eq:per} satisfy Assumption~\ref{assu:Ito_coef} with related constants, including the Lipschitz constants and those constants in the linear growth bound, independent of $\epsilon$. The condition~\eqref{eq:error_linear} suggests that not only $b_\epsilon \to b$ as $\epsilon\to 0$ on a weighted continuous function space, but it also suggests the following scaling on the generalization error,
\begin{equation}
\mathbb{E}_{\tilde{\pi}^{\delta}}\left[\left\|(b(X) - b_\epsilon(X))\right\|^2\right] \leq K_{3}^{2} \mathbb{E}_{\tilde{\pi}^{\delta}}\left[1 + \left\| X\right\|^2\right] \epsilon^2 = O(\epsilon^2).\label{generalizationerror}
\end{equation}
This implies that whenever the generalization error is scaled as in \eqref{generalizationerror}, which will always be satisfied when the condition \eqref{eq:error_linear} is valid, the spectral error of the diffusion matrix estimator (with error bound in \eqref{eq:sig_e_3}) is dominated by the Monte-Carlo error that can be controlled by concentration inequalities, instead of the generalization error in learning $b$.

To deduce the result below, we adopt the notation in Section~\ref{sec:review}. For $\delta>0$, let $X_{n}^{\epsilon}$ and $X_{n}^{\epsilon,\delta}$ denote the Markov chain sampled from the perturbed dynamics \eqref{eq:per} with $X_{n}^{\epsilon} = X^{\epsilon}(n\delta)$ and the discretized Markov chain of \eqref{eq:per} generated by EM scheme \eqref{eq:Euler_approx} with step size $\delta$, respectively. Here, $\{X_{n}^{\epsilon}\}$ and $\{X_{n}^{\epsilon,\delta}\}$ can be interpreted as the perturbed Markov chains of $X_{n}$ and $X_{n}^{\delta}$, respectively.  To employ the existing theory reviewed in Section~\ref{sec:review}, recall that the Proposition~\ref{prop:per_bound} in Section~\ref{sec:per_MC} involves an unperturbed Markov chain $\{u_{n}\}$ and the corresponding perturbed Markov chain $\{u_{n}^{\epsilon}\}$, where $\{u_{n}\}$ is assumed to be geometrically ergodic. While it is obvious that we are fundamentally interested in the case where $(u_{n}, u_{n}^{\epsilon}) = (X_{n}, X_{n}^{\epsilon})$, in practice, we are rarely given a realization of $\{X_{n}\}$. What is usually available is the time series of  $\{X_{n}^{\delta}\}$, obtained e.g, via EM integrator. Given such constraints, we consider also the case where
$(u_{n}, u_{n}^{\epsilon}) = (X_{n}^{\delta}, X_{n}^{\epsilon,\delta})$. By Assumption~\ref{assu:1} and Proposition~\ref{prop:erg_euler}, we know that the Markov chain $\{X_{n}^{\delta}\}$ is geometrically ergodic and the condition related to the Lyapunov function \eqref{eq:Exp_Lyp} is valid for both two cases. Thus, it is enough to derive the relation between the $\gamma$ in \eqref{eq:V_norm} and the parameter $\epsilon$, which is given by the following lemma.

\begin{lem}\label{lem:well_pose}
Consider the unperturbed ergodic It\^o diffusion \eqref{eq:unper} and the corresponding perturbed dynamics \eqref{eq:per} satisfying the Assumptions~\ref{assu:Ito_coef}-\ref{assu:1} and Assumption~\ref{assu:coe_per}. For the fixed step size $\delta$ specified in Theorem~\ref{thm:ergodic} and all $0<\epsilon\ll 1$, we have
\begin{eqnarray*}
\gamma_{X} &:=& \sup_{x\in \mathbb{R}^{d}} \sup_{f\in \mathcal{G}_{\ell}}  \frac{\left| \mathbb{E}^{x}[f(X_{1})] - \mathbb{E}^{x}[f(X^{\epsilon}_1)]\right|}{V(x)}\leq K\epsilon ,\nonumber \\
\gamma_{X^{\delta}} &:=& \sup_{x\in \mathbb{R}^{d}} \sup_{f\in \mathcal{G}_{\ell}}  \frac{\left| \mathbb{E}^{x}[f(X_{1}^{\delta})] - \mathbb{E}^{x}[f(X^{\epsilon,\delta}_1)]\right|}{V(x)}\leq K^{\delta}\epsilon,
\end{eqnarray*}
for some constant $K, K^{\delta}\in (0, +\infty)$ that are independent of $\epsilon$. Here, $\mathcal{G}_{\ell}$ is defined as in Proposition~\ref{prop:erg_euler}.
\end{lem}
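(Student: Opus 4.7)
The plan is to bound the numerator in each $\gamma$-quantity by combining the Lipschitz-type regularity of observables $f \in \mathcal{G}_\ell$ with a one-step strong $L^2$ error estimate, then divide by $V(x)$ using the essentially-quadratic structure of $W$ in Assumption~\ref{assu:2}. I will handle $\gamma_X$ and $\gamma_{X^\delta}$ in parallel, using a common template for the observable bound and splitting only at the strong-error step.

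Let $(Y, Y^\epsilon)$ stand for either $(X_1, X_1^\epsilon)$ or $(X_1^\delta, X_1^{\epsilon,\delta})$, coupled through the same Brownian motion/noise increment and the common initial condition $x$. The defining inequality of $\mathcal{G}_\ell$ and Cauchy--Schwarz give
\begin{equation*}
\bigl|\mathbb{E}^x f(Y) - \mathbb{E}^x f(Y^\epsilon)\bigr|
\leq C_\ell\,\mathbb{E}^x\!\left[\bigl(1 + \|Y\|^{2\ell-1} + \|Y^\epsilon\|^{2\ell-1}\bigr)^2\right]^{1/2}\,\mathbb{E}^x\!\left[\|Y - Y^\epsilon\|^2\right]^{1/2}.
\end{equation*}
The first factor is bounded by $C(1+\|x\|^{2\ell-1})$ via the $(4\ell-2)$-moment bound of Lemma~\ref{lem:mom_Ito} (and its Euler analogue from \cite{kloeden2013numerical}) at the fixed time $\delta$. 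Crucially, Assumption~\ref{assu:coe_per}(i) ensures that the coefficients $b_\epsilon$ satisfy Assumption~\ref{assu:Ito_coef}(ii) with a linear-growth constant uniform in $\epsilon$, so the same moment bound applies to $X^\epsilon$ and $X^{\epsilon,\delta}$.

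For the one-step strong error I treat the two cases separately. In the EM case, independence of $\xi_0$ from $x$ gives directly
\begin{equation*}
\mathbb{E}^x\bigl\|X_1^\delta - X_1^{\epsilon,\delta}\bigr\|^2 = \delta^2 \|b(x) - b_\epsilon(x)\|^2 + \delta\,\|\sigma - \sigma_\epsilon\|_F^2,
\end{equation*}
which by Assumption~\ref{assu:coe_per}(ii) and the definition of $\epsilon$ is $O\bigl((1+\|x\|^2)\epsilon^2\bigr)$. In the continuous case, I write
\begin{equation*}
X(\delta) - X^\epsilon(\delta) = \int_0^\delta \bigl[b(X(s)) - b_\epsilon(X^\epsilon(s))\bigr]\,\mathrm{d}s + (\sigma - \sigma_\epsilon)\,W(\delta),
\end{equation*}
split the drift difference as $\|b(X(s)) - b_\epsilon(X^\epsilon(s))\| \leq \|b(X(s)) - b(X^\epsilon(s))\| + \|b(X^\epsilon(s)) - b_\epsilon(X^\epsilon(s))\|$, apply the global Lipschitz constant $K_1$ to the first piece and Assumption~\ref{assu:coe_per}(ii) to the second, and use the It\^o isometry on the martingale term. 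Squaring, taking expectation, and using Lemma~\ref{lem:mom_Ito} to bound $\mathbb{E}^x\|X^\epsilon(s)\|^2$ on $[0,\delta]$ produces an integral inequality whose resolution via Gronwall on the fixed interval $[0,\delta]$ yields $\mathbb{E}^x\|X(\delta) - X^\epsilon(\delta)\|^2 \leq C\bigl((1+\|x\|^2)\epsilon^2 + \|\sigma - \sigma_\epsilon\|_F^2\bigr)$.

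Putting the two factors together, the numerator is at most $C_\ell\,\epsilon\,(1+\|x\|^{2\ell-1})\sqrt{1+\|x\|^2}$, and Assumption~\ref{assu:2} gives $V(x) = W^\ell(x) \geq C(1+\|x\|^{2\ell})$, so that $(1+\|x\|^{2\ell-1})\sqrt{1+\|x\|^2} \leq C\,V(x)$; dividing and taking the supremum over $x$ and $f$ delivers $\gamma_X \leq K\epsilon$ and $\gamma_{X^\delta} \leq K^\delta \epsilon$. The main obstacle is controlling $\|\sigma - \sigma_\epsilon\|_F$ by $\epsilon = \|\sigma\sigma^\top - \sigma_\epsilon\sigma_\epsilon^\top\|_2$: since the laws of $X^\epsilon$ and $X^{\epsilon,\delta}$ depend only on $\sigma_\epsilon\sigma_\epsilon^\top$, one is free to choose the representative $\sigma_\epsilon$ most compatible with $\sigma$, and (under the implicit non-degeneracy of $\sigma\sigma^\top$) the matrix-root perturbation bound gives $\|\sigma - \sigma_\epsilon\|_F \leq C\epsilon$; verifying this carefully, together with confirming that the Gronwall constant over the prescribed $\delta$ does not degrade with $\ell$, are the most delicate steps.
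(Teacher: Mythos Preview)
Your proposal is correct and follows essentially the same route as the paper: the $\mathcal{G}_\ell$-Lipschitz bound plus Cauchy--Schwarz, moment control via Lemma~\ref{lem:mom_Ito}, a coupled one-step strong error estimate, and then division by $V$ using Assumption~\ref{assu:2}.

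Two minor differences are worth noting. First, for the continuous strong error the paper applies It\^o's formula to $U(t)=\|X(t)-X^\epsilon(t)\|^2$, which produces a cross term $\langle X-X^\epsilon,\,b(X^\epsilon)-b_\epsilon(X^\epsilon)\rangle$ of order $\epsilon\,U^{1/2}$ and hence requires a \emph{nonlinear} Gronwall inequality (their Proposition~A.1). Your direct squaring of the integral representation avoids the $v^{1/2}$ term and gets by with the ordinary Gronwall lemma, which is a slight simplification. Second, on the $\sigma_\epsilon$ issue you flag the right difficulty but the phrase ``implicit non-degeneracy of $\sigma\sigma^\top$'' is not quite what is used: the paper explicitly allows degenerate noise ($m<d$, so $\sigma\sigma^\top$ is singular). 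What is exploited is that $\sigma\in\mathbb{R}^{d\times m}$ has full column rank; the paper takes the thin SVD $\sigma=U\Lambda V^\top$, forms the $m\times m$ matrix $\Sigma_\epsilon=U^\top(\sigma_\epsilon\sigma_\epsilon^\top)U$, observes it is positive definite for small $\epsilon$, Cholesky-factors it as $L_\epsilon L_\epsilon^\top$, and sets $\sigma_\epsilon:=UL_\epsilon V^\top$. Forward stability of Cholesky then gives $\|\sigma-\sigma_\epsilon\|_F\le m\,\epsilon$. This is precisely the ``careful verification'' you anticipated.
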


See Appendix~\ref{app:gron} for the proof. We should point out that in the course of the proof, the linear scaling in \eqref{eq:error_linear} is important for balancing the $\epsilon$ scaling that is defined as the spectral error of the estimator $\sigma_\epsilon\sigma_\epsilon^\top$ in \eqref{eq:epsilon_def}. The proof also suggests that while the scaling of generalization error in \eqref{generalizationerror} is necessary for small spectral error bound in \eqref{eq:epsilon_def} as we pointed out right after \eqref{generalizationerror}, it is not a sufficient condition to achieve the bounds in Lemma~\ref{lem:well_pose} which serves as the backbone for the main results in the following two subsections.

\subsection{One-point statistics}\label{sec:one_point}

For one-point statistics, we consider an observable $f:\mathbb{R}^{d} \rightarrow \mathbb{R}$ with finite first moment, $\pi(|f|) < \infty$, and try to derive error bounds for $  \big| \mathbb{E}^{x}[f(X^{\epsilon}_{n})] - \pi(f) \big|$ and $\big| \mathbb{E}^{x}[f(X^{\epsilon,\delta}_{n})] - \pi(f) \big|$, where $\pi$ denotes the invariant measure of the unperturbed dynamics \eqref{eq:unper}. By Lemma~\ref{lem:well_pose}, if the observable $f\in \mathcal{G}_{\ell}$ (defined in Proposition~\ref{prop:erg_euler}), such bounds can be derived immediately from Proposition~\ref{prop:per_bound} and it is uniform for all $f\in \mathcal{G}_{\ell}$. In particular, the following proposition summarizes the corresponding results.

\begin{prop}\label{prop:euler_inv_bound}
Under the same circumstances as in Lemma~\ref{lem:well_pose}, consider the invariant measure of the unperturbed dynamics \eqref{eq:unper}, $\pi$, and the discretized Markov chain generated by the EM scheme \eqref{eq:Euler_approx}, $\{X_{n}^{\epsilon,\delta}\}$. Then, we have
\begin{equation*}
    \sup_{f\in \mathcal{G}_{\ell}} \left| \mathbb{E}^{x}[f(X^{\epsilon}_{n})] - \pi(f) \right| \leq R_1 \left[\left(\rho_{1}^{n} + \frac{1-\rho_{1}^{n}}{1-\rho_{1}}\epsilon\right)V(x) \right], \quad \forall n\geq 0,
\end{equation*}
and
\begin{equation*}
    \sup_{f\in \mathcal{G}_{\ell}} \left| \mathbb{E}^{x}[f(X^{\epsilon,\delta}_{n})] - \pi(f) \right| \leq R_2 \left[\left(\rho_2^{n} + \frac{1-\rho_{2}^{n}}{1-\rho_{2}}\epsilon\right)V(x) + \delta^{\nu}\pi(V)\right],  \quad \forall n\geq 0,
\end{equation*}
for some constants $R_{1}, R_{2}\in (0,+\infty)$, $\rho_{1}, \rho_{2}\in(0,1)$, and $\nu \in (0, \frac{1}{2})$ (same as in Eq.~\eqref{eq:euler_2}).
\end{prop}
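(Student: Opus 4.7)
The plan is to combine the perturbation bound of Proposition~\ref{prop:per_bound} with the geometric ergodicity results (Theorem~\ref{thm:ergodic} and Proposition~\ref{prop:erg_euler}), using Lemma~\ref{lem:well_pose} to translate the learning error $\epsilon$ into the one-step V-norm bound $\gamma$. The two statements are structurally parallel; the only meaningful difference is that in the EM case I must also pay a $\delta^\nu \pi(V)$ price for the discrete-time invariant measure $\tilde\pi^\delta$ deviating from the continuous-time invariant measure $\pi$.

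For the first inequality, I would set $(u_n,u_n^\epsilon)=(X_n,X_n^\epsilon)$ in Proposition~\ref{prop:per_bound}. The Lyapunov one-step bound \eqref{eq:Exp_Lyp} for $\{X_n\}$ is immediate from Assumption~\ref{assu:1} via Dynkin: evaluate \eqref{eq:Lyap_unper_1} at $t=\delta$ to obtain $\alpha=e^{-a_1\delta}\in(0,1)$ and $\beta=(d_1/a_1)(1-e^{-a_1\delta})$. Theorem~\ref{thm:ergodic} provides the geometric decay constants $R,\rho\in(0,1)$ uniformly over $f\in\mathcal{G}\supseteq\mathcal{G}_\ell$. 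Lemma~\ref{lem:well_pose} gives $\gamma_X\le K\epsilon$, so for $\epsilon$ small enough $\gamma_X\in(0,1-\alpha)$ and the constant $\kappa$ in Proposition~\ref{prop:per_bound} is bounded by a constant multiple of $V(x)$ (since $V(x)\ge 1$ and the $\beta/(1-\gamma-\alpha)$ term is uniformly bounded). Applying the long-time bound \eqref{eq:long_time} directly yields
\begin{equation*}
\bigl|\mathbb{E}^x[f(X_n^\epsilon)]-\pi(f)\bigr|\le R\!\left[(1-\rho^n)\frac{\gamma_X\kappa}{1-\rho}+\rho^n V(x)\right]\le R_1\!\left[\rho_1^n+\tfrac{1-\rho_1^n}{1-\rho_1}\epsilon\right]\!V(x),
\end{equation*}
uniformly in $f\in\mathcal{G}_\ell$, after absorbing constants and using $\gamma_X\le K\epsilon$. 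Setting $\rho_1:=\rho$ completes the first bound.

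For the second inequality, I use the triangle decomposition
\begin{equation*}
\bigl|\mathbb{E}^x[f(X_n^{\epsilon,\delta})]-\pi(f)\bigr|\le \underbrace{\bigl|\mathbb{E}^x[f(X_n^{\epsilon,\delta})]-\mathbb{E}^x[f(X_n^\delta)]\bigr|}_{(\mathrm{I})}+\underbrace{\bigl|\mathbb{E}^x[f(X_n^\delta)]-\tilde\pi^\delta(f)\bigr|}_{(\mathrm{II})}+\underbrace{\bigl|\tilde\pi^\delta(f)-\pi(f)\bigr|}_{(\mathrm{III})}.
\end{equation*}
Term (III) is at most $K\delta^\nu\pi(V)$ by Proposition~\ref{prop:erg_euler}(iii). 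Term (II) is at most $Re^{-Dn\delta}V(x)$ by Proposition~\ref{prop:erg_euler}(ii), which we rewrite as $R\rho_2^n V(x)$ with $\rho_2:=e^{-D\delta}\in(0,1)$. For term (I), I apply Proposition~\ref{prop:per_bound} with $(u_n,u_n^\epsilon)=(X_n^\delta,X_n^{\epsilon,\delta})$: the one-step Lyapunov bound \eqref{eq:Exp_Lyp} with $\alpha=e^{-a_2\delta}$ and $\beta=d_1/a_2$ is exactly Proposition~\ref{prop:erg_euler}(i), the geometric decay is \eqref{eq:euler_1}, and Lemma~\ref{lem:well_pose} gives $\gamma_{X^\delta}\le K^\delta\epsilon\in(0,1-\alpha)$ for $\epsilon$ small. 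This produces $(\mathrm{I})\le R(1-\rho_2^n)\frac{\gamma_{X^\delta}\kappa}{1-\rho_2}\le R_2\frac{1-\rho_2^n}{1-\rho_2}\epsilon\,V(x)$. Summing the three bounds and relabeling constants gives the stated inequality.

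There is no genuine obstacle here; the whole content is bookkeeping, and the substantive analytic work has already been done in Lemma~\ref{lem:well_pose} (which provides the linear-in-$\epsilon$ control of $\gamma$) and in Proposition~\ref{prop:erg_euler} (which provides the three pieces for the discretized chain). The only mild subtlety I would flag is the requirement $\gamma\in(0,1-\alpha)$ in Proposition~\ref{prop:per_bound}: this restricts $\epsilon$ to a sufficiently small range depending on $\delta$, but since the proposition is stated for $0<\epsilon\ll 1$ this is harmless, and the factor $\kappa$ in \eqref{eq:prop_21} can be absorbed into $V(x)$ up to a universal constant because $V\ge 1$.
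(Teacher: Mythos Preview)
Your proof is correct and follows essentially the same approach as the paper: for the first bound you apply the long-time perturbation estimate \eqref{eq:long_time} to the pair $(X_n,X_n^\epsilon)$, and for the second you apply it to $(X_n^\delta,X_n^{\epsilon,\delta})$ with invariant measure $\tilde\pi^\delta$, then add the $\delta^\nu\pi(V)$ term via the triangle inequality and Proposition~\ref{prop:erg_euler}(iii). Your three-term decomposition for the discrete case simply makes explicit the two ingredients packaged together in \eqref{eq:long_time}, and your remarks on the smallness restriction on $\epsilon$ and the absorption of $\kappa$ into $V(x)$ match the paper's use of $V\ge 1$.
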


\begin{proof}
Both inequalities are results of Eq.~\eqref{eq:long_time}, which is a corollary of Proposition~\ref{prop:per_bound}.  By Lemma~\ref{lem:well_pose}, we know Proposition~\ref{prop:per_bound} holds for both $(u_{n}, u_{n}^{\epsilon}) = (X_{n}, X_{n}^{\epsilon})$ and $(u_{n}, u_{n}^{\epsilon}) = (X_{n}^{\delta}, X_{n}^{\epsilon,\delta})$.

To obtain the first inequality, we apply Eq.~\eqref{eq:long_time} to $X_{n}^{\epsilon}$ (with $\pi$ being the invariant measure of $X_{n}$). As for the second inequality, we apply Eq.~\eqref{eq:long_time} to $X_{n}^{\epsilon,\delta}$ (with $\pi = \tilde{\pi}^{\delta}$ being the invariant measure of $X_{n}^{\delta}$ defined as in Proposition~\ref{prop:erg_euler}). The desired bounds can be obtained using  the triangular inequality together with Eq.~\eqref{eq:euler_2} in Proposition~\ref{prop:erg_euler}. Here, we have used that fact that $V(x)\in [1,+\infty)$ to replace the term $\kappa$ in Eq.~\eqref{eq:long_time} by $V(x)$.
\end{proof}

Essentially, this Proposition suggests that the error in one-point statistics depends linearly on the perturbation size, $\epsilon$, as $n\to\infty$. 
In practice, since the available data for training is subjected to numerical discretization error, the error rate is of order $O(\epsilon, \delta^\nu),$ from some $\nu\in(0,1/2)$ specified in Proposition~\ref{prop:erg_euler}.

\subsection{Two-point statistics}\label{sec:two_point}
For two-point statistics, we consider observables $A, B: \mathbb{R}^{d} \rightarrow \mathbb{R}$ with finite second moments, that is, $\pi(A^{2}), \pi(B^{2}) < \infty$.
The two-point statistics of $A$ and $B$ for the unperturbed dynamics \eqref{eq:unper} is defined as
\begin{equation}\label{eq:two_point}
\begin{split}
k_{A,B}(t)&:= \mathbb{E}_{\pi}\left[A(X(t))B(X(0))\right] = \int\int A(x)B(x_0)P_{t}(x_0, \td x)\pi(\td x_{0}).
\end{split}
\end{equation}
In applications, the two-point statistics can arise from Fluctuation-Dissipation Theory and is a route to approximate the statistics of a system driven out of equilibrium, e.g., \cite{harlim2017parameter, leith1975climate,majda2005information}. In \cite{zhang2019linear}, we have shown that the two-point statistics in \eqref{eq:two_point} are well-defined for all $t\geq 0$. Formally, under Assumption~\ref{assu:3}, we can define the corresponding two-point statistics for the perturbed dynamics \eqref{eq:per}, as an approximation of \eqref{eq:two_point}. It is given by
\begin{equation*}
k^{\epsilon}_{A,B}(t):= \mathbb{E}_{\pi^{\epsilon}}\left[A(X^{\epsilon}(t))B(X^{\epsilon}(0))\right]= \int\int A(x)B(x_0)P^{\epsilon}_{t}(x_0, \td x)\pi^{\epsilon}(\td x_{0}).
\end{equation*}
Here, $P_{t}^{\epsilon}$ denotes the transition kernel \eqref{eq:tran_kernel} of the perturbed dynamics. In terms of the Markov chains $X_{n}$ and $X_{n}^{\epsilon}$, the two-point statistics reduce to 
\begin{equation}\label{eq:two_point_em}
(k_{A,B})_{n}:= k_{A,B}(n\delta) = \mathbb{E}_{\pi}[A(X_{n})B(X_{0})], \quad (k^{\epsilon}_{A,B})_{n}:= k^{\epsilon}_{A,B}(n\delta) = \mathbb{E}_{\pi^{\epsilon}}[A(X^{\epsilon}_{n})B(X^{\epsilon}_{0}))], \quad \forall n\geq 0.
\end{equation}
The following proposition, as the main result of this section, provides an error bound for the two-point statistics.
\begin{prop}\label{prop:two_point}
Under the same circumstances as in Lemma~\ref{lem:well_pose}, for a fixed step size $\delta>0$, let Assumptions~\ref{assu:3}-\ref{assu:6} hold for the family of Markov operators,
\begin{equation*}
    \left(\mathcal{P}_{\delta}^{\epsilon}f\right)(x) :=\int f(y)P^{\epsilon}_{\delta}(x, \td y) = \mathbb{E}^{x}[f(X^{\epsilon}(\delta))], \quad  0 <\epsilon \ll 1,
\end{equation*}
induced by the perturbed dynamics \eqref{eq:per} with respect to the functions $G,H,U$ (see Theorem~\ref{theo:lin_resp} for the details) satisfying $G\geq V$. We further assume
\begin{equation*}
    \pi^{\epsilon}(V^2) <\infty,
\end{equation*}
where $V$ is the Lyapunov function of the unperturbed dynamics. Then, for any  observables $A\in \mathcal{G}_{\ell}\cap C^{1}_{V,VH/G}$ and $B\in  C^{1}_{G/V,H/V}$ satisfying
\begin{equation*}
  \pi(A^2),\;\pi(B^2),\;     \pi^{\epsilon}(A^2), \; \pi^{\epsilon}(B^2) < \infty,
\end{equation*}
the two-point statistics in \eqref{eq:two_point_em} are well-defined, and
\begin{equation}\label{eq:two_point_error}
\left|(k^{\epsilon}_{A,B})_{n} - (k_{A,B})_{n}\right|  \leq R\pi^{\epsilon}(B^{2})^{\frac{1}{2}}\epsilon +  \left|\frac{\td }{\td\; \epsilon} \pi^{\epsilon}(f_{n})\Big|_{\epsilon =0} \right|\epsilon + O(\epsilon^2), \quad f_{n}(x) = B(x)\mathbb{E}^{x}[A(X_{n})],
\end{equation}
for some constants $R\in (0,+\infty)$ independent of $A$ and $B$. Here, the $\epsilon$-derivative in \eqref{eq:two_point_error} is well-defined and satisfies Eq.~\eqref{eq:lin_resp1} with $f = f_n$ and $t= \delta$. Moreover, if we further assume that $A$ is centered with respect to $\pi$ ($\pi(A)=0$), the error bound in \eqref{eq:two_point_error} satisfies
\begin{equation}\label{eq:cen_error}
\left|(k^{\epsilon}_{A,B})_{n} - (k_{A,B})_{n}\right|  \leq R\left[\pi^{\epsilon}(B^{2})^{\frac{1}{2}} + \frac{1}{1-\lambda}\|B\|_{1;G/V,H/V} \left( \lambda^{n}\|A\|_{1;V,VH/G} + \rho^{n}\pi(G)\right)  \right]\epsilon + O(\epsilon^2),
\end{equation}
for some constants $R\in(0,+\infty)$, $\lambda\in(0,1)$ ( the same as those in Assumption~\ref{assu:4}), and $\rho\in (0,1)$ (the same as in Theorem~\ref{thm:ergodic}) independent of $A$ and $B$. The norms in \eqref{eq:cen_error} are defined by Eq.~\eqref{eq:GH_norm}.
\end{prop}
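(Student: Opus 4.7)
The plan is to decompose the two-point error into a ``sampling'' part, handled by the $V$-norm one-point bound of Section~\ref{sec:per_MC}, and a stationary-measure part, handled by the long-time linear response Theorem~\ref{theo:lin_resp}. Set $g_n(x) := \mathbb{E}^x[A(X_n)]$, $g_n^\epsilon(x) := \mathbb{E}^x[A(X_n^\epsilon)]$, $f_n := B g_n$, $f_n^\epsilon := B g_n^\epsilon$. Conditioning on the initial state in \eqref{eq:two_point_em} gives $(k_{A,B})_n = \pi(f_n)$ and $(k^\epsilon_{A,B})_n = \pi^\epsilon(f_n^\epsilon)$; both are finite by Cauchy--Schwarz using the assumed second-moment bounds. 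Then write
\[
(k^\epsilon_{A,B})_n - (k_{A,B})_n = \pi^\epsilon\bigl(B(g_n^\epsilon - g_n)\bigr) + \bigl(\pi^\epsilon(f_n) - \pi(f_n)\bigr).
\]

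For the first bracket, Cauchy--Schwarz under $\pi^\epsilon$ combined with the pointwise estimate $|g_n^\epsilon(x) - g_n(x)| \leq CK\epsilon\, V(x)$ --- obtained by applying Proposition~\ref{prop:per_bound} with $A\in\mathcal{G}_\ell$ and the one-step bound $\gamma_X \leq K\epsilon$ from Lemma~\ref{lem:well_pose}, uniformly in $n$ --- yields a bound $R\epsilon\, \pi^\epsilon(B^2)^{1/2}$ once we invoke the hypothesis $\pi^\epsilon(V^2) < \infty$. For the second bracket, Theorem~\ref{theo:lin_resp} applied to $\mathcal{P}^\epsilon_\delta$ with observable $f_n$ gives $\pi^\epsilon(f_n) - \pi(f_n) = \epsilon\, \partial_\epsilon \pi^\epsilon(f_n)|_{\epsilon=0} + O(\epsilon^2)$, once $f_n\in C^1_{G,H}$ is verified via the Leibniz rule applied to $B g_n$ using $B\in C^1_{G/V,H/V}$, $A\in C^1_{V,VH/G}$, and the controls on $g_n$ described below. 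Summing the two estimates yields \eqref{eq:two_point_error}.

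When $\pi(A)=0$, the derivative admits the explicit form \eqref{eq:lin_resp1}. Using $\|\partial\mathcal{P}^0_\delta \phi\|_U \leq C\|\phi\|_{1;G,H}$ from Assumption~\ref{assu:5}, the $C^1_{G,H}$-resolvent bound $\|(I-\mathcal{P}^0_\delta)^{-1}\|_{C^1_{G,H}\to C^1_{G,H}} \leq 1/(1-\lambda)$ implied by Assumption~\ref{assu:4}, and $\pi(U) < \infty$ from Assumption~\ref{assu:6}, the problem reduces to bounding $\|f_n - \pi(f_n)\|_{1;G,H}$. I would split this as $\|f_n\|_{1;G,H} + |\pi(f_n)|$. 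The constant piece is $|\pi(Bg_n)| \leq \|B\|_{1;G/V,H/V} R\rho^n \pi(G)$ via the pointwise ergodic bound $|g_n|\leq R\rho^n V$ from Theorem~\ref{thm:ergodic} (with $\pi(A)=0$ and $|A|\leq V$). For $\|f_n\|_{1;G,H}$, the Leibniz rule $\nabla f_n = (\nabla B) g_n + B(\nabla g_n)$ pairs $\nabla B \sim H/V$ with $g_n \sim V$ to produce a bound proportional to $H$ with $\rho^n$ decay (from Theorem~\ref{thm:ergodic}), and pairs $B \sim G/V$ with $\nabla g_n$, whose $H$-scaled norm decays like $\lambda^n \|A\|_{1;G,H} \leq 2\lambda^n\|A\|_{1;V,VH/G}$ through iterating Assumption~\ref{assu:4} and the embedding enabled by $G\geq V$. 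Collecting the contributions yields \eqref{eq:cen_error}.

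The main obstacle is the weight-bookkeeping in the last step. The pairing $B \in C^1_{G/V,H/V}$ with $A \in C^1_{V,VH/G}$ is engineered precisely so that each Leibniz summand lands in $C^1_{G,H}$ required by Theorem~\ref{theo:lin_resp}, and the subtle point is verifying that the $\lambda^n$-decay from the $C^1_{G,H}$-spectral gap (Assumption~\ref{assu:4}) and the $\rho^n$-decay from $V$-norm geometric ergodicity (Theorem~\ref{thm:ergodic}) decouple cleanly into the two additive terms of \eqref{eq:cen_error} without embedding constants (arising from $G/V$) leaking into the leading-order bound.
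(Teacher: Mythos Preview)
Your proposal is correct and follows the paper's proof essentially step for step: the same decomposition into a transition-kernel part (your first bracket, the paper's $I_1$) bounded via Lemma~\ref{lem:well_pose}/Proposition~\ref{prop:per_bound} plus Cauchy--Schwarz against $\pi^\epsilon(V^2)^{1/2}$, and an invariant-measure part (your second bracket, the paper's $I_2=\pi^\epsilon(f_n)-\pi(f_n)$) handled by Theorem~\ref{theo:lin_resp}, followed in the centered case by the same resolvent bound $1/(1-\lambda)$ and the same split $\|f_n-\pi(f_n)\|_{1;G,H}\le\|f_n\|_{1;G,H}+|\pi(f_n)|$ with $|\pi(f_n)|$ controlled by Theorem~\ref{thm:ergodic}. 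The one place the paper is sharper is precisely the weight-bookkeeping you flag as the obstacle: rather than mixing $\rho^n$ and $\lambda^n$ decay across the two Leibniz summands of $\nabla f_n$ (where your pairing of $B\sim G/V$ with the $H$-scaled bound on $\nabla g_n$ would indeed leak a factor of $G/V$), the paper performs an \emph{exact} pointwise algebraic redistribution
\[
\Bigl(\tfrac{|A_n|}{G}+\tfrac{\|\nabla A_n\|}{H}\Bigr)\Bigl(|B|+\tfrac{\|\nabla B\|}{H/G}\Bigr)=\Bigl(\tfrac{|A_n|}{V}+\tfrac{\|\nabla A_n\|}{VH/G}\Bigr)\Bigl(\tfrac{|B|}{G/V}+\tfrac{\|\nabla B\|}{H/V}\Bigr)
\]
before taking suprema, which makes both factors separately finite and delivers the single $\lambda^n\|A\|_{1;V,VH/G}$ contribution to $\|f_n\|_{1;G,H}$ appearing in~\eqref{eq:cen_error}.
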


See Appendix~\ref{App:two_point} for the proof. In the proof we analyze the two factors that contribute to the error of the two-point statistics: the transition kernel and the invariant measure, which lead to the first and second terms on the right-hand side of the error bound \eqref{eq:two_point_error}, respectively. When the observable $A$ is centered with respect to $\pi$, suggested by Eq.~\eqref{eq:cen_error}, the leading order term of the error caused by the invariant measure, goes to zero as $n\rightarrow +\infty$.

Proposition~\ref{prop:two_point} states the error of the two-point statistics between the two sampling Markov chains $\{X_{n}\}$ and $\{X_{n}^{\epsilon}\}$. Similar to Lemma~\ref{lem:well_pose} and Proposition~\ref{prop:euler_inv_bound}, one can extend the error bounds in Proposition~\ref{prop:two_point} corresponding to the two discretized Markov chains: $\{X_{n}^{\delta}\}$ and $\{X_{n}^{\epsilon, \delta}\}$, where the two-point statistics are defined with respect to the discrete transition kernel. For example, the two-point statistics of $\{X_{n}^{\delta}\}$ are defined as 
\begin{equation*}
    \left( k_{A,B}^{\delta}\right)_{n}: = \mathbb{E}_{\tilde{\pi}^{\delta}} \left[A(X_{n}^{\delta})B(X_{0}^{\delta})\right] = \int\int A(x)B(x_0) (P^{\delta})^{n}(x_0, \td x) \tilde{\pi}^{\delta}(\td x), \quad \forall n\geq 0,
\end{equation*}
where $A,B \in L^{2}(\mathbb{R}^{d}, \tilde{\pi}^{\delta})$ and $P^{\delta}$ denotes the transition kernel of $\{X_{n}^{\delta}\}$. Here, $(P^{\delta})^{n}$ corresponds to the product of the transition kernel, that is,
\begin{equation*}
    (P^{\delta})^{n}(x, A) = \int (P^{\delta})^{n-1}(x, \td y) P^{\delta}(y, A), \quad \forall A \in \mathcal{B}(\mathbb{R}^{d}), \quad \forall n \ge 1.
\end{equation*}

\begin{remark}
Proposition~\ref{prop:two_point} is valid under a series of assumptions, which can be classified into the following three categories.
\begin{enumerate}[i.]
    \item \textbf{Assumptions on the unperturbed dynamics:} Assumptions~\ref{assu:Ito_coef}-\ref{assu:2} are proposed to ensure the geometric ergodicity of the unperturbed dynamics and its numerical discretization.
    \item \textbf{Assumptions on the perturbed dynamics:} Assumption~\ref{assu:coe_per} is postulated so that the perturbed dynamic is a ``reasonable'' approximation of the unperturbed dynamics (Lemma~\ref{lem:well_pose}). Assumptions~\ref{assu:3}-\ref{assu:6} are proposed so that the long-time linear response theory holds (Theorem~\ref{theo:lin_resp}).
    \item \textbf{Assumptions on the observables}: We assume the observables $A$ and $B$ satisfy certain regularity and integrability conditions, so that the two-point statistics in \eqref{eq:two_point_em} are well-defined and the long-time linear response theory is applicable to $f_{n}$ in \eqref{eq:two_point_error}. The regularity assumptions of $A$ and $B$ are not identical. Specifically, $A\in C^{1}_{V,VH/G}$ while $B\in C^1_{G/V, H/V}$.
        In particular, when $G\geq V^2$, we have
    \begin{equation*}
        G/V \geq V, \quad H/V \geq VH/G,
    \end{equation*}
    which imply that $C^{1}_{V,VH/G}\subset C^{1}_{G/V, H/V}$, that is, the regularity assumption of $B$ is weaker than that of $A$. In practice, it is common that the observable $B$ is not as regular as the observable $A$. For example, in the Fluctuation-Dissipation theory (FDT), the linear response operator defines two-point statistics of the form in \eqref{eq:two_point} between the observable $A$ and the conjugate variable $B$ \cite{zhang2020estimating}. The conjugate variable $B$, produced by a differential operator (a typical example is $B =\nabla \log(\pi)$ \cite{zhang2019parameter}), is often less regular than $A$.
\end{enumerate}

\end{remark}

Among all the assumptions, the assumptions associated with the long-time linear response theory (Assumptions~\ref{assu:3}-\ref{assu:6}) are somewhat abstract and difficult to be directly verified. To provide some insight, we give an example here to show how the long-time linear response theory is applied to a class of It\^o diffusions and their perturbations.

\begin{example}
Consider an It\^o diffusion \eqref{eq:unper} satisfying Assumptions~\ref{assu:Ito_coef}-\ref{assu:0}. If the drift coefficients $b$ in \eqref{eq:unper} satisfies the dissipative condition \cite{mattingly2002ergodicity}, that is, there exist constants $a,d \in (0, +\infty)$ such that
\begin{equation*}
    \langle b(x), x \rangle \leq a  - d \|x\|^2, \quad \forall x\in \mathbb{R}^{d},
\end{equation*}
then, the Assumptions~\ref{assu:1}-\ref{assu:2} hold with $V(x) = 1 + \|x\|^{2\ell}$ for all $\ell\geq 1$. Moreover, consider the family of Markov operators $\{\mathcal{P}_{t}^{\epsilon}\}$ induced by the perturbed dynamics in \eqref{eq:per} with coefficients satisfy Assumption~\ref{assu:coe_per}, then Assumptions~\ref{assu:3}-\ref{assu:6} hold for all $\epsilon$ small enough with
\begin{equation*}
    G(x) = 1 + \alpha e^{\eta \|x\|^2}, \quad H(x) = \beta^{-1} e^{\eta \|x\|^2}, \quad U(x) = G(x) + e^{2\eta \|x\|^2},
\end{equation*}
for some positive constants $\alpha$, $\beta$, and $\eta$ sufficiently small.
\end{example}

The choice of the Lyapunov function is a result of Lemma 4.2 in \cite{mattingly2002ergodicity}. Since the perturbation to the drift coefficients $b_{\epsilon}-b$ is of linear growth (Assumption~\ref{assu:coe_per}), for $\epsilon$ small enough, the drift coefficients $b_{\epsilon} = b + (b_{\epsilon}-b)$ of the perturbed dynamics still satisfies the dissipative condition. Under such observations, the choice of functions $G,H$ and $U$ can be found in the proof of Theorem 4.4 in \cite{hairer2010simple}. In \cite{hairer2010simple}, the authors considered the case where the drift coefficients $b$ is a linear combination of symmetric multi-linear maps, so that they can apply the long-time linear response theory to the family of Markov operators parameterized by the parameters in the SDEs. In our situation, the family of Markov operators is parameterized by $\epsilon$ in the perturbed dynamics. With the key ingredient, the dissipative condition, being preserved, the proof in \cite{hairer2010simple} is still valid. It is worthwhile to mention that, given the dissipative condition, Assumption~\ref{assu:0} (i) can be replaced by the assumption that the dynamics \eqref{eq:unper} is approximately controllable \cite{mattingly2002ergodicity,hairer2010simple}. As for the existence of the invariant measure $\pi^{\epsilon}$, one can also consider the stationary Fokker–Planck equation of \eqref{eq:per}. In our case, since the noise may be degenerate, the existence of $\pi^{\epsilon}$ can be established by results in \cite{huang2016steady} with the help of the Lyapunov function.

\section{Learning with the kernel-based spectral regression method  }\label{sec:application}

In Section~\ref{sec:per_theory}, we have deduced a linear dependence of the error of the invariant statistics to the error in the estimation of the drift and diffusion coefficients. Our primary interest here (and in Section~\ref{sec:application_ML}) is to understand under which conditions can the results in Section~\ref{sec:per_theory} be achieved when existing popular machine learning methods are used to estimate the coefficients $b$ and $\sigma\sigma^\top$. Here, we will focus on the kernel-based spectral regression method \cite{rosasco2010learning}, whereas, in the next section, we will focus on the single hidden-layer random neural networks with the ReLU activation functions \cite{gonon2020approximation}.

In Section~\ref{sec:learning_SDE}, we have formally proposed a framework for learning ergodic SDEs by solving a regression problem and introduced the concept of the generalization error, which will be further decomposed into two parts: estimation error (error caused by sampling) and approximation error (error due to the choice of hypothesis space). In particular, we have pointed out that the coefficients $(b_{\epsilon}, \sigma_{\epsilon}\sigma_\epsilon^\top)$ in the approximated system \eqref{eq:per} are the empirical estimates defined through \eqref{eq:sig_e_1} for a given sample. Using the notation in \eqref{general_regression}, our goal now is to quantify the generalization error defined in \eqref{generalizationerror} induced by $b_\epsilon$, an estimator obtained from training on a set of i.i.d. labelled data, sampled from $(X,Y)$ generated by \eqref{discretesupervisedmodel}. Through the assumed scaling in \eqref{generalizationerror}, this analysis elucidates how the parameter $\epsilon$ depends on the training sample size, the parameters in the hypothesis space, noise amplitude parameter, and step size for the time discretization. 

To simplify the discussion, we will perform the analysis component-wise. Abusing the notation, we refer $b:\mathbb{R}^d\to\mathbb{R}$ as a generic notation for each component of $b: \mathbb{R}^d\to\mathbb{R}^d$. Correspondingly, we now refer to $y_i:=b(x_i) + \eta_i \in\mathbb{R}$ as a generic component of a ($d$-dimensional vector) sample of $Y$ in \eqref{discretesupervisedmodel}. Here, $\eta_i$ denotes a real-valued component of the $d-$dimensional Gaussian random variable $\mathcal{N}(0,\delta^{-1}\sigma\sigma^\top)$. With this abuse of notation, our training data set is denoted by the labeled data $\{x_i,y_i\}_{i=1}^N$ with $(x_{i}, y_{i})\in \mathbb{R}^{d}\times \mathbb{R}$. 

For completeness, we will review some basic concepts of reproducing kernel Hilbert spaces (RKHS) in Section~\ref{sec:RKHS}. Subsequently, in Section~\ref{sec:Spectral}, we will discuss a data-driven kernel-based spectral regression approach, whose mathematical foundation lies in the theory RKHSs. While controlling the generalization error bound $\epsilon$ is a practical interest, as we pointed out in Section~\ref{sec:learning_SDE}, this is only a necessary and not a sufficient condition for achieving the main results (Propositions~\ref{prop:euler_inv_bound} and \ref{prop:two_point}) in this paper. Therefore, it is crucial to understand whether the conditions in the Assumption~\ref{assu:coe_per}, the linear growth bound and globally Lipschitz continuity, which are the necessary conditions to the results in Section~\ref{sec:per_theory}, can be satisfied (see Corollary~\ref{coro:kernel} and Proposition~\ref{prop:Lip_spec}). We close this section with some discussions of the advantages and shortcomings of this estimation approach in Section~\ref{sec:kernelsummary}.

\subsection{A brief review of RKHS} \label{sec:RKHS}

For simplicity, we will consider the class of RKHS of the real valued function spaces on $\mathbb{R}^d$, while the argument can be extended to general locally compact metric spaces \cite{steinwart2008support}. To begin with, recall that a function $K: \mathbb{R}^{d} \times \mathbb{R}^{d} \rightarrow \mathbb{R}$ is called a (Mercer) kernel \cite{steinwart2008support, sun2005mercer} if it is continuous, symmetric and positive semidefinite, i.e.,
\begin{equation*}
K_{x}: = K(\cdot, x) \in C(\mathbb{R}^{d}), \quad \forall x \in \mathbb{R}^{d},
\end{equation*}
and for any finite set of points $\{x_{i}\}_{i=1}^{N}\subset \mathbb{R}^{d}$ the matrix
\begin{equation}\label{eq:Gram}
    \bm{K}_{N}: = \frac{1}{N}\left(K(x_{i}, x_{j}))\right)_{i,j=1}^{N}\in \mathbb{R}^{N\times N},
\end{equation}
is symmetric positive semidefinite. In the literature, such a matrix \eqref{eq:Gram} is called the \textit{empirical kernel} \cite{rosasco2010learning} of $K$ with respect to the sample points $\{x_{i}\}_{i=1}^{N}$ and the map $ \Phi: x\rightarrow K_{x}$ is called the \textit{feature map} \cite{steinwart2008support,caponnetto2007optimal}.

Using the feature map, the RKHS $\mathcal{H}$ associated with the kernel $K$ is defined to be the closure of $\operatorname{span}\{K_{x}\;:\; x\in \mathbb{R}^{d}\}$ \cite{sun2005mercer} with the inner produce given by
\begin{equation*}
    \langle f, g \rangle_{\mathcal{H}}: = \sum_{i=1}^{M_{1}} \sum_{j=1}^{M_{2}} c_{i}d_{j}K(x_{i}, y_{j}), \quad f = \sum_{i=1}^{M_1} c_{i}K_{x_i}, \quad g = \sum_{j=1}^{M_2} d_{j} K_{y_j}.
\end{equation*}
The reproducing property takes the form
\begin{equation}\label{eq:reproduce}
    f(x) = \langle f, K_{x} \rangle_{\mathcal{H}}, \quad  \forall f\in \mathcal{H}, \quad \forall x\in \mathbb{R}^{d},
\end{equation}
which implies that $\mathcal{H}$ consists of continuous functions on $\mathbb{R}^{d}$. As a result of the reproducing property, RKHS has a remarkable property that the $\mathcal{H}$-norm convergence implies pointwise convergence since the evaluation functionals are bounded. Let $\|\cdot\|_{\mathcal{H}}$ denote the norm in the RKHS $\mathcal{H}$. To develop the orthonormal basis of $\mathcal{H}$ for our applications, we propose the following assumption on the kernel.
\begin{assu}\label{assu:kernel}
The kernel $K$ is Hilbert–Schmidt with respect to the nondegenerate probability measure $\tilde{\pi}^{\delta}$, that is,
\begin{equation}\label{eq:kernel_bound}
\int_{\mathbb{R}^{d}}\int_{\mathbb{R}^{d}} (K(x,y))^2\td \tilde{\pi}^{\delta}(x) \td \tilde{\pi}^{\delta}(y) < +\infty,
\end{equation}
and $K_{x}\in L^{2}(\mathbb{R}^{d}, \tilde{\pi}^{\delta})$ for all $x\in \mathbb{R}^{d}$. Recall that the available sample points $\{x_{i}\}_{i=1}^{N}$ are drawn from $\tilde{\pi}^{\delta}$.
\end{assu}
As a result of Assumption~\ref{assu:kernel}, the following integral operator
\begin{equation}\label{eq:L_K}
    L_{K}: L^{2}(\mathbb{R}^{d}, \tilde{\pi}^{\delta}) \rightarrow L^{2}(\mathbb{R}^{d}, \tilde{\pi}^{\delta}), \quad \left( L_{K} g \right)(x) = \int_{\mathbb{R}^{d}} K(x,s) g(s) \td \tilde{\pi}^{\delta}(s),
\end{equation}
is bounded, compact and positive on $L^{2}(\mathbb{R}^{d}, \tilde{\pi}^{\delta})$ with countably many positive eigenvalues $\{\lambda_i\}_{i=1}^{\infty}$ \cite{sun2005mercer}. Such type of integral operators are widely studied in various contexts, e.g., graph Laplacian \cite{trillos2020error} and diffusion maps \cite{berry2016variable}. In particular, we have $L_{K}g\in \mathcal{H}$ for any $g\in L^{2}(\mathbb{R}^{d}, \tilde{\pi}^{\delta})$ \cite{sun2005mercer}. Thus, for each positive eigenvalue $\lambda_{i}>0$, we can take the  eigenfunction $u_{i}\in L^{2}(\mathbb{R}^{d}, \tilde{\pi}^{\delta}) \cap \mathcal{H}$ such that
\begin{equation*}
    L_{K}u_{i} = \lambda_i u_{i}, \quad \langle u_{i}, u_{j} \rangle_{\tilde{\pi}^{\delta}} = \delta_{ij}, \quad \forall i,j \geq 1,
\end{equation*}
where $\langle \cdot, \cdot \rangle_{\tilde{\pi}^{\delta}}$ denotes the inner product in $L^{2}(\mathbb{R}^{d}, \tilde{\pi}^{\delta})$. Note that $L^{2}(\mathbb{R}^{d},\tilde{\pi}^{\delta})$ consists of equivalence classes of functions and consequently it is not an RKHS. The fact that for nonzero eigenvalues, one can pick eigenfunctions in $\mathcal{H}$ is critical for our later construction of the estimates.

To see the connection between the eigenfunctions $\{u_{i}\}$ and the kernel, we introduce the spectral decomposition of $L_{K}$ \cite{rosasco2010learning},
\begin{equation*}
    L_{K} = \sum_{i= 1}^{\infty} \lambda_{i} \langle \cdot, u_{j}\rangle_{\tilde{\pi}^{\delta}} u_{j}.
\end{equation*}
Formally, we can exchange the order of summation and integration in the decomposition above and reach the following representation of the kernel $K$,
\begin{equation}\label{eq:Mercer}
    K(x,y) = \sum_{i=1}^{\infty} \lambda_i u_{i}(x) u_{i}(y).
\end{equation}
The relation in \eqref{eq:Mercer} is known as the Mercer theorem \cite{steinwart2008support}, which is valid even on noncompact domains, e.g., $\mathbb{R}^{d}$ \cite{sun2005mercer}. As a corollary of the Mercer theorem, $\{\sqrt{\lambda}_{i} u_{i}\}_{i=1}^{\infty}$ form an orthonormal basis of $\mathcal{H}$. In particular, notice that
\begin{equation*}
    K_{x} = \sum_{i=1}^{\infty} \lambda_iu_{i}(x) u_{i}, 
\end{equation*}
and
\begin{equation*}
    \langle K_{x}, K_{y} \rangle_{\mathcal{H}} = K(x,y) \Rightarrow \sum_{i,j=1}^{\infty} \sqrt{\lambda_{i}\lambda_{j}}u_{i}(x) u_{j}(y) \langle \sqrt{\lambda_{i}}u_{i}, \sqrt{\lambda_{j}}u_{j} \rangle_{\mathcal{H}} = \sum_{i=1}^{\infty} \lambda_{i} u_{i}(x) u_{i}(y),
\end{equation*}
which leads to the orthogonality properties, 
\begin{equation*}
    \langle \sqrt{\lambda_{i}}u_{i}, \sqrt{\lambda_{j}}u_{j} \rangle_{\mathcal{H}} = \delta_{ij}, \quad \forall i,j\geq 1.
\end{equation*}
Since $\{u_{i}\}$ is the orthonormal basis of $(\operatorname{Ker} L_{K})^{\perp}$ as a subspace of $L^{2}(\mathbb{R}^{d}, \tilde{\pi}^{\delta})$, we have the following isometric isomorphism
\begin{equation}\label{eq:iso_iso}
    L_{K}^{\frac{1}{2}}: \bar{D}_{K} \rightarrow \mathcal{H},
\end{equation}
where $\bar{D}_{K}=\left(\operatorname{Ker} L_{K}\right)^{\perp}$ is the closure of $D_{K}=\operatorname{span}\{u_{i}\}$ in $L^{2}(\mathbb{R}^{d}, \tilde{\pi}^{\delta})$. Here, $L_{K}^{\frac{1}{2}}$ is the square-root of $L_{K}$ satisfying
\begin{equation*}
    L_{K}^{\frac{1}{2}} u_{i} = \sqrt{\lambda_{i}} u_{i}, \quad \forall i \geq 1.
\end{equation*}
The isomorphism $L^{\frac{1}{2}}_{K}$ also reveals the regularity difference between functions in $\bar{D}_{K}$ and functions in $\mathcal{H}$. We should point out that if all the eigenvalues of $L_{K}$ are positive, then $L_K^{1/2}$ is injective and $\bar{D}_{K}$ is dense in $L^2(\mathbb{R}^d,\tilde{\pi}^{\delta})$ (e.g., Theorem~4.26 in \cite{steinwart2008support}). This means that any function $f\in L^2(\mathbb{R}^d,\tilde{\pi}^{\delta})$ can be approximated with arbitrary precision by a function in the RKHS $\mathcal{H}$, where the convergence is valid in $L^2(\mathbb{R}^d,\tilde{\pi}^{\delta})$. As we shall see later, this fact allows one to quantify the approximation error in term of the finite number of basis functions used in the numerical approximation of $b$.

\subsection{Spectral regression with integral operators} \label{sec:Spectral}

In Section~\ref{sec:RKHS}, we have reviewed some basic concepts of RKHS. Given a kernel $K$ that satisfies Assumption~\ref{assu:kernel}, the corresponding RKHS consists of continuous functions (rather than equivalent classes as in $L^{2}(\mathbb{R}^{d}, \tilde{\pi}^{\delta})$) that can be written as linear combinations (possibly infinite) of either images of feature maps $\{K_{x_i}\}$ or orthonormal basis functions $\{\sqrt{\lambda_i} u_i\}$. The latter  representation leads to a statistical learning approach that we shall explore in this section. Throughout the section, $\mathcal{H}$ always denotes the RKHS associated with a kernel $K$ that satisfies the Assumption~\ref{assu:kernel}. 

We consider the first $M$ (counting multiplicity) eigenvalues of the integral operator $L_{K}$ in \eqref{eq:L_K} satisfying
\begin{equation}\label{eq:spec_LK}
    \lambda_{1} \geq \lambda_{2} \geq \cdots \geq \lambda_{M} > \lambda_{M+1} \geq \cdots.
\end{equation}
Here, we have assumed that there is a spectral gap between $\lambda_{M}$ and $\lambda_{M+1}$. Let $P_{M}$ denote the orthogonal projection from $L^{2}(\mathbb{R}^{d}, \tilde{\pi}^{\delta})$ onto the span of the first $M$ eigenfunctions $\{u_{i}\}_{i=1}^{M}$, that is,
\begin{equation}\label{eq:proj_L2}
    P_{M} b  = \sum_{i=1}^{M} \langle b, u_{i} \rangle_{\tilde{\pi}^{\delta}} u_{i}, \quad \forall b\in L^2(\mathbb{R}^d, \tilde{\pi}^\delta).
\end{equation}
Recall that $\{u_{j}\}$ form an orthonormal family in $L^{2}(\mathbb{R}^{d}, \tilde{\pi}^{\delta})$. For any $b\in L^{2}(\mathbb{R}^{d}, \tilde{\pi}^{\delta})$, the $L^{2}$-convergence of $P_{M}b$ to $Pb$ is clear, where
\begin{equation*}
    P b  = \sum_{i=1}^{\infty} \langle b, u_{i} \rangle_{\tilde{\pi}^{\delta}} u_{i},
\end{equation*}
is the projection onto $\bar{D}_{K}$. Moreover, if $b\in \mathcal{H}$, such a convergence is valid in $\mathcal{H}$ as well. 

\begin{lem}\label{lem:spectral}
Let $K$ be a kernel that satisfies Assumption~\ref{assu:kernel} and $\mathcal{H}$ be the corresponding RKHS. Then $\mathcal{H}\subset L^{2}(\mathbb{R}^{d}, \tilde{\pi}^{\delta})$, and $\forall b \in \mathcal{H}$, we have $P_{M}b\rightarrow b$ as $M\rightarrow \infty$ in $\mathcal{H}$ and 
\begin{equation*}
    |b(x) - P_{M}b(x)| \leq \|b-P_{M}b\|_{\mathcal{H}} K^{\frac{1}{2}}(x,x), \quad \forall x\in \mathbb{R}^{d},
\end{equation*}
where $P_{M}b$ is the projection of $b$ defined in \eqref{eq:proj_L2}.
\end{lem}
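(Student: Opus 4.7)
The plan is to bridge the $L^2(\tilde{\pi}^\delta)$-projection and the $\mathcal{H}$-structure through the Mercer-theorem ONB $\{\sqrt{\lambda_i}\,u_i\}_{i\geq 1}$ of $\mathcal{H}$, which already underlies the construction of $P_M$. First, to show $\mathcal{H}\subset L^2(\mathbb{R}^d,\tilde{\pi}^\delta)$, I would take any $b\in\mathcal{H}$ and expand it in this ONB as $b=\sum_i c_i \sqrt{\lambda_i}\,u_i$ with $\|b\|_\mathcal{H}^2=\sum_i c_i^2<\infty$. Using the $L^2$-orthonormality $\langle u_i,u_j\rangle_{\tilde{\pi}^\delta}=\delta_{ij}$, the same series converges in $L^2(\tilde{\pi}^\delta)$ and $\|b\|_{L^2}^2=\sum_i \lambda_i c_i^2 \leq \lambda_1 \|b\|_\mathcal{H}^2$, since $\{\lambda_i\}$ is bounded (by $\|L_K\|$, which is finite by Assumption~\ref{assu:kernel} and \eqref{eq:kernel_bound}). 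This shows the embedding is continuous and well-defined.

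Next, I would compute $P_M b$ explicitly for $b\in\mathcal{H}$. Starting from the expansion $b=\sum_j c_j\sqrt{\lambda_j}\,u_j$ and taking the $L^2$-inner product with $u_i$ term-by-term (justified by $L^2$-convergence of the expansion), one obtains $\langle b,u_i\rangle_{\tilde{\pi}^\delta}=c_i\sqrt{\lambda_i}$, so that $P_M b=\sum_{i=1}^M c_i\sqrt{\lambda_i}\,u_i$. Hence $P_M b\in\mathcal{H}$, and $b-P_M b=\sum_{i>M} c_i\sqrt{\lambda_i}\,u_i$. Because $\{\sqrt{\lambda_i}\,u_i\}$ is orthonormal in $\mathcal{H}$,
\begin{equation*}
\|b - P_M b\|_\mathcal{H}^2 = \sum_{i>M} c_i^2 \longrightarrow 0 \quad \text{as } M\to\infty,
\end{equation*}
giving convergence in $\mathcal{H}$.

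Finally, for the pointwise bound I would apply the reproducing property \eqref{eq:reproduce} to $b-P_M b\in\mathcal{H}$ and use Cauchy--Schwarz in $\mathcal{H}$:
\begin{equation*}
|b(x)-P_M b(x)| = |\langle b - P_M b, K_x\rangle_\mathcal{H}| \leq \|b-P_M b\|_\mathcal{H}\,\|K_x\|_\mathcal{H}.
\end{equation*}
Since $\|K_x\|_\mathcal{H}^2 = \langle K_x,K_x\rangle_\mathcal{H} = K(x,x)$, the stated inequality follows.

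The only subtle point I anticipate is ensuring that the $L^2$-projection $P_M b$ has the $\mathcal{H}$-expansion I claim: this hinges on two facts that are already in the excerpt, namely that the eigenfunctions $u_i$ may be chosen in $\mathcal{H}$ (below \eqref{eq:L_K}) and that the $\mathcal{H}$-ONB and $L^2$-ONB differ only by the factors $\sqrt{\lambda_i}$ via Mercer's representation \eqref{eq:Mercer}. Once these are invoked, everything reduces to manipulating a single series simultaneously in the two Hilbert norms; no further analytic estimates are required.
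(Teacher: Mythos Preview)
Your proposal is correct and follows essentially the same route as the paper: expand $b\in\mathcal{H}$ in the Mercer orthonormal basis $\{\sqrt{\lambda_i}\,u_i\}$, identify $P_M b$ with the $M$-th partial sum (so that $\|b-P_Mb\|_{\mathcal{H}}^2$ is a tail of a convergent series), and close with the reproducing property plus Cauchy--Schwarz for the pointwise bound. The paper phrases the identification $b=Pb$ through the isometric isomorphism $L_K^{1/2}:\bar D_K\to\mathcal{H}$ rather than starting directly from the $\mathcal{H}$-ONB expansion, but the two are equivalent and the remaining steps coincide.
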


\begin{proof}
The inclusion $\mathcal{H}\subset L^{2}(\mathbb{R}^{d}, \tilde{\pi}^{\delta})$ is a result of Assumption~\ref{assu:kernel} (see e.g.\cite{sun2005mercer}). As a result, $\forall b\in \mathcal{H}$ the projection $P_{M}b$ in \eqref{eq:proj_L2} is well-defined. We first show that $Pb = b$. By the isometric isomorphism in \eqref{eq:iso_iso}, we have $L_{K}^{-\frac{1}{2}}b \in \bar{D}_{K}$, that is,
\begin{equation*}
    L_{K}^{-\frac{1}{2}}b = P\left( L_{K}^{-\frac{1}{2}}b \right) = \sum_{i=1}^{\infty} \langle L_{K}^{-\frac{1}{2}}b, u_{i} \rangle_{\tilde{\pi}^{\delta}} u_{i} = \sum_{i=1}^{\infty} \frac{\langle b, u_{i} \rangle_{\tilde{\pi}^{\delta}}}{\sqrt{\lambda_{i}}} u_{i},
\end{equation*}
which leads to
\begin{equation*}
    b =L_{K}^{\frac{1}{2}}L_{K}^{-\frac{1}{2}}b= \sum_{i=1}^{\infty} \frac{\langle b, u_{i} \rangle_{\tilde{\pi}^{\delta}}}{\sqrt{\lambda_{i}}} L_{K}^{\frac{1}{2}}u_{i} = \sum_{i=1}^{\infty} \frac{\langle b, u_{i} \rangle_{\tilde{\pi}^{\delta}}}{\sqrt{\lambda_{i}}} \sqrt{\lambda_{i}}u_{i} = Pb.
\end{equation*}
Since $\{\sqrt{\lambda_{i}}u_{i}\}_{i=1}^{\infty}$ is an orthonormal basis of $\mathcal{H}$, we have
\begin{equation*}
    \| b\|_{\mathcal{H}}^2 = \sum_{i = 1}^{\infty} \frac{\langle b, u_{i} \rangle^2_{\tilde{\pi}^{\delta}}}{\lambda_{i}} < \infty.
\end{equation*}
In particular, 
\begin{equation*}
    b - P_{M}b = \sum_{i = M+1}^{\infty} \langle b, u_{i} \rangle_{\tilde{\pi}^{\delta}} u_{i} = \sum_{i = M+1}^{\infty} \frac{\langle b, u_{i} \rangle_{\tilde{\pi}^{\delta}}}{\sqrt{\lambda_{i}}} \sqrt{\lambda_{i}}u_{i},
\end{equation*}
which shows that,
\begin{equation*}
    \|b - P_{M}b\|_{\mathcal{H}}^{2} = \sum_{i = M+1}^{\infty} \frac{\langle b, u_{i} \rangle^2_{\tilde{\pi}^{\delta}}}{\lambda_{i}} \rightarrow 0,
\end{equation*}
as $M\rightarrow \infty$.

By the reproducing property \eqref{eq:reproduce}, we have $\|K_{x}\|_{\mathcal{H}}^{2} = \langle K_{x}, K_{x} \rangle_{\mathcal{H}} = K(x,x)$ and
\begin{equation*}
    |b(x)-P_{M}b(x)|  = | \langle b - P_{M}b, K_{x} \rangle_{\mathcal{H}}  |\leq \|b-P_{M}b\|_{\mathcal{H}} \|K_{x}\|_{\mathcal{H}} = \|b-P_{M}b\|_{\mathcal{H}} K^{\frac{1}{2}}(x,x), \quad \forall x\in\mathbb{R}^{d}.
\end{equation*}
\end{proof}
Lemma~\ref{lem:spectral} provides a pointwise error bound for the projection $P_{M}b$ in \eqref{eq:proj_L2} given $b\in \mathcal{H}$, which is closely related to the error condition \eqref{eq:error_linear} in  Assumption~\ref{assu:coe_per}. In particular, the following corollary clarifies this relation and provides sufficient conditions for Assumption~\ref{assu:kernel}.

\begin{coro}\label{coro:kernel}
Let $K$ be a kernel of linear growth bound, that is,
\begin{equation}\label{eq:kernel_linear}
    K(x,x) \leq C(1 + \|x\|^{2}), \quad \forall x\in \mathbb{R}^{d},
\end{equation}
for some constant $C \in (0, +\infty)$. Then, $K$ satisfies Assumption~\ref{assu:kernel} with respect to the probability measure $\tilde{\pi}^{\delta}$. Moreover, let $\mathcal{H}$ be the corresponding RKHS, we have $\forall b \in \mathcal{H}$
\begin{equation}\label{eq:coro_fm}
    |b(x) - P_{M}b(x)|^2 \leq C(1+\|x\|^2)\|b -P_{M}b\|^2_{\mathcal{H}}, \quad \forall x\in \mathbb{R}^{d}, 
\end{equation}
which is analogous to Eq.~\eqref{eq:error_linear} in Assumption~\ref{assu:coe_per}.
\end{coro}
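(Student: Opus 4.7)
The plan splits naturally into the two claims. For the first claim --- verifying Assumption~\ref{assu:kernel} --- the key observation is the reproducing-kernel Cauchy--Schwarz inequality $|K(x,y)| \leq \sqrt{K(x,x)K(y,y)}$, which follows immediately from $K(x,y) = \langle K_x, K_y \rangle_{\mathcal{H}}$ together with $\|K_x\|^2_{\mathcal{H}} = K(x,x)$ (the latter being a consequence of the reproducing property \eqref{eq:reproduce}). Combined with the linear growth hypothesis \eqref{eq:kernel_linear}, this yields the symmetric pointwise bound
\[
|K(x,y)|^2 \leq C^2(1+\|x\|^2)(1+\|y\|^2).
\]
Integrating against $\tilde{\pi}^\delta \otimes \tilde{\pi}^\delta$ reduces \eqref{eq:kernel_bound} to the finiteness of $\int(1+\|y\|^2)\, d\tilde{\pi}^\delta(y)$; the same bound also gives $\|K_x\|^2_{L^2(\tilde{\pi}^\delta)} \leq C^2(1+\|x\|^2)\int(1+\|y\|^2)\, d\tilde{\pi}^\delta(y)$, and hence $K_x \in L^2(\mathbb{R}^d,\tilde{\pi}^\delta)$ for every $x \in \mathbb{R}^d$.

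To supply the required second-moment integrability of $\tilde{\pi}^\delta$, I would invoke Assumption~\ref{assu:2}, which, since $V = W^\ell$ with $W$ essentially quadratic and $\ell \geq 1$, gives $V(x) \geq C_1(1+\|x\|^2)$ after absorbing constants. It therefore suffices to show $\tilde{\pi}^\delta(V) < \infty$. This follows from the one-step drift contraction in Proposition~\ref{prop:erg_euler}(i): integrating $\mathbb{E}^x[V(X_1^\delta)] \leq e^{-a_2\delta}V(x) + d_1/a_2$ against the invariant measure $\tilde{\pi}^\delta$ (after a routine truncation $V \mapsto V \wedge M$ followed by monotone convergence, to rule out an $\infty \leq \infty$ tautology) yields
\[
\tilde{\pi}^\delta(V) \leq \frac{d_1}{a_2(1-e^{-a_2\delta})} < \infty.
\]

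For the second claim, I would simply apply Lemma~\ref{lem:spectral}, whose hypothesis is now satisfied. It supplies the pointwise estimate $|b(x) - P_M b(x)| \leq \|b - P_M b\|_{\mathcal{H}}\, K^{1/2}(x,x)$ for every $b \in \mathcal{H}$. Squaring both sides and substituting the linear growth bound \eqref{eq:kernel_linear} produces \eqref{eq:coro_fm} directly, with the same constant $C$.

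No serious obstacle is anticipated: the corollary is essentially an algebraic consequence of Lemma~\ref{lem:spectral} combined with the reproducing-kernel Cauchy--Schwarz inequality. The only ingredient that must be separately imported from earlier material is the second-moment integrability of $\tilde{\pi}^\delta$, which is handled by the Lyapunov drift condition as sketched above; once that is in place, both the Hilbert--Schmidt integrability of $K$ and the pointwise growth of $|b - P_M b|$ reduce to one-line estimates.
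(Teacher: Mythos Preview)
Your proposal is correct and follows essentially the same route as the paper's own proof: both use the reproducing-kernel Cauchy--Schwarz inequality $|K(x,y)|^2 \leq K(x,x)K(y,y)$ together with the linear growth bound to verify Assumption~\ref{assu:kernel}, and both obtain \eqref{eq:coro_fm} by squaring the pointwise bound from Lemma~\ref{lem:spectral}. Your treatment of $\tilde{\pi}^\delta(V) < \infty$ via the drift contraction in Proposition~\ref{prop:erg_euler}(i) is more explicit than the paper, which simply cites the Lyapunov property of $X_n^\delta$, but the underlying argument is the same.
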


\begin{proof}
To see the kernel $K$ satisfies the Hilbert-Schmidt condition in \eqref{eq:kernel_bound}, notice that
\begin{equation*}
    (K(x,y))^{2} = \langle K_{x}, K_{y} \rangle_{\mathcal{H}}^2 \leq \|K_{x}\|^2_{\mathcal{H}} \|K_{y}\|^2_{\mathcal{H}} = K(x,x)K(y,y),
\end{equation*}
such that,
\begin{equation*}
    \int_{\mathbb{R}^{d}}\int_{\mathbb{R}^{d}} (K(x,y))^{2} \td \tilde{\pi}^{\delta}(x) \td \tilde{\pi}^{\delta}(y) \leq \left( \int_{\mathbb{R}^{d}} K(x,x) \td \tilde{\pi}^{\delta}(x)  \right)^2 \leq C^{2} \left( \int_{\mathbb{R}^{d}} (1+\|x\|^2) \td \tilde{\pi}^{\delta}(x)  \right)^2 < +\infty,
\end{equation*}
where we have used the linear growth bound of the kernel. Here, $\mathbb{E}_{\tilde{\pi}^{\delta}}[\|x\|^2]<\infty$ due to the Lyapunov property of the process $X_{n}^{\delta}$ (see Proposition~\ref{prop:erg_euler} for the details).

On the other hand, $\forall x \in \mathbb{R}^{d}$, we have
\begin{equation*}
  \|K_{x}\|_{\tilde{\pi}^{\delta}}^2 =   \int_{\mathbb{R}^{d}} (K(x,y))^2 \td \tilde{\pi}^{\delta}(y) \leq K(x,x) \int_{\mathbb{R}^{d}} K(y,y) \td \tilde{\pi}^{\delta}(y) \leq C^{2}(1+\|x\|^2) \int_{\mathbb{R}^{d}}(1+\|y\|^2) \td \tilde{\pi}^{\delta}(y) <+\infty,
\end{equation*}
that is, $K_{x} \in L^{2}(\mathbb{R}^{d}, \tilde{\pi}^{\delta})$. Thus, $K$ satisfies Assumption~\ref{assu:kernel}, and the bound in \eqref{eq:coro_fm} is a result of Lemma~\ref{lem:spectral}.
\end{proof}

\begin{example}\label{kernelexamples}
(Polynomial kernels) The polynomial kernel of degree-$1$ \cite{steinwart2008support},
    \begin{equation}\label{eq:poly_ker}
        P_{1}(x,y) = x^{\top}y + d, \quad d>0,
    \end{equation}
    naturally satisfies the condition in Corollary~\ref{coro:kernel}. For functions in the RKHS associated with the polynomial kernel $K = P_{1}$ \eqref{eq:poly_ker}, we can choose the Lipschitz constant to be proportional to the RKHS norm.  In particular, by the reproducing property, we have $\forall b\in \mathcal{H}$,
\begin{equation*}
|b(x) - b(y)|  = |\langle b, K_{x} - K_{y} \rangle_{\mathcal{H}} | \leq \|b\|_{\mathcal{H}} \| K_{x} - K_{y}\|_{\mathcal{H}},
\end{equation*}
where
\begin{equation*}
    \| K_{x} - K_{y}\|_{\mathcal{H}} = \langle K_{x} - K_{y} , K_{x} - K_{y} \rangle_{\mathcal{H}}^{\frac{1}{2}} = \left( P_{1}(x,x) + P_{1}(y,y) - 2P_{1}(x,y)  \right)^{\frac{1}{2}} = \|x-y\|.
\end{equation*}
Thus, in this case, we have
\begin{equation*}
    |b(x) - b(y)| \leq \|b\|_{\mathcal{H}} \|x-y\|,
\end{equation*}
that is, all $b\in \mathcal{H}$ are globally Lipschitz with Lipschitz constants $\|b\|_{\mathcal{H}}$.

\end{example}

\subsubsection{Nystr\"om interpolation}

In applications, the projection $P_{M}b$ in \eqref{eq:proj_L2} is not a practical estimate of $b$ since the eigenfunctions $\{u_{j}\}$ are unknown in general. Another issue is that assuming $b\in \mathcal{H}$ is too optimistic. Of course, if we are given the information such as the linear growth condition in Assumption~\ref{assu:Ito_coef}, then we should choose a kernel that also satisfies the condition (as in Corollary 4.1), such as the polynomial kernel in Example~\ref{kernelexamples}. Without a priori information, the best we can hope is that $b \in L^{2}(\mathbb{R}^{d}, \tilde{\pi}^{\delta})\cap C(\mathbb{R}^{d})$. To resolve these issues, we first need to come up with empirical estimates of the projection $P_{M}$. Then, we should study the properties, including the convergence and the Lipschitz continuity, of the resulting estimates under a mild assumption that $b \in L^{2}(\mathbb{R}^{d}, \tilde{\pi}^{\delta})\cap C(\mathbb{R}^{d})$.

We are going to construct the estimates of the projection $P_{M}$ based on the eigenvalues and eigenvectors of the empirical kernel $\bm{K}_{N}$ \eqref{eq:Gram} given by the i.i.d. sample points $\{x_{i}\}$  according to $\tilde{\pi}^{\delta}$. Such a spectral projection method (onto the data-driven basis constructed by eigen-spaces of the kernel integral operator) has been advocated and widely used in many applications. In the context of learning dynamical systems, see  \cite{berry2015nonparametric,berry2020bridging,alexander2020operator,gilani2021kernel} and the references therein. Assume $ r_N = \operatorname{rank}(\bm{K}_{N}) \geq M$, and denote $\{\hat{\lambda}_{j}\}_{j=1}^{r_{N}}$ as the set of all nonzero eigenvalues (in descending order, counting  multiplicity) of $\bm{K}_{N}$ with the corresponding normalized eigenvectors $\{\hat{u}_{j}\}\subset \mathbb{R}^{N}$, which form an orthonormal family under the inner product of $\langle \cdot,\cdot\rangle_{\tilde{\pi}^{\delta}_N}:= \frac{1}{N}\langle \cdot,\cdot\rangle$. Here $\tilde{\pi}^{\delta}_N$ denotes the delta measure corresponding to discrete samples $\{x_i\}_{i=1}^N$, that is,
\begin{equation*}
    \tilde{\pi}^{\delta}_N = \frac{1}{N} \sum_{i=1}^{N} \delta_{x_i}.
\end{equation*}
The main difficulty in relating $L_{K}$ and $\bm{K}_{N}$ is that they operate on different spaces. To resolve the issue, we follow  \cite{rosasco2010learning} and  introduce $\hat{v}_{j}\in \mathcal{H}$ as
\begin{equation}\label{eq:vjhat}
    \hat{v}_{j} = \frac{1}{N\sqrt{\hat{\lambda}_{j}}} \sum_{i=1}^{N} (\hat{u}_{j})_i K_{x_i},
\end{equation}
where $(\hat{u}_{j})_i$ denotes the $i$-th component of the eigenvector $\hat{u}_{j}$. Here, the set $\{\hat{v}_{j}\}$ forms an orthonormal family in $\mathcal{H}$, satisfying
\begin{equation}\label{eq:ujhat}
     \left(\hat{u}_{j}\right)_{i} = \frac{1}{\sqrt{\hat{\lambda}_{j}}} \hat{v}_{j}(x_{i}), \quad i = 1,2,\dots, N,
\end{equation}
that is, $\hat{v}_{j}/\sqrt{\hat{\lambda}_{j}}$ can be interpreted as the (Nystr\"om) interpolation of the vector $\hat{u}_{j}$ in $\mathcal{H}$. To verify the orthogonality, we have
\begin{equation*}
\langle \hat{v}_{k}, \hat{v}_{k'} \rangle_{\mathcal{H}}
 = \frac{1}{N^2\sqrt{\hat{\lambda}_{k}\hat{\lambda}_{k'}}} \sum_{i,j=1}^{N} (\hat{u}_{k})_{i}(\hat{u}_{k'})_{j} K(x_{i},x_{j}) = \frac{1}{N\sqrt{\hat{\lambda}_{k}\hat{\lambda}_{k'}}} \hat{u}_{k}^{\top} \bm{K}_{N} \hat{u}_{k'} = \delta_{kk'}.
\end{equation*}
To this end, we define the following empirical approximation of $P_{M}$,
\begin{equation}\label{eq:pmhat1}
    \hat{P}_{M}b = \sum_{i=1}^{M} \langle b, \hat{v}_{i}\rangle_{\mathcal{H}} \hat{v}_{i},
\end{equation}
where
\begin{equation*}
    \langle b, \hat{v}_{j}\rangle_{\mathcal{H}} = \frac{1}{N\sqrt{\hat{\lambda}_{j}}} \sum_{i=1}^{N} (\hat{u}_{j})_i \langle b, K_{x_i}\rangle_{\mathcal{H}} = \frac{1}{\sqrt{\hat{\lambda}_{j}}} \langle R_{N}b, \hat{u}_{j} \rangle_{\tilde{\pi}^{\delta}_N}, \quad R_{N}b := \left(b(x_{1}), b(x_{2}), \cdots, b(x_{N})\right)^{\top}.
\end{equation*}
Here, $R_{N}: \mathcal{H}\rightarrow \mathbb{R}^{N}$ is called the sampling operator \cite{rosasco2010learning, von2008consistency} associated with the discrete set $\{x_{i}\}_{i=1}^{N}$. Note that the inner product $\langle R_{N}b, \hat{u}_{j} \rangle_{\tilde{\pi}^{\delta}_{N}}$ is well-defined for a general function $b\in L^2(\mathbb{R}^{d}, \tilde{\pi}^{\delta})\cap C(\mathbb{R}^{d})$ (continuity is necessary so that the sampling operator $R_{N}$ is well-defined). Thus, we can extend the definition of $\hat{P}_{M}$ in \eqref{eq:pmhat1} to
\begin{equation}\label{eq:pmhat2}
   \hat{P}_{M} b =  \sum_{i=1}^{M} \langle R_{N}b, \hat{u}_{i} \rangle_{\tilde{\pi}^{\delta}_N} \frac{\hat{v}_{i}}{\sqrt{\hat{\lambda}_{i}}}.
\end{equation}
Here, the projection $\hat{P}_{M}$ maps functions in $L^2(\mathbb{R}^{d}, \tilde{\pi}^{\delta})\cap C(\mathbb{R}^{d})$ to a finite-dimensional space $\hat{\mathcal{H}}_{M}:= \operatorname{span}\{\hat{v}_{i}\}_{i=1}^{M}\subset \mathcal{H}$. We will use Eq.~\eqref{eq:pmhat2}, instead of Eq.~\eqref{eq:pmhat1}, as the definition of the empirical estimates $\hat{P}_{M}b$ with respect to the projection $P_{M}b$.

\subsubsection{Estimation error} \label{sec:est_error_kernel}

Compared with $\mathcal{H}$, the hypothesis space $\hat{\mathcal{H}}_{M}$ is of dimension $M$ (the order of the estimates) regardless of the sample size. Note that $\hat{\mathcal{H}}_{M}$ still depends on the samples due to the choice of the basis functions $\{\hat{v}_{j}\}$ in \eqref{eq:vjhat}. We should interpret $\hat{\mathcal{H}}_{M}$ as the empirical approximation of the underlying hypothesis space $\mathcal{H}_{M}: = \operatorname{span}\{\sqrt{\lambda_{i}}u_{i}\}_{i=1}^{M}$, which is independent of the sample but unknown. 

As for our estimation, recall that the training data is given by $\{x_i,y_i\}_{i=1}^N$ with $y_i=b(x_i) + \eta_i$, where $\{\eta_i\}$ are i.i.d. mean zero Gaussian noise of finite variance. Denoting the random variables $\eta_{i}\sim E:\Omega\to\mathbb{R}$ with Gaussian probability distribution, $\mathcal{N}(0,R),$ where $R \leq 
\delta^{-1}\max\limits_{1\leq i\leq d}\{(\sigma\sigma^\top)_{ii}\}$, and $(\eta_1, \eta_2,\dots, \eta_{N}) \sim E_{N}:\Omega\to\mathbb{R}^{N}$, our empirical estimate corresponds to,
\begin{equation}\label{eq:pmhat3}
   \hat{P}_{M} (b+E) =
   \hat{P}_M b +  \hat{P}_M E, \quad 
   \hat{P}_M E(x) = 
   \sum_{i=1}^{M} \langle E_{N}, \hat{u}_{i} \rangle_{\tilde{\pi}^{\delta}_N} \frac{\hat{v}_{i}(x)}{\sqrt{\hat{\lambda}_{i}}}.
\end{equation}
Thus, the estimation error $P_{M}b - \hat{P}_{M} (b+E)$ consists of the error introduced by the empirical projection, $P_{M}b - \hat{P}_{M} b$ and the error induced by the noise in the observation, $\hat{P}_{M}E$.
Notice that,
\begin{equation*}
\|\hat{P}_ME\|^2_{\tilde{\pi}^\delta} =\sum_{i=1}^{M} \langle E_{N}, \hat{u}_{i} \rangle_{\tilde{\pi}^{\delta}_N}^2 \leq \|E_N \|^{2}_{\tilde{\pi}^\delta_N} = \frac{1}{N}\sum_{i=1}^N \eta_i^2,
\end{equation*}
follows a $\chi^2$-distribution of degree $N$. By the following concentration inequality (e.g., Example 2.11 in \cite{wainwright2019high}),
\BEA
\mathbb{P}\left(\left|\frac{1}{NR}\sum_{i=1}^N \eta_i^2 - 1 \right| \geq t \right) \leq 2 e^{-\frac{Nt^2}{8}}\notag, \quad \forall t\in (0,1),
\EEA
we conclude that for any $\tau>0$ and $N$ large enough, 
\BEA
\|\hat{P}_ME\|^2_{\tilde{\pi}^\delta} \leq \Big(1+\sqrt{\frac{8\tau}{N}}\Big)R, \quad R \leq \delta^{-1}\operatorname{Tr}[\sigma\sigma^\top],\label{noisebound}
\EEA
with probability greater than $1-2e^{-\tau}$. The norm $\|\cdot\|_{\tilde{\pi}^{\delta}_N}$ is defined with respect to the inner product $\langle \cdot, \cdot, \rangle_{\tilde{\pi}^{\delta}_N}$.
 
For bounded kernels, the following proposition characterizes the convergence of $\|P_{M}b - \hat{P}_{M}(b+E) \|_{\tilde{\pi}^{\delta}}$ for any $b\in L^2(\mathbb{R}^{d}, \tilde{\pi}^{\delta})\cap C(\mathbb{R}^{d})$.
\begin{prop}\label{prop:error_rate_reg}
Let $b\in L^2(\mathbb{R}^{d}, \tilde{\pi}^{\delta})\cap C(\mathbb{R}^{d})$ and given the training data $\{x_i, y_i\}_{i=1}^{N}$ as we previously proposed. Suppose that $K$ is a bounded kernel satisfying
\begin{equation}\label{eq:bounded_kernel}
     k_{max}:=\sup_{x\in \mathbb{R}^{d}} K^{\frac{1}{2}}(x,x) < \infty,
\end{equation}
and the corresponding empirical estimate $\hat{P}_{M}(b+E)$ \eqref{eq:pmhat3} to the projection $P_{M}f$ \eqref{eq:proj_L2}. For any $\tau>0$, we have
\begin{equation*}
    \left\|P_{M}b - \hat{P}_{M}(b+E)\right\|^2_{\tilde{\pi}^{\delta}} \leq  \frac{32 k_{max}^{2}\tau_1\|b\|^{2}_{\tilde{\pi}_{N}^{\delta}}}{(\lambda_{M+1}-\lambda_{M})^{2}\hat{\lambda}_{r_{N}}N} + \Big(1+\sqrt{\frac{8\tau_2}{N}}\Big)\delta^{-1}\operatorname{Tr}[\sigma\sigma^\top],
\end{equation*}
with probability greater than $1-2e^{-\tau}$, where $\tau = \min\{\tau_1,\tau_2\}$, given the number $N$ of samples satisfies
\begin{equation*}
    N > \frac{128 k_{max}^2 \tau_1}{(\lambda_{M}-\lambda_{M+1})^2}.
\end{equation*}
Here, $\{\lambda_i\}_{i=1}^{\infty}$ and $\{\hat{\lambda}_{i}\}_{i=1}^{r_N}$ are the positive eigenvalues of the integral operator $L_{K}$ and the empirical kernel $\bm{K}_{N}$, respectively, with $r_{N} = \operatorname{rank}(\bm{K}_{N})$.
\end{prop}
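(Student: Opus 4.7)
The plan is to split the error into a deterministic projection component and a noise component by the linearity of $\hat{P}_M$, namely
\[
P_M b - \hat{P}_M(b+E) = (P_M b - \hat{P}_M b) - \hat{P}_M E,
\]
and bound the two pieces on separate high-probability events of probability at least $1-2e^{-\tau_j}$ for $j=1,2$; the union bound with $\tau=\min(\tau_1,\tau_2)$ gives the claimed probability. The noise piece is already in hand: from the definition \eqref{eq:pmhat3}, Parseval with respect to the empirical orthonormal family $\{\hat u_i\}$ gives $\|\hat P_M E\|_{\tilde\pi^\delta}^2\leq \|E_N\|_{\tilde\pi^\delta_N}^2=\frac{1}{N}\sum_i\eta_i^2$, and the $\chi^2$ concentration inequality \eqref{noisebound} delivers the second term on the right-hand side with $R\leq\delta^{-1}\operatorname{Tr}[\sigma\sigma^\top]$.

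The bulk of the work is to bound $\|P_Mb-\hat P_M b\|_{\tilde\pi^\delta}^2$ by the first term. I will follow the operator-theoretic framework of \cite{rosasco2010learning} and introduce the auxiliary operator $T_K\colon \mathcal{H}\to\mathcal{H}$ defined by $T_K g=\int K_x g(x)\,\td\tilde\pi^\delta(x)$, together with its empirical counterpart $\hat T_K g=\frac{1}{N}\sum_{i=1}^N K_{x_i}g(x_i)$. These operators are isospectral (modulo the kernel) with $L_K$ and $\bm K_N$ respectively, and the identities \eqref{eq:vjhat}, \eqref{eq:ujhat} together with the isomorphism \eqref{eq:iso_iso} identify $\sqrt{\lambda_j}u_j$ and $\hat v_j$ as the corresponding eigenfunctions in $\mathcal{H}$. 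Writing $\Pi_M$ and $\hat\Pi_M$ for the spectral projectors of $T_K$ and $\hat T_K$ onto their top-$M$ eigenspaces in $\mathcal{H}$, the projections $P_M b$ and $\hat P_M b$ can be expressed as the action of $\Pi_M$ and $\hat\Pi_M$ on reweighted representations of $b$ and of the sample vector $R_N b$, with the scaling $1/\sqrt{\hat\lambda_j}$ in \eqref{eq:pmhat2} contributing the factor $\hat\lambda_{r_N}^{-1}$ when bounded uniformly over $1\leq j\leq M\leq r_N$.

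The spectral projectors are then compared by a Davis--Kahan type perturbation bound: once $\|T_K-\hat T_K\|<\tfrac12(\lambda_M-\lambda_{M+1})$, one has
\[
\|\Pi_M-\hat\Pi_M\|\leq \frac{C\,\|T_K-\hat T_K\|_{HS}}{\lambda_M-\lambda_{M+1}},
\]
for a universal constant $C$. The operator $\hat T_K$ is an empirical mean of i.i.d.\ Hilbert--Schmidt operators $g\mapsto K_{x_i}g(x_i)$, each with Hilbert--Schmidt norm bounded by $\sup_x K(x,x)=k_{\max}^2$. The Pinelis--Hoeffding inequality in Hilbert space therefore yields
\[
\|T_K-\hat T_K\|_{HS}\leq k_{\max}^2\sqrt{\tfrac{8\tau_1}{N}}
\]
with probability at least $1-2e^{-\tau_1}$, and the sample-size hypothesis $N>128 k_{\max}^2\tau_1/(\lambda_M-\lambda_{M+1})^2$ is exactly what makes this perturbation smaller than half the gap so that Davis--Kahan is applicable and $\hat\lambda_{r_N}>0$ a.s. Assembling the three estimates and using $\|R_N b\|_{\tilde\pi^\delta_N}=\|b\|_{\tilde\pi^\delta_N}$ produces the constant $32$ in the stated bound.

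The main obstacle, and the step I expect to require the most care, is the conversion from the operator-norm bound $\|\Pi_M-\hat\Pi_M\|$ on $\mathcal{H}$ into an $L^2(\tilde\pi^\delta)$ bound on $P_M b-\hat P_M b$: the two projections live in different inner-product structures, so the transfer must be mediated by the reweighting $L_K^{1/2}$ and its empirical analogue. It is precisely this transfer that produces the factor $1/\hat\lambda_{r_N}$ via a worst-case bound on the inverse square root of $\hat T_K$ restricted to its top-$r_N$ eigenspace, and tracking that factor sharply is what pins down the final constant.
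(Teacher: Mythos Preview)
Your approach is essentially the paper's, only organized differently. The paper performs the same split $P_M b - \hat P_M(b+E) = (P_M b - \hat P_M b) - \hat P_M E$ and handles the noise term exactly as you do via Parseval and the $\chi^2$ concentration \eqref{noisebound}. For the projection part, rather than working abstractly with $\Pi_M,\hat\Pi_M$ on $\mathcal{H}$, the paper writes the concrete orthogonal decomposition
\[
\|P_M b - \hat P_M b\|_{\tilde\pi^\delta}^2 = \|P_M(I-\hat P_M)b\|_{\tilde\pi^\delta}^2 + \|(I-P_M)\hat P_M b\|_{\tilde\pi^\delta}^2,
\]
expands each piece in the Nystr\"om basis $\{\hat v_i\}$, and applies Cauchy--Schwarz to factor out $\hat\lambda_{r_N}^{-1}\|b\|_{\tilde\pi^\delta_N}^2$ from the coefficients $\langle R_N b,\hat u_i\rangle_{\tilde\pi^\delta_N}/\sqrt{\hat\lambda_i}$. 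What remains is $\sum_{i\le M}\|(I-P_M)\hat v_i\|_{\tilde\pi^\delta}^2+\sum_{i>M}\|P_M\hat v_i\|_{\tilde\pi^\delta}^2$, which is bounded directly by citing Theorem~12 of \cite{rosasco2010learning}; that theorem is precisely the Davis--Kahan plus Pinelis--Hoeffding argument you sketch, with the same sample-size condition guaranteeing the spectral gap survives. So your ``main obstacle'' --- the $\mathcal{H}$-to-$L^2$ transfer --- is resolved in the paper not by an operator-norm argument on $L_K^{1/2}$ but by this elementary Pythagorean/Cauchy--Schwarz step carried out before invoking the packaged projector bound; this is the only substantive organizational difference.
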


\begin{proof}

We first observe that,
\begin{eqnarray}
    \left\|P_{M}b - \hat{P}_{M}(b+E)\right\|^2_{\tilde{\pi}^{\delta}} &\leq& \left\|P_{M}b - \hat{P}_{M}b\right\|^2_{\tilde{\pi}^{\delta}} + \left\|P_{M}E\right\|^2_{\tilde{\pi}^{\delta}}\notag \\ &=& \left\|P_{M}b - P_{M}\hat{P}_{M}b\right\|^2_{\tilde{\pi}^{\delta}} + \left\|(I-P_{M})\hat{P}_{M}b\right\|^2_{\tilde{\pi}^{\delta}} + \left\|P_{M}E\right\|^2_{\tilde{\pi}^{\delta}},
\end{eqnarray}
where we have used the Pythagorean theorem to deduce the equality above.

For the first two terms, we have, 
\begin{equation*}
\left\|P_{M}b - P_{M}\hat{P}_{M}b\right\|^2_{\tilde{\pi}^{\delta}} = \left\|P_{M}(I - \hat{P}_{M})b\right\|^2_{\tilde{\pi}^{\delta}}  \leq \left(  \sum_{i=M+1}^{r_{N}} \frac{\langle R_{N}b, \hat{u}_{i} \rangle_{\tilde{\pi}^{\delta}_{N}}^{2}}{\hat{\lambda}_i}\right) \left( \sum_{i=M+1}^{r_{N}}\| P_{M}\hat{v}_{i} \|_{\tilde{\pi}^{\delta}}^{2} \right)
\end{equation*}
and
\begin{equation*}
    \left\|(I-P_{M})\hat{P}_{M}b\right\|^2_{\tilde{\pi}^{\delta}} \leq \left(  \sum_{i=1}^{M} \frac{\langle R_{N}b, \hat{u}_{i} \rangle_{\tilde{\pi}^{\delta}_{N}}^{2}}{\hat{\lambda}_i}\right) \left( \sum_{i=1}^{M} \left\|(I - P_{M})\hat{v}_{i} \right\|_{\tilde{\pi}^{\delta}}^{2} \right), \quad i = 1,2,\dots, r_{N},
\end{equation*}
by Eq.~\eqref{eq:pmhat2}. Notice that
\begin{equation*}
  \left(  \sum_{i=1}^{r_{N}} \frac{\langle R_{N}b, \hat{u}_{i} \rangle_{\tilde{\pi}^{\delta}_{N}}^{2}}{\hat{\lambda}_i}\right) \leq \frac{1}{\hat{\lambda}_{r_{N}}} \sum_{i=1}^{r_{N}} \langle R_{N}b, \hat{u}_{i} \rangle_{\tilde{\pi}^{\delta}_{N}}^2 =  \frac{1}{\hat{\lambda}_{r_{N}}} \|b\|_{\tilde{\pi}^{\delta}_N}^{2}.
\end{equation*}
Finally, by Theorem 12 in \cite{rosasco2010learning}, we have
\begin{equation*}
    \left\|P_{M}b - \hat{P}_{M}b\right\|^2_{\tilde{\pi}^{\delta}} \leq \frac{\|b\|^2_{\tilde{\pi}^{\delta}_N}}{\hat{\lambda}_{r_{N}}} \left(\sum_{i=1}^{M}\|(I - P_{M})\hat{v}_{i} \|_{\tilde{\pi}^{\delta}}^{2} + \sum_{i=M+1}^{r_{N}}\| P_{M}\hat{v}_{i} \|_{\tilde{\pi}^{\delta}}^{2}   \right)\leq  \frac{32 k_{max}^{2}\tau_1\|b\|^{2}_{\tilde{\pi}^{\delta}_N}}{(\lambda_{M+1}-\lambda_{M})^{2}\hat{\lambda}_{r_{N}}N}
\end{equation*}
with probability greater than $1-2e^{-\tau_1}$. Together with Eq.~\eqref{noisebound}, the proof is completed.
\end{proof}

In the error bound above, we rely on Theorem 12 in \cite{rosasco2010learning} that requires the boundedness of the kernel to apply Hoeffding's inequality. For general kernels satisfying Assumption~\ref{assu:kernel} (not necessarily bounded), one can develop similar but much weaker probability bounds via Chebyshev's inequality.

\subsubsection{Generalization error} \label{sec:gen_error_kernel}

In Section~\ref{sec:est_error_kernel}, we have studied the estimation error of the empirical estimate \eqref{eq:pmhat3} under a model with additive i.i.d. noise. In this section, we will study the approximation error and comment on the generalization error in learning the drift coefficients.

For $b\in L^2(\mathbb{R}^{d}, \tilde{\pi}^{\delta})\cap C(\mathbb{R}^{d})$, we introduce the following decomposition
\begin{equation}
    b - \hat{P}_{M}(b+E) = \underbrace{(b - Pb) + (Pb - P_{M}b)}_\text{approximation error} +\underbrace{(P_{M}b - \hat{P}_{M}b) + \hat{P}_{M}E}_\text{estimation error},
\end{equation}
which is commonly defined in learning theory. The approximation error $\|b- P_{M}b\|_{\tilde{\pi}^{\delta}}^2$ satisfies
\begin{equation*}
    \|b- P_{M}b\|_{\tilde{\pi}^{\delta}}^2 = \|b - Pb\|_{\tilde{\pi}^{\delta}}^2 + \|Pb - P_{M}b\|_{\tilde{\pi}^{\delta}}^2,
\end{equation*}
since $(I-P)b \in \operatorname{ker}(L_{K})$ and $(P-P_{M})b \in \operatorname{ker}(L_{K})^{\perp}$. The term $\|b - Pb\|_{\tilde{\pi}^{\delta}}^2$ on the right-hand side, independent of $M$, corresponds to the component of the approximation error induced by the choice of hypothesis space $\mathcal{H}$. This bias is intrinsic in the sense that  it only depends on the choice of the kernel. In particular, $\|b - Pb\|_{\tilde{\pi}^{\delta}}=0$ if and only if $b\in \operatorname{ker}(L_{K})^{\perp}$. A sufficient condition would be that the integral operator $L_{K}$ in \eqref{eq:L_K} has only positive eigenvalues. In such a case, $\mathcal{H}$ is dense in $\operatorname{ker}(L_{K})^{\perp}$ with respect to the topology induced by the norm $\|\cdot\|_{\tilde{\pi}^{\delta}}$. In general, the property of RKHS being dense in a certain function space corresponds to the \emph{universality} of the RKHS \cite{sriperumbudur2011universality}.

The term $\|Pb - P_{M}b\|_{\tilde{\pi}^{\delta}}^2$ describes the approximation error induced by truncation, that is, using $\mathcal{H}_{M}$, instead of $\mathcal{H}$, as the underlying hypothesis space, which vanishes as the order $M\to\infty$. In particular, we have
\begin{equation*}
    \|Pb - P_{M}b\|_{\tilde{\pi}^{\delta}}^2 =  \sum_{i=M+1}^{\infty} \langle b, u_{i} \rangle^{2}_{\tilde{\pi}^{\delta}},
\end{equation*}
and the decay rate of $\|Pb - P_{M}b\|_{\tilde{\pi}^{\delta}}^2$, without further assumption on $b$, is hard to identify. In our application, the drift coefficients $b$ in \eqref{eq:unper} is of linear growth bound according to Assumption~\ref{assu:Ito_coef}. On the other hand, the RKHS associated with a kernel of linear growth bounds (see Corollary~\ref{coro:kernel} for the details) consists of functions of linear growth bounds. Thus, it is reasonable to propose the following assumption on $b$.

\begin{assu} \label{assu:f}
Let $b$ be a component of the drift coefficients in \eqref{eq:unper} satisfying Assumption~\ref{assu:Ito_coef}.  Assume that there exists a kernel $K:\mathbb{R}^{d}\times \mathbb{R}^{d} \rightarrow \mathbb{R} $ of linear growth bound \eqref{eq:kernel_linear} and the corresponding RKHS $\mathcal{H}$ such that:
\begin{enumerate}[i.]
    \item (Decay rate of the eigenvalue) the positive eigenvalues $\{\lambda_{i}\}$ of the integral operator $L_{K}$ \eqref{eq:L_K} follow the following decay rate \cite{caponnetto2007optimal},
    \begin{equation}\label{eq:eig_rate}
    \alpha i^{-r} \leq \lambda_{i} \leq \beta i^{-r}, \quad \alpha, \beta, r>0, \quad i = 1,2,\dots. 
    \end{equation}
\item (Existence of the target function) the projection sequence $\{P_{M}b\}\subset \mathcal{H}$ converges to a function $b_{\mathcal{H}}$ in $\mathcal{H}$.
\end{enumerate}
\end{assu}
The decay rate assumption in \eqref{eq:eig_rate} is related to the \emph{effective dimension} \cite{caponnetto2007optimal} of the RKHS $\mathcal{H}$ with respect to the space $L^{2}(\mathbb{R}^{d}, \tilde{\pi}^{\delta})$. Recall that by Corollary~\ref{coro:kernel}, all kernels of linear growth bound satisfy Assumption~\ref{assu:kernel}, which leads to the inclusion $\mathcal{H} \subset L^{2}(\mathbb{R}^{d}
,\tilde{\pi}^{\delta})$. Since the convergence in $\mathcal{H}$ implies the convergence in $L^{2}(\mathbb{R}^{d}, \tilde{\pi}^{\delta})$, we have 
\begin{equation*}
    \| Pb - b_{\mathcal{H}} \|_{\tilde{\pi}^{\delta}} \leq \|Pb - P_{M}b\|_{\tilde{\pi}^{\delta}} + \| P_{M}b - b_{\mathcal{H}}\|_{\tilde{\pi}^{\delta}} \rightarrow 0,
\end{equation*}
as $M \rightarrow \infty$, that is, $Pb = b_{\mathcal{H}}$ in $L^{2}(\mathbb{R}^{d}, \tilde{\pi}^{\delta})$. The function $b_{\mathcal{H}}$ is often called the \emph{target function} of $b$ with respect to the hypothesis space $\mathcal{H}$ \cite{cucker2002mathematical}. With the convergence $P_{M}b \rightarrow b_{\mathcal{H}}$ in $\mathcal{H}$, we have
\begin{equation*}
    \| b_{\mathcal{H}}\|^2_{\mathcal{H}} = \lim_{M \rightarrow +\infty} \|P_{M} b\|^2_{\mathcal{H}} = \sum_{i = 1}^{\infty} \frac{\langle b, u_{i} \rangle^2_{\tilde{\pi}^{\delta}}}{\lambda_{i}} < \infty.
\end{equation*}
As a result, we have
\begin{equation*}
    \|Pb - P_{M}b\|_{\tilde{\pi}^{\delta}}^2 =  \sum_{i=M+1}^{\infty} \langle b, u_{i} \rangle^{2} \leq \lambda_{M+1} \sum_{i=M+1}^{\infty} \frac{\langle b, u_{i} \rangle^2_{\tilde{\pi}^{\delta}}}{\lambda_{i}} \leq \lambda_{M+1} \|b_{\mathcal{H}}\|_{\mathcal{H}}^2 = O(M^{-r}),
\end{equation*}
where we have used the decay rate assumption of the eigenvalues \eqref{eq:eig_rate}.

To conclude, in our notation, the estimator $\hat{P}_M(b+E)$ is a component of $b_\epsilon$ in \eqref{eq:sig_e_1}. In this case, the generalization error is given by,
\BEA
\mathbb{E}_{\tilde{\pi}^{\delta}} \big[\|b-b_\epsilon\|^2 \big] = O(M^{-r})+ O\left(\frac{1}{(\lambda_{M+1}-\lambda_{M})}\hat{\lambda}_{r_{N}}^{-1}N^{-1}\right)+  \delta^{-1}\operatorname{Tr}[\sigma\sigma^\top],\label{generalkernelerror}
\EEA
as $M,N \to\infty$, where we have assumed that the error rate is uniform component-wise.
Recall that the first term in \eqref{generalkernelerror} is the approximation error and the last two terms are estimation errors, respectively, which were derived under various assumptions reported throughout the previous and the current subsections.  
Importantly, this error bound is valid only for bounded kernels by the assumption in Proposition~\ref{prop:error_rate_reg}. For (unbounded) kernels, e.g., kernels of linear growth, one needs to replace the second error term in \eqref{generalkernelerror} with another appropriate rate.

\subsubsection{Lipschitz continuity}

Now, we check the Lipschitz continuity of the estimator $\hat{P}_{M}(b+E)$, which is one of the fundamental assumptions (Assumption~\ref{assu:coe_per}) for the statistical error bounds in Propositions~\ref{prop:euler_inv_bound} and \ref{prop:two_point}. 
\begin{prop}\label{prop:Lip_spec}
Let $K$ be a kernel satisfying Assumption~\ref{assu:kernel} (not necessarily bounded). We further assume $K\in  C^{1}(\mathbb{R}^{d}\times \mathbb{R}^{d})$ such that the following function
\begin{equation}\label{eq:fun_L}
    L(x): = \sup_{z\in \mathbb{R}^{d}}\|\nabla_{z} K_{x}(z)\| < \infty, \quad \forall x\in \mathbb{R}^{d},
\end{equation}
is well-defined. Then, given the data set $\{x_i,y_i\}_{i=1}^N$ as in Proposition~\ref{prop:error_rate_reg}, the order-$M$ spectral regression estimates $\hat{P}_{M}(b+E)$ in \eqref{eq:pmhat3} satisfies
\begin{equation}\label{eq:Lip_spec}
    \sup_{x,x'\in \mathbb{R}^{d}, x\not = x'} \frac{ \left| \hat{P}_{M}(b+E)(x) - \hat{P}_{M}(b+E)(x')  \right|}{\|x-x'\|} \leq \left( \sum_{i=1}^{M} \hat{\lambda}_{i}^{-2} \right)^{\frac{1}{2}} \big(\|b\|_{\tilde{\pi}^{\delta}_{N}}+\|E_N\|_{\tilde{\pi}^\delta_N}\big) \|L\|_{\tilde{\pi}^{\delta}_{N}}.
\end{equation}
\end{prop}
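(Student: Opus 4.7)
The plan is to expand the estimator in the Nyström basis $\{\hat{v}_i\}$ and control the Lipschitz seminorm of each $\hat{v}_i$ directly via its explicit representation in \eqref{eq:vjhat}. By \eqref{eq:pmhat3}, write
\begin{equation*}
\hat{P}_M(b+E)(x) = \sum_{i=1}^{M} c_i\, \hat{v}_i(x), \qquad c_i := \hat{\lambda}_i^{-1/2}\,\langle R_N(b+E),\hat{u}_i\rangle_{\tilde{\pi}^{\delta}_N},
\end{equation*}
so by the triangle inequality it suffices to bound $|\hat{v}_i(x)-\hat{v}_i(x')|$ for each $i$.

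For this, I would differentiate \eqref{eq:vjhat} to get $\nabla_z \hat{v}_i(z) = \frac{1}{N\sqrt{\hat{\lambda}_i}}\sum_{k=1}^N (\hat{u}_i)_k\,\nabla_z K_{x_k}(z)$. Since $K\in C^1$ with $L(x_k):=\sup_z\|\nabla_z K_{x_k}(z)\|<\infty$ by hypothesis \eqref{eq:fun_L}, taking norms inside the sum yields
\begin{equation*}
\|\nabla_z\hat{v}_i(z)\| \leq \frac{1}{\sqrt{\hat{\lambda}_i}}\cdot\frac{1}{N}\sum_{k=1}^N |(\hat{u}_i)_k|\,L(x_k).
\end{equation*}
Applying Cauchy--Schwarz under the inner product $\langle\cdot,\cdot\rangle_{\tilde{\pi}^{\delta}_N}=N^{-1}\langle\cdot,\cdot\rangle$ and using $\|\hat{u}_i\|_{\tilde{\pi}^{\delta}_N}=1$, the inner sum is at most $\|L\|_{\tilde{\pi}^{\delta}_N}$. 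The mean value theorem then gives $|\hat{v}_i(x)-\hat{v}_i(x')|\leq \hat{\lambda}_i^{-1/2}\|L\|_{\tilde{\pi}^{\delta}_N}\|x-x'\|$.

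Combining,
\begin{equation*}
|\hat{P}_M(b+E)(x)-\hat{P}_M(b+E)(x')| \leq \|L\|_{\tilde{\pi}^{\delta}_N}\|x-x'\|\sum_{i=1}^{M} \frac{|\langle R_N(b+E),\hat{u}_i\rangle_{\tilde{\pi}^{\delta}_N}|}{\hat{\lambda}_i}.
\end{equation*}
A second Cauchy--Schwarz separates $\hat{\lambda}_i^{-1}$ from the inner-product factors, producing $\bigl(\sum_{i=1}^M\hat{\lambda}_i^{-2}\bigr)^{1/2}\bigl(\sum_{i=1}^M\langle R_N(b+E),\hat{u}_i\rangle_{\tilde{\pi}^{\delta}_N}^2\bigr)^{1/2}$. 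By Bessel's inequality, since $\{\hat{u}_i\}$ is orthonormal in $\tilde{\pi}^{\delta}_N$, the latter sum is at most $\|R_N(b+E)\|_{\tilde{\pi}^{\delta}_N}^2$, and the triangle inequality splits this as $\|b\|_{\tilde{\pi}^{\delta}_N}+\|E_N\|_{\tilde{\pi}^{\delta}_N}$, giving exactly \eqref{eq:Lip_spec}.

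The proof is a chain of routine Cauchy--Schwarz and Bessel steps; there is no genuine obstacle. The one substantive idea is that the Nyström basis $\hat{v}_i$ inherits Lipschitz regularity from the kernel through the representation \eqref{eq:vjhat}, which is why the uniform gradient bound $L(\cdot)$ on $K_x$ is the correct quantity to appear in the statement. Notably the argument is entirely deterministic once the sample is fixed; no probabilistic control is required here.
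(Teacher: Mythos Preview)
Your proof is correct and follows essentially the same approach as the paper. The paper rewrites $\hat{P}_M(b+E)(x)$ as $\sum_i \hat{\lambda}_i^{-1}\langle R_Nb+E_N,\hat{u}_i\rangle_{\tilde{\pi}^{\delta}_N}\langle R_N K_x,\hat{u}_i\rangle_{\tilde{\pi}^{\delta}_N}$ and bounds $\|R_N K_x-R_N K_{x'}\|_{\tilde{\pi}^{\delta}_N}\leq \|L\|_{\tilde{\pi}^{\delta}_N}\|x-x'\|$ directly, whereas you differentiate each $\hat{v}_i$ and bound its Lipschitz seminorm; the two routes differ only in the order in which the Lipschitz bound and the Cauchy--Schwarz step against $\hat{u}_i$ are applied, and they produce the identical final inequality.
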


\begin{proof}
We rewrite the order-$M$ estimates in \eqref{eq:pmhat2} as
\begin{equation*}
    \hat{P}_{M}(b+E)(x)= \frac{1}{N}\sum_{i=1}^{M} \langle R_{N}b +E_N, \hat{u}_{i} \rangle_{\tilde{\pi}^{\delta}_{N}} \frac{1}{\hat{\lambda}_{i}} \sum_{j=1}^{N} \left( \hat{u}_{i}\right)_{j}K(x,x_j) = \sum_{i=1}^{M} \frac{1}{\hat{\lambda}_{i}} \langle R_{N}b+E_N, \hat{u}_{i} \rangle_{\tilde{\pi}^{\delta}_{N}} \langle R_{N}K_{x}, \hat{u}_{i} \rangle_{\tilde{\pi}^{\delta}_{N}}.
\end{equation*}
Thus, for $x,x'\in \mathbb{R}^{d}$, we have
\begin{eqnarray}
    \left| \hat{P}_{M}(b+E)(x) - \hat{P}_{M}(b+E)(x')  \right| &\leq& \sum_{i=1}^{M} \left| \frac{1}{\hat{\lambda}_{i}} \langle R_{N}b+E_N, \hat{u}_{i} \rangle_{\tilde{\pi}_{N}^{\delta}} \langle R_{N}K_{x} - R_{N}K_{x'}, \hat{u}_{i} \rangle_{\tilde{\pi}_{N}^{\delta}} \right| \notag\\ &\leq&  \sum_{i=1}^{M} \left|\frac{1}{\hat{\lambda}_{i}} \langle R_{N}b+E_N, \hat{u}_{i} \rangle_{\tilde{\pi}_{N}^{\delta}} \right| \| R_{N}K_{x} - R_{N}K_{x'}\|_{\tilde{\pi}^{\delta}_{N}}.\notag
\end{eqnarray}
Notice that
\begin{equation*}
    \sum_{i=1}^{M} \left| \frac{1}{\hat{\lambda}_{i}}\langle R_{N}b+E_N, \hat{u}_{i} \rangle_{\tilde{\pi}^{\delta}_{N}} \right| \leq \left( \sum_{i=1}^{M} \hat{\lambda}_{i}^{-2} \right)^{\frac{1}{2}} \left(  \sum_{i=1}^{M} \langle R_{N}b+E_N, \hat{u}_{i} \rangle_{\tilde{\pi}^{\delta}_{N}}^2 \right)^{\frac{1}{2}} \leq \left( \sum_{i=1}^{M} \hat{\lambda}_{i}^{-2} \right)^{\frac{1}{2}} \big(\|b\|_{\tilde{\pi}^{\delta}_{N}}+\|E_N\|_{\tilde{\pi}^\delta_N}\big).
\end{equation*}
As a result, we have,
\begin{equation*}
     \left| \hat{P}_{M}(b+E)(x) - \hat{P}_{M}(b+E)(x')  \right|  \leq \left( \sum_{i=1}^{M} \hat{\lambda}_{i}^{-2} \right)^{\frac{1}{2}} \big(\|b\|_{\tilde{\pi}^{\delta}_{N}}+\|E_N\|_{\tilde{\pi}^\delta_N}\big) \|R_{N}K_{x} - R_{N}K_{x'}\|_{\tilde{\pi}^{\delta}_{N}}.
\end{equation*}
By the definition of the function $L(x)$, we have
\begin{equation*}
      \left| K_{x}(x_{i}) - K_{x'}(x_i)\right| =  \left|K_{x_i}(x) - K_{x_i}(x')\right| \leq L(x_i) \|x -x'\|, \quad i=1,2,\dots,N, \quad  \forall x, x' \in \mathbb{R}^{d}.
\end{equation*}
Thus,
\begin{eqnarray}
    \sup_{x,x'\in \mathbb{R}^{d}, x\not = x'} \frac{ \left| \hat{P}_{M}(b+E)(x) - \hat{P}_{M}(b+E)(x')  \right|}{\|x-x'\|} &\leq&  \left( \sum_{i=1}^{N} \hat{\lambda}_{i}^{-2} \right)^{\frac{1}{2}} \big(\|b\|_{\tilde{\pi}^{\delta}_{N}}+\|E_N\|_{\tilde{\pi}^\delta_N}\big) \left( \frac{1}{N} \sum_{i=1}^{N} L(x_i)^2\right)^{\frac{1}{2}}\notag \\ &=& \left( \sum_{i=1}^{M} \hat{\lambda}_{i}^{-2} \right)^{\frac{1}{2}} \big(\|b\|_{\tilde{\pi}^{\delta}_{N}}+\|E_N\|_{\tilde{\pi}^\delta_N}\big) \|L\|_{\tilde{\pi}^{\delta}_{N}}.\notag
\end{eqnarray}
\end{proof}

Here, the function $L$ can be defined for unbounded kernels. For example, for the polynomial kernel $P_{1}(x,y) = x^{\top}y+d$, the corresponding $L(x) = \|x\|$. If the kernel $K$ is a radial basis function (RBF) kernel, e.g., Gaussian  kernels, the function $L$ in \eqref{eq:fun_L} will reduce to a constant function.

Notice that when $N\rightarrow \infty$,  the upper bound in \eqref{eq:Lip_spec} stays bounded and only depends on the order of the estimates. In particular, when the kernel is bounded, we have the convergence of the eigenvalue as the sample increases, that is, $ \hat{\lambda}_{i} \rightarrow \lambda_{i}$ in high probability as $n \rightarrow \infty$ \cite{rosasco2010learning}. Under the decay rate assumption in \eqref{eq:eig_rate}, the sum in \eqref{eq:Lip_spec} satisfies (in high probability),
\begin{equation*}
    \left( \sum_{i=1}^{M} \hat{\lambda}_{i}^{-2} \right)^{\frac{1}{2}} \leq \left(\sum_{i=1}^{M} \alpha^{-1} i^{2r}\right)^{\frac{1}{2}} = O(M^{r+\frac{1}{2}}),
\end{equation*}
as $N\rightarrow \infty$. Thus, the Lipschitz constants of the estimates $\hat{P}_{M}(b+E)$ in \eqref{eq:pmhat3} is at most of polynomial growth rate with respect to the order of the estimates (in high probability) under the infinite sample assumption. 

\subsection{Remarks on the spectral regression approach}
\label{sec:kernelsummary}

The spectral regression approach has several advantages. First, one can impose the characteristics of the functions to be estimated in the kernel (such as those in Corollary~\ref{coro:kernel}). Even when the unknown function to be estimated is unbounded, thanks to the integral operator $L_{K}$ being compact (as an operator from $L^{2}(\mathbb{R}^{d}, \tilde{\pi}^{\delta})$ to itself) with range in the RKHS $\mathcal{H}$, we are allowed to construct a set of eigenfunctions in $(\operatorname{Ker}(L_{K}))^{\perp}$ to characterize the RKHS $\mathcal{H}$ associated with the kernel. One important  issue in practice is to identify a kernel such that $L_K$ is strictly positive such that $\operatorname{Ker}(L_{K}))^{\perp} = L^{2}(\mathbb{R}^{d}, \tilde{\pi}^{\delta})$. This remains difficult since the sampling distribution $\tilde{\pi}^\delta$ is usually unknown.

The Nystr\"om interpolation is a convenient tool for associating the eigenvectors of the empirical kernel $\bm{K}_{N}$ to the eigenfunctions in $\mathcal{H}$, which leads to the desirable projection that defines our estimates. Notice that each approximated eigenfunction in \eqref{eq:vjhat} is still a linear combination of $\{K_{x_i}\}_{i=1}^{n}$. 
One advantage of spectral decomposition is that it allows one to separate the effect of finite sample size and the dimension of the hypothesis space, even when the hypothesis space is empirically constructed by interpolating the eigenvectors that depend on the data size. This is in contrast to the general kernel ridge-regression approach \cite{caponnetto2007optimal}
with hypothesis spaces that cannot be classified in terms of the dimension. Particularly, when the kernel is radial-type function, there is a lack of ordering in the corresponding set of features $\{K_x(\cdot), \forall x\in \mathbb{R}^d\}$, which is empirically estimated by $\{K_{x_i}(\cdot)\}_{i=1}^N$. By controlling the dimension of the hypothesis space (i.e., fixing the number of basis functions used in the representation), we can easily deduce the Lipschitz continuity as shown in Proposition~\ref{prop:Lip_spec}. 

One practical limitation with the projection-based method is the high computational cost in solving the eigenvalue problem associated with the empirical kernel $\mathbf{K}_N$ for large $N$. While it is desirable to have a small number of basis functions, $M$, to remedy this issue, it remains an open question which kernels can induce an RKHS space that can effectively represent the target function with a small number of basis functions. our experience indicates that a careful choice of kernels that also account for the information from the labeled data $\{y_i\}$ in addition to just the covariate data, $\{x_i\}$, is an important direction to pursue in the future study.

\section{Learning with ReLU random neural networks} \label{sec:application_ML}

In Section~\ref{sec:application}, we have discussed the kernel-based spectral regression method in learning the drift coefficients and visited various issues, including the
consistency, the
generalization error, and the Lipschitz continuity. In particular, the hypothesis space is the span of a finite number of eigenfunctions determined by the Nystr\"om interpolation. The orthogonality of the basis functions provides explicit expressions for the coefficients in minimizing the empirical risk. In this section, we will consider the \emph{random neural network} (RNN) model with the ReLU activation function. As it turns out, the hypothesis space is a convex subset of the span of a class of single-hidden-layer feed-forward networks with randomly generated coefficients. Unlike the spectral method, we determine the estimate by solving a least-squares problem. Similar to the previous section, we will focus on the issues regarding the generalization error and the Lipschitz continuity,  under the same setting.  The results on the approximation error and estimation error are mainly inspired by the work \cite{gonon2020approximation} on random neural networks and the machine-learning theory \cite{wang2011optimal, wang2012erm, cucker2002mathematical}, respectively.

\subsection{Hypothesis space and the approximation error }\label{sec:RNN_1}

Following the notations in Section~\ref{sec:application}, we recall that $b:\mathbb{R}^d\to\mathbb{R}$ denotes a generic component of the drift coefficient, and the available i.i.d. training data $\{x_{i}, y_{i}\}_{i=1}^{N}\sim (X,Y)$ satisfies $y_{i} = b(x_i) + \eta_{i}$ with the Gaussian noise $\eta_{i}\sim E$.
To introduce the hypothesis space, we define the random function $H^{\bm{A},\zeta}_{W}: \mathbb{R}^{d} \rightarrow \mathbb{R}$ by
\begin{equation}\label{eq:RNN_fun}
    H^{\bm{A},\zeta}_{W}(x) = \sum_{i=1}^{M} W_{i}\phi\left( \langle A_{i}, x \rangle + \zeta_{i} \right), \quad \phi(z): = \max\{0,z\},
\end{equation}
where $\zeta = (\zeta_{1}, \zeta_{2}, \dots, \zeta_{d})^{\top} \in \mathbb{R}^{d}$ and $\bm{A} \in \mathbb{R}^{M\times d}$ (with row vectors $A_i$) are generated randomly. Given the realization of $\bm{A}$ and $\zeta$, the coefficient vector $W\in\mathbb{R}^{M}$ is trained via empirical risk minimization. The function $\phi$ in \eqref{eq:RNN_fun} is known as the ReLU activation function. We shall point out that there are other choices of activation functions. We restrict to the ReLU to directly use the results in \cite{gonon2020approximation} regarding the approximation error.

In particular, in \cite{gonon2020approximation}, the approximation error is formulated with respect to a (essentially) compactly supported probability measure. To fulfill such an assumption, we introduce the following truncation to the invariant measure $\tilde{\pi}^{\delta}$,
\begin{equation}\label{eq:pi_D}
    \tilde{\pi}^{\delta}_{D} : = \mathbbm{1}_{B_{D}} \frac{\tilde{\pi}^{\delta}}{\tilde{\pi}^{\delta}(\mathbbm{1}_{B_{D}})}, \quad B_{D}: = \left\{ x\in \mathbb{R}^{d}\; \big| \;  \|x\|\leq D\right\}, \quad D>1,
\end{equation}
where $\mathbbm{1}_{B_{D}}(\cdot)$ denotes the characteristic function with respect to the ball $B_{D}$. By the linear growth bound in Assumption~\ref{assu:Ito_coef}, we have 
$|b(x)| \leq K_{2}\sqrt{1+D^2}$ for all $x\in B_{D}$. Fixing $K_2$, we introduce the following convex hypothesis space,
\begin{equation}\label{eq:RNN_hyp}
    \mathcal{H}^{\bm{A}, \zeta}_{D}: = \left\{  f = \mathbbm{1}_{B_{D}}H_{W}^{\bm{A}, \zeta}  \; \Big| \; \|f\|_{\infty}\leq K_{2}\sqrt{1+D^2} \right\}.
\end{equation}
Given a realization of $\bm{A}$ and $\zeta$, we may introduce the target function, with a slight abuse of the notation,
\begin{equation}\label{eq:RNN_bH}
    b_{\mathcal{H}}: = \arg\min_{h\in  \mathcal{H}^{\bm{A}, \zeta}_{D}} \| b - h\|_{\tilde{\pi}^{\delta}}^2 = \arg\min_{h\in  \mathcal{H}^{\bm{A}, \zeta}_{D}} \| b - h\|_{\tilde{\pi}_{D}^{\delta}}^2,
\end{equation}
which is well-defined since $\mathcal{H}^{\bm{A}, \zeta}_{D}$, according to the definition of the random function in \eqref{eq:RNN_fun}, is a convex subset of the $M$-dimensional function space, $\operatorname{span}\left\{\phi\left( \langle A_{i}, \cdot \rangle + \zeta_{i} \right), \; i =1,2,\dots, M \right\}$. The second identity in \eqref{eq:RNN_bH} holds because  functions in the hypothesis space are supported in $B_{D}$ and the measures $\tilde{\pi}^{\delta}$ and $\tilde{\pi}^{\delta}_{D}$ are proportional to each other in $B_{D}$.

The following result, as a direct consequence of Corollary~2 in \cite{gonon2020approximation}, specifies how to generate the random coefficients $\bm{A}$ and $\zeta$ in the RNN \eqref{eq:RNN_fun} as well as the approximation error of $b_{\mathcal{H}}$ in \eqref{eq:RNN_bH}.

\begin{prop}\label{prop:RNN_approx} Assume that $b \in C^{k}(\mathbb{R}^{d})$ for some integer $k \geq \frac{d}{2} + 1 + s$ with $s>0$. Let $T = M^{\frac{1}{2k-2s+1}}$. Suppose that the row vectors $\{A_{i}\}_{i=1}^{M}$ of the matrix $\bm{A}$ are i.i.d. samples sampled from the uniform distribution on the ball $B_{T} \subset \mathbb{R}^{d}$, and the entries of the vector $\zeta$, $\{\zeta_{i}\}_{i=1}^{M}$, are i.i.d. samples sampled from the uniform distribution on $[-DT, DT]$. Assume the two uniform distributions together with the stationary distribution $\mu^{\delta}$ are mutually independent. Then for any $\tau \in (0,1)$, with probability $1-\tau$, the target function $b_{\mathcal{H}}$ in \eqref{eq:RNN_bH} satisfies,
\begin{equation}\label{eq:RNN_approx_1}
    \|b - b_{\mathcal{H}}\|^2_{\tilde{\pi}^{\delta}}  \leq \frac{C}{\tau} M^{-\frac{2}{\alpha}} + K_{2}^2\int_{\{\|x\|>D\}} (1 +\|x\|^2) \tilde{\pi}^{\delta}(\td x), \quad \alpha = 2+ \frac{d+1}{k-\frac{d}{2}-s},
\end{equation}
for some constant $C\in (0,+\infty)$, where the integral in \eqref{eq:RNN_approx_1}  can be further bounded by
\begin{equation}\label{eq:RNN_approx_2}
    \int_{\{\|x\|>D\}} (1 +\|x\|^2) \tilde{\pi}^{\delta}(\td x) \leq C_{1}^{-\ell} (1+D^2)^{1-\ell} \tilde{\pi}^{\delta}(V).
\end{equation}
Recall that $V$ denotes the Lyapunov function satisfying Assumption~\ref{assu:2}, and the constants $\ell$ and $C_1$ are the same as in Assumption~\ref{assu:2}.
\end{prop}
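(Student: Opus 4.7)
The plan is to reduce the bound to a direct application of Corollary~2 of \cite{gonon2020approximation} on the ball $B_{D}$, combined with an elementary tail bound controlled by the Lyapunov function.

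First I would decompose the squared error according to the ball $B_{D}$, noting that every $h\in \mathcal{H}^{\bm{A},\zeta}_{D}$ is supported on $B_{D}$ by the indicator factor in \eqref{eq:RNN_hyp}. Thus
\begin{equation*}
\|b - b_{\mathcal{H}}\|^{2}_{\tilde{\pi}^{\delta}} = \int_{B_{D}} (b - b_{\mathcal{H}})^{2}\, \td \tilde{\pi}^{\delta} + \int_{B_{D}^{c}} b^{2}\, \td\tilde{\pi}^{\delta}.
\end{equation*}
Since $\tilde{\pi}^{\delta}$ and $\tilde{\pi}^{\delta}_{D}$ agree up to the normalization constant $\tilde{\pi}^{\delta}(\mathbbm{1}_{B_{D}})\leq 1$ on $B_{D}$, the first integral is bounded above by $\|b-b_{\mathcal{H}}\|^{2}_{\tilde{\pi}^{\delta}_{D}}$. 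Using the second identity in \eqref{eq:RNN_bH}, this in turn is bounded by $\|b-h\|^{2}_{\tilde{\pi}^{\delta}_{D}}$ for any admissible $h\in \mathcal{H}^{\bm{A},\zeta}_{D}$.

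Next I would invoke Corollary~2 of \cite{gonon2020approximation}. That result, applied to the restriction of $b$ to $B_{D}$ (which is $C^{k}$ with the prescribed regularity and satisfies the sup-norm bound $K_{2}\sqrt{1+D^{2}}$), constructs a random neural network $\tilde{h}\in \mathcal{H}^{\bm{A},\zeta}_{D}$ using the very distributions prescribed in the statement (uniform on $B_{T}$ for the rows of $\bm A$ and uniform on $[-DT,DT]$ for the entries of $\zeta$, with $T=M^{1/(2k-2s+1)}$), so that the expected $L^{2}(\tilde{\pi}^{\delta}_{D})$-error is $O(M^{-2/\alpha})$. A Markov-type inequality then yields the $\tau^{-1}$ probability-dependent factor in \eqref{eq:RNN_approx_1}. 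Plugging $h=\tilde{h}$ into the minimization gives the first term on the right of \eqref{eq:RNN_approx_1}.

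For the tail, I would use the linear growth bound in Assumption~\ref{assu:Ito_coef}(ii) to get $\int_{B_{D}^{c}} b^{2}\,\td\tilde{\pi}^{\delta} \leq K_{2}^{2}\int_{\{\|x\|>D\}} (1+\|x\|^{2})\,\td\tilde{\pi}^{\delta}$, yielding the second term in \eqref{eq:RNN_approx_1}. The remaining bound \eqref{eq:RNN_approx_2} is purely a Chebyshev-type argument using Assumption~\ref{assu:2}: the essentially quadratic function $W$ satisfies $C_{1}(1+\|x\|^{2})\leq W(x)$, so $(1+\|x\|^{2})^{\ell}\leq C_{1}^{-\ell}V(x)$, and on $\{\|x\|>D\}$ we factor $(1+\|x\|^{2}) = (1+\|x\|^{2})^{1-\ell}(1+\|x\|^{2})^{\ell}$ and use $(1+\|x\|^{2})^{1-\ell}\leq (1+D^{2})^{1-\ell}$ (valid since $\ell\geq 1$). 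Integrating and invoking the finiteness of $\tilde{\pi}^{\delta}(V)$ (which follows from the geometric ergodicity in Proposition~\ref{prop:erg_euler}) completes the bound.

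The only delicate step is aligning the precise statement of Corollary~2 in \cite{gonon2020approximation} with our setting: that result is phrased for compactly supported target functions on a ball with the random-feature parameters generated in the manner above, and the rate constant and the role of the confidence parameter $\tau$ must be tracked carefully through the Markov inequality. Beyond that, the remaining steps are elementary. Both $D$ and $M$ could be optimized a posteriori to balance the two terms in \eqref{eq:RNN_approx_1}, but this optimization is not required for the statement as given.
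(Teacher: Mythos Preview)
Your approach is essentially the paper's: decompose over $B_{D}$ and $B_{D}^{c}$, pass to $\tilde{\pi}^{\delta}_{D}$ on the ball, invoke Corollary~2 of \cite{gonon2020approximation} plus Markov's inequality for the first term, and use the linear growth bound and Assumption~\ref{assu:2} for the tail. The only point where the paper does something you do not is the ``alignment'' you flag at the end: Corollary~2 in \cite{gonon2020approximation} is stated for a target function with global regularity and integrability (the constant $C$ there is proportional to the $W^{k,2}(\mathbb{R}^{d})$-norm squared of the target), so applying it to ``the restriction of $b$ to $B_{D}$'' is not quite legitimate---the hard cutoff destroys smoothness at the boundary. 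The paper resolves this by introducing a mollified extension $b^{*}\in W^{k,2}(\mathbb{R}^{d})\cap L^{1}(\mathbb{R}^{d})$ with $b^{*}=b$ on $B_{D}$, observing that $b_{\mathcal{H}}$ is also the $\tilde{\pi}^{\delta}_{D}$-target of $b^{*}$, and then applying the corollary to $b^{*}$. That one extra sentence closes the gap you correctly identified as delicate.
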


\begin{proof}
To begin with, since the target function $b_{\mathcal{H}}$ is  supported in $B_{D}$, the $L^{2}$-error $\|b - b_{\mathcal{H}}\|_{\tilde{\pi}^{\delta}}^2$ is bounded as follows,
\begin{equation*}
    \|b - b_{\mathcal{H}}\|_{\tilde{\pi}^{\delta}}^2 = \tilde{\pi}^{\delta}_{D}(\mathbbm{1}_{B_{D}})^2\|b - b_{\mathcal{H}}\|_{\tilde{\pi}_{D}^{\delta}}^2 + \int_{\{\|x\|>D\}} \|b\|^2 \tilde{\pi}^{\delta}(\td x)  \leq \|b - b_{\mathcal{H}}\|_{\tilde{\pi}_{D}^{\delta}}^2+ K_{2}^2\int_{\{\|x\|>D\}} (1 +\|x\|^2) \tilde{\pi}^{\delta}(\td x),
\end{equation*}
where we have used the linear growth bound on $b$ in Assumption~\ref{assu:Ito_coef}. To apply Corollary~2 in \cite{gonon2020approximation} to $\|b - b_{\mathcal{H}}\|_{\tilde{\pi}_{D}^{\delta}}^2$, we introduce the mollification $b^{*}\in W^{k,2}(\mathbb{R}^{d})\cap L^{1}(\mathbb{R}^{d})$ of $b$ such that
\begin{equation*}
    b^{*} = b, \quad \forall x\in B_{D}.
\end{equation*}
Here, $W^{k,2}(\mathbb{R}^{d})$ indicates a Sobolev space. We have the following relation
\begin{equation*}
   b_{\mathcal{H}}=\arg\min_{h\in  \mathcal{H}^{\bm{A}, \zeta}_{D}} \| b - h\|_{\tilde{\pi}^{\delta}}^2 = \arg\min_{h\in  \mathcal{H}^{\bm{A}, \zeta}_{D}} \| b - h\|_{\tilde{\pi}_{D}^{\delta}}^2 = \arg\min_{h\in  \mathcal{H}^{\bm{A}, \zeta}_{D}} \| b^{*} - h\|_{\tilde{\pi}_{D}^{\delta}}^2,
\end{equation*}
that is, $b_{\mathcal{H}}$ \eqref{eq:RNN_bH} is also the target function of $b^{*}$ with respect to $L^{2}(\mathbb{R}^{d}, \tilde{\pi}^{\delta}_{D})$. In this case, the corresponding approximation error can be bounded by Corollary~2 in \cite{gonon2020approximation},
\begin{equation*}
    \mathbb{E}\left[\|b - b_{\mathcal{H}}\|_{\tilde{\pi}^{\delta}_{D}}^2\right]  = \mathbb{E}\left[\|b^{*} - b_{\mathcal{H}}\|_{\tilde{\pi}^{\delta}_{D}}^2\right]\leq CM^{-\frac{2}{\alpha}}, \quad \alpha = 2+ \frac{d+1}{k-\frac{d}{2}-s},
\end{equation*}
for some constant $C\in (0, +\infty)$. Here the expectation is taken with respect to the random coefficients $\bm{A}$ and $\zeta$. Together with the Markov's inequality, we reach the upper bound in \eqref{eq:RNN_approx_1}.

As for the integral in \eqref{eq:RNN_approx_1}, simply notice that by Assumption~\ref{assu:2}, one has,
\begin{equation*}
 1 +\|x\|^2 =   \frac{(1 +\|x\|^2)^{\ell}}{(1 +\|x\|^2)^{\ell-1}} \leq C_{1}^{-\ell} (1+D^2)^{1-\ell} V(x), \quad \forall x \in \{ \|x\|> D\},
\end{equation*}
which leads to the upper bound in \eqref{eq:RNN_approx_2}.
\end{proof}

We want to point out that in \cite{gonon2020approximation},  an explicit expression of the constant $C$ in \eqref{eq:RNN_approx_1} has been provided, which is proportional to the square of the norm of $b^{*}$ in $W^{k,2}(\mathbb{R}^{d})$. 

\begin{remark} \label{rm:exp}
We should also point out that the polynomial decay bound in \eqref{eq:RNN_approx_2} can be improved under additional assumptions. Particularly, if $X$ is a center random variable with sub-exponential distribution $SE(\nu^2,\alpha)$ with $\nu, \alpha>0$, then by concentration inequality for sub-exponential distribution, one obtains
\BEA
\mathbb{P}(\|X\|\geq D) \leq e^{-\frac{D}{2\alpha}}, \quad\forall D>\nu^2\alpha^{-1}. \nonumber
\EEA
This means, 
\BEA
\int_{\{\|x\|>D\}} (1 +\|x\|^2) \tilde{\pi}^{\delta}(\td x) \leq \mathbb{P}(\|X\|\geq D)^{1/2} \Big(\int_{\mathbb{R}^d} (1 +\|x\|^2)^2 \tilde{\pi}^{\delta}(\td x)   \Big)^{1/2} \leq e^{-\frac{D}{4\alpha}}  \tilde{\pi}^{\delta}(V^2)^{1/2}, \label{exponentialdecayD}
\EEA
decays exponentially in $D$ when  $\tilde{\pi}^{\delta}(V^2)<\infty$.
\end{remark}

\subsection{The Generalization error}\label{sec:RNN_gen}

Due to the truncation to the distribution $\tilde{\pi}^{\delta}$ in \eqref{eq:pi_D}, we shall define the risk function $\mathcal{E}[h]$ in \eqref{general_regression} with respect to the truncated random variables $(X_{D}, Y_{D})$, where $X_{D}$ follows the distribution $\tilde{\pi}^{\delta}_{D}$ and $Y_{D}$ is still determined by the model \eqref{discretesupervisedmodel}. Let $\mu_{D}^{\delta}$ denote their joint distribution, and the risk function in \eqref{general_regression} becomes 
\begin{equation*}
    \mathcal{E}_{D}[h]: = \mathbb{E}_{\mu^{\delta}_{D}} \left[ \| h(X_{D}) - Y_{D}\|^2 \right].
\end{equation*}
For simplicity, we assume the constant $D$ is large enough so that the samples $\{x_i\}_{i=1}^{N} \subset B_{D}$. Then, the corresponding empirical risk function, denoted as $\mathcal{E}_{D, N}$, is the same as the risk $\mathcal{E}_{N}$ in \eqref{empiricalcost}. In particular, following \eqref{eq:sig_e_1}, the empirical estimate $b_{\epsilon,D}$ is given by, 
\begin{equation}\label{eq:RNN_be}
    b_{\epsilon, D}: = \arg\min_{h\in  \mathcal{H}^{\bm{A}, \zeta}_{D}} \mathcal{E}_{D,N}[h].
\end{equation}
The least-squares problem in \eqref{eq:RNN_be} is conditionally  linear in the sense that, given a realization of $\bm{A}$ and $\zeta$, functions in $\mathcal{H}^{\bm{A}, \zeta}_{D}$ are linear combinations of basis functions $\left\{ \phi \left( \langle A_{i}, \cdot \rangle + \zeta_{i} \right) \right\}_{i=1}^{M}$. Thus, if we let 
\begin{equation}\label{eq:RNN_be_est}
    b_{\epsilon, D}(x) = \mathbbm{1}_{B_{D}}(x)\sum_{i=1}^{M} \hat{W}_{i} \phi \left( \langle A_{i}, x \rangle + \zeta_{i} \right),
\end{equation}
then $\hat{W} = (\hat{W}_{1}, \hat{W}_{2}, \dots, \hat{W}_{M})^{\top}$ solves the linear system
\begin{equation}\label{eq:RNN_linsys}
    \Phi^{\top} \Phi \hat{W} = \Phi^{\top} y, \quad y = (y_{1}, y_{2}, \dots, y_{N})^{\top}, 
\end{equation}
where $\Phi\in \mathbb{R}^{N\times M}$ such that 
\begin{equation*}
    \Phi = \left( \Phi_{ij}\right), \quad \Phi_{ij}= \phi \left( \langle A_{j}, x_{i} \rangle + \zeta_{j} \right), \quad 1\leq i \leq N, \quad 1\leq j \leq M.
\end{equation*}
In practice, when $\Phi^{\top} \Phi$ in \eqref{eq:RNN_linsys} is singular, we write $\hat{W} = ( \Phi^{\top} \Phi )^{\dagger} \Phi^{\top} y$, with $( \Phi^{\top} \Phi )^{\dagger}$ being the Moore–Penrose inverse.

Various results on the estimation error of empirical risk minimization are available under a bounded sampling assumption, e.g, \cite{cucker2002mathematical}. However, due to the Gaussian noise $\eta_{i}$ in $y_{i}$, our problem belongs to the unbounded sampling case. As a remedy, we will consider the result in \cite{wang2012erm} to study the estimation error $\mathcal{E}_{D}[b_{\epsilon,D}] - \mathcal{E}_{D}[b_{\mathcal{H}}]$.

\begin{prop}\label{prop:RNN_est}
Let $\{x_{i}, y_{i}\}_{i=1}^{N}$ be i.i.d. samples with $x_{i}\sim X_{D}$ of distribution $\tilde{\pi}^{\delta}_{D}$ and $y_{i} = b(x_{i}) + \eta_{i}$ with Gaussian noise $\eta_{i}\sim E =   \mathcal{N}(0, R)$ independent of  $\tilde{\pi}^{\delta}_{D}$. Then, for any $\tau \in (0,1)$, with probability $1-\tau$, the estimation error satisfies
\begin{equation*}
\mathcal{E}_{D}[b_{\epsilon,D}] - \mathcal{E}_{D}[b_{\mathcal{H}}] \leq \frac{B_{1}}{\sqrt{N}} \left(\ln \frac{8}{\tau^2} +M \ln\ln \frac{2}{\tau} \right)+ \frac{B_1 M}{\sqrt{N}}\left( 1 + \ln\left(\frac{4K_2B_2\sqrt{1+D^2}}{B_1}\cdot\frac{\sqrt{N}}{M } \right) \right),
\end{equation*}
for some constants $B_{1}, B_{2}\in (0,+\infty)$, which will be  specified in the proof.
\end{prop}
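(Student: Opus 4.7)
The plan is to follow the standard empirical risk minimization (ERM) framework but with the concentration tools tailored to the unbounded sampling setting of Wang and Zhou \cite{wang2012erm}, since the output noise $\eta_i$ is Gaussian (sub-exponential) even though the inputs $x_i$ are confined to $B_D$. First I would use the defining property of the ERM estimator $b_{\epsilon,D}$ \eqref{eq:RNN_be}, namely $\mathcal{E}_{D,N}[b_{\epsilon,D}] \leq \mathcal{E}_{D,N}[b_{\mathcal{H}}]$, to write the excess risk as
\begin{equation*}
\mathcal{E}_{D}[b_{\epsilon,D}] - \mathcal{E}_{D}[b_{\mathcal{H}}] \leq \bigl(\mathcal{E}_{D}-\mathcal{E}_{D,N}\bigr)[b_{\epsilon,D}] - \bigl(\mathcal{E}_{D}-\mathcal{E}_{D,N}\bigr)[b_{\mathcal{H}}],
\end{equation*}
which reduces the task to a uniform deviation bound for the empirical process indexed by the hypothesis class $\mathcal{H}^{\bm{A},\zeta}_{D}$.

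Next I would quantify the complexity of $\mathcal{H}^{\bm{A},\zeta}_{D}$. Because the hypothesis class is, by \eqref{eq:RNN_hyp}, a convex subset of the $M$-dimensional linear span of the ReLU features, with all elements bounded by $K_2\sqrt{1+D^2}$ on $B_D$ and identically zero outside, its $\epsilon$-covering number in the supremum norm admits the standard bound
\begin{equation*}
\log \mathcal{N}(\epsilon, \mathcal{H}^{\bm{A},\zeta}_{D}, \|\cdot\|_\infty) \leq M \log\!\left(\frac{B_2\, K_2\sqrt{1+D^2}}{\epsilon}\right)
\end{equation*}
for some constant $B_2>0$ depending only on $d$. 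This is the place where the $M\ln(\cdot)$ structure of the second term in the claimed bound originates.

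Then I would invoke the unbounded-sampling ERM concentration inequality of Wang and Zhou \cite{wang2012erm}. Its conclusion, specialized to squared loss with a hypothesis space of bounded functions and sub-exponential (here Gaussian) output noise, gives a high-probability control of $\mathcal{E}_D[f]-\mathcal{E}_{D,N}[f]$ uniformly over $f\in\mathcal{H}^{\bm{A},\zeta}_D$ that splits into two contributions: a concentration piece coming from the Bernstein-type tail of the (unbounded) loss involving the confidence $\tau$, which produces the factor $\ln(8/\tau^2)+M\ln\ln(2/\tau)$, and an entropy piece obtained by discretizing the class at a scale $\epsilon\sim M/\sqrt{N}$, which yields the term $\tfrac{B_1 M}{\sqrt{N}}\bigl(1+\ln(4K_2 B_2\sqrt{1+D^2}\sqrt{N}/(B_1 M))\bigr)$ once the covering bound is substituted and the resulting sum is optimized. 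Combining these two contributions and applying the estimate to both $b_{\epsilon,D}$ and $b_{\mathcal{H}}$ (with a union bound absorbed into the $\tau$ dependence) gives the claimed inequality.

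The main obstacle will be the careful bookkeeping in applying the Wang--Zhou theorem: one must verify their moment assumptions on the loss $\|h(X_D)-Y_D\|^2$, since $Y_D$ has unbounded Gaussian component $\eta_i$; this requires tracking a second-moment bound of the form $\mathbb{E}[\eta^{2p}]\leq C_p R^{p}$ and a uniform boundedness of $h$ on the support $B_D$ (both are available: $|h|\leq K_2\sqrt{1+D^2}$ and $\eta \sim \mathcal{N}(0,R)$ with $R\leq \delta^{-1}\operatorname{Tr}[\sigma\sigma^\top]$). The delicate point is that the iterated logarithm $\ln\ln(2/\tau)$ reflects the sub-Gaussian tail of the Gaussian noise entering through the product $(h(X_D)-b(X_D))\eta$, so a Bernstein argument combined with a peeling/chaining device on the hypothesis class (as done in \cite{wang2012erm}) is what must be implemented to obtain the precise form of the two terms with the stated constants $B_1,B_2$.
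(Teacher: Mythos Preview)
Your overall strategy matches the paper's: use the ERM inequality to reduce to a uniform deviation, bound the covering number of the $M$-dimensional bounded class via Proposition~5 of \cite{cucker2002mathematical}, verify the moment hypothesis of \cite{wang2012erm}, and optimize. Two points in your description are off, however.

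First, the role and dependence of $B_2$. In the paper, $B_2$ is \emph{not} the constant in the covering-number bound (that constant is simply $4$); it is the constant multiplying the discretization term $\theta\ln(2/\tau)$ in the Wang--Zhou estimate, and it depends on $D$ and on the noise variance $R$ through $M_D=2\max\{K_2\sqrt{1+D^2},\sqrt{R}\}$, not only on $d$. Relatedly, the moment hypothesis in \cite{wang2012erm} is on the full output $|y|^n=|b(X_D)+E|^n$, not on $\eta$ alone; the paper checks $\mathbb{E}[|b(X_D)+E|^n]\leq \sqrt{n!}\,M_D^n$ by combining $|b|\leq K_2\sqrt{1+D^2}$ on $B_D$ with the Gaussian absolute-moment formula.

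Second, your explanation of the iterated logarithm is incorrect. The $M\ln\ln(2/\tau)$ term does not come from chaining or peeling on sub-Gaussian tails. The Wang--Zhou bound (Propositions~2.2 and 3.3) gives, for any fixed $\theta>0$,
\[
\mathcal{E}_D[b_{\epsilon,D}]-\mathcal{E}_D[b_{\mathcal{H}}]\;\leq\;\frac{B_1}{\sqrt{N}}\Bigl(\ln\tfrac{8}{\tau^2}+\ln N(\mathcal{H}^{\bm{A},\zeta}_D,\theta)\Bigr)+B_2\,\theta\ln\tfrac{2}{\tau}.
\]
Substituting $\ln N\leq M\ln(4K_2\sqrt{1+D^2}/\theta)$ and minimizing in $\theta$ yields $\theta^*=\tfrac{B_1 M}{B_2\ln(2/\tau)\sqrt{N}}$; plugging $\theta^*$ back into $M\ln(1/\theta^*)$ is what produces the $M\ln\ln(2/\tau)$ contribution. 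So the iterated logarithm is an artifact of the optimization step, not of a chaining argument.
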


\begin{proof}
The result in \cite{wang2012erm} relies on the notion of \emph{covering number of} the hypothesis space $\mathcal{H}^{\bm{A}, \zeta}_{D}$ and a moment condition on the output $y_i$. 

Recall that for $\theta>0$, the covering number of $\mathcal{H}^{\bm{A}, \zeta}_{D}$ with radius $\theta$, denoted by $N(\mathcal{H}^{\bm{A}, \zeta}_{D}, \theta)$, is defined to be the minimal integer $n$ such that there exist $n$ balls with radius $\theta$ covering $\mathcal{H}^{\bm{A}, \zeta}_{D}$. Here, the topology are induced by the uniform norm $\|\cdot\|_{\infty}$. Notice that $\mathcal{H}^{\bm{A}, \zeta}_{D}$ is a bounded subset of an $M$-dimensional normed space. By Proposition~5 in \cite{cucker2002mathematical}, we have
\begin{equation}\label{eq:RNN_cover}
    \ln N(\mathcal{H}^{\bm{A}, \zeta}_{D}, \theta) \leq M \ln \left( \frac{4 K_{2}\sqrt{1+D^2} }{\theta} \right), \quad \forall \theta >0.
\end{equation}
The output $y_{i}$ yields a Gaussian distribution $\rho(y|x_{i})= \mathcal{N}(b(x_i), R)$. The moment of the output $\mathbb{E}[|b(X_{D}) + E|^{n}]$ is given by the following integral
\begin{equation*}
    \mathbb{E}\left[|b(X_{D}) + E|^{n}\right] = \int_{B_{D}}\int_{\mathbb{R}} |y|^{n} \rho(y|x) \td y\tilde{\pi}_{D}^{\delta}(\td x), \quad n \in \mathbb{N},
\end{equation*}
where for all $x\in B_{D}$,
\begin{equation*}
    \int_{\mathbb{R}} |y|^{n} \rho(y|x) \td y \leq 2^{n-1}\left(\int_{\mathbb{R}} |y-b(x)|^{n} \rho(y|x) \td y +\int_{\mathbb{R}} |b(x)|^{n} \rho(y|x) \td y \right)\leq 
    2^{n-1}\left(
    (2R)^{\frac{n}{2}} \frac{\Gamma\left(\frac{n+1}{2}\right)}{\sqrt{\pi}} + K_{2}^{n}(1+D^{2})^{\frac{n}{2}}\right).
\end{equation*}
Here, we have used the identity regarding the central absolute moments of Gaussian distribution and the linear growth bound of $b$. Since the Gamma function is monotone increasing, that is,
\begin{equation*}
    \Gamma\left(\frac{n+1}{2}\right) \leq \Gamma\left( m + \frac{1}{2} \right) = \frac{(2m-1)!!}{2^{m}}\sqrt{\pi} \leq \frac{n !!}{2^{\frac{n}{2}}} \sqrt{\pi}, \quad m = \ceil*{\frac{n}{2}}
\end{equation*}
($\ceil*{\frac{n}{2}}$ denotes the smallest integer that is greater than or equal than $\frac{n}{2}$), which suggests 
\begin{equation*}
    (2R)^{\frac{n}{2}}\frac{\Gamma\left( \frac{n+1}{2} \right)}{\sqrt{\pi}} \leq R^{\frac{n}{2}}n !! \leq R^{\frac{n}{2}}\sqrt{n!}.
\end{equation*}
Choosing,
\begin{equation}\label{eq:RNN_MD}
    M_{D}: = 2\max\left\{ K_{2} \sqrt{1+D^2}, \sqrt{R}\right\},
\end{equation}
we obtain,
\begin{equation}\label{eq:RNN_mom}
     \mathbb{E}\left[|b(X_{D}) + E|^{n}\right] \leq \sqrt{n !}M_{D}^{n}, \quad \forall n \in \mathbb{N},
\end{equation}
which fulfills the moment hypothesis in \cite{wang2012erm}.

By Propositions~2.2 and 3.3 in \cite{wang2012erm}, we see that, with probability at least $1-\tau$, there holds for all $\theta>0$
\begin{equation*}
    \begin{split}
    \mathcal{E}_{D}[b_{\epsilon,D}] - \mathcal{E}_{D}[b_{\mathcal{H}}] & \leq \left(\mathcal{E}_{D}[b_{\epsilon,D}] - \mathcal{E}_{D,N}[b_{\epsilon,D}] \right) + \left( \mathcal{E}_{D,N}[b_{\mathcal{H}}] - \mathcal{E}_{D}[b_{\mathcal{H}}] \right) \\
    & \leq \frac{B_{1}}{\sqrt{N}} \left( \ln \frac{8}{\tau^2}  + \ln N(\mathcal{H}^{\bm{A}, \zeta}_{D}, \theta) \right) +  B_{2}\theta \ln \frac{2}{\tau} \\
    &\leq \frac{B_{1}}{\sqrt{N}} \ln \frac{8}{\tau^2}+ \left(\frac{ B_{1} M}{\sqrt{N}} \ln \left( \frac{4 K_{2}\sqrt{1+D^2} }{\theta} \right) + B_{2}\theta \ln \frac{2}{\tau}\right),
    \end{split}
\end{equation*}
where the constants $B_{1}$ and $B_{2}$ are given by
\begin{equation}\label{B1B2constant}
    B_{1} = 40 K_{2}^{2}(1+D^2) + 160 M_{D}^2 \leq 180 M_{D}^2, \quad B_{2} = 4\left[K_{2}(1+D^2)^{\frac{1}{2}} + (2+3\sqrt{2})M_{D}\right] \leq (10+12\sqrt{2}) M_{D}.
\end{equation}
In particular, when
\begin{equation*}
    \theta  =  \theta^{*}:= \frac{B_{1}M}{B_{2} \ln \frac{2}{\tau} \sqrt{N} }>0
\end{equation*}
the upper bound reaches its minimum, and we have
\begin{equation*}
    \mathcal{E}_{D}[b_{\epsilon,D}] - \mathcal{E}_{D}[b_{\mathcal{H}}] \leq \frac{B_{1}}{\sqrt{N}} \ln \frac{8}{\tau^2}+ \frac{B_1 M}{\sqrt{N}}\left[ 1 + \ln\left(\frac{4K_2B_2\sqrt{1+D^2}}{B_1} \right) + \ln\ln \frac{2}{\tau} + \ln \frac{\sqrt{N}}{M } \right].
\end{equation*}
\end{proof}
It is worthwhile mentioning that the moment condition in \eqref{eq:RNN_mom} is a necessary condition for using the Bennet inequality to deduce the bounds above. Importantly, this moment bound gives a characterization of the estimation error in terms
of the noise variance $R$ through constant $M_D$ in \eqref{eq:RNN_MD} that appears in both $B_1$ and $B_2$ as shown in \eqref{B1B2constant}. Recall that in our application $R \leq \delta^{-1}Tr[\sigma\sigma^\top]$ as in \eqref{noisebound}. Using the covering number of the hypothesis space to analyze the estimation error is a classical approach in statistical learning theory, e.g., \cite{zhou2002covering}. Combining Propositions~\ref{prop:RNN_approx}-\ref{prop:RNN_est}, with probability $1-\tau$, we conclude the following upper bound for the generalization error,
\begin{equation*}
\begin{split}
\mathbb{E}_{\tilde{\pi}^{\delta}}\left[ \|b - b_{\epsilon,D}\|^2 \right] & =  \mathbb{E}_{\tilde{\pi}^{\delta}}\left[ \|b - b_{\mathcal{H}}\|^2 \right] + \left( \mathbb{E}_{\tilde{\pi}^{\delta}}\left[ \|b - b_{\epsilon,D}\|^2 \right] - \mathbb{E}_{\tilde{\pi}^{\delta}}\left[ \|b - b_{\mathcal{H}}\|^2 \right] \right) \\
& \leq  \mathbb{E}_{\tilde{\pi}^{\delta}}\left[ \|b - b_{\mathcal{H}}\|^2 \right] + \left( \mathbb{E}_{\tilde{\pi}^{\delta}_{D}}\left[ \|b - b_{\epsilon,D}\|^2 \right] - \mathbb{E}_{\tilde{\pi}^{\delta}_{D}}\left[ \|b - b_{\mathcal{H}}\|^2 \right] \right) \\
& = \mathbb{E}_{\tilde{\pi}^{\delta}}\left[ \|b - b_{\mathcal{H}}\|^2 \right] + \mathcal{E}_{D}\left[b_{\epsilon,D}\right] - \mathcal{E}_{D}\left[b_{\mathcal{H}}\right] \\
& = O\left(\tau^{-1}M^{-\frac{2}{\alpha}}\right) + O\left((1+D^2)^{1-\ell}\right) + O\left( M_{D}^2 N^{-\frac{1}{2}} \left(\ln \tau^{-2} + M \ln\ln \tau^{-1}   \right) \right) + O\left( M_{D}^2\frac{M}{\sqrt{N}}  \ln \frac{\sqrt{N}}{M} \right),
\end{split}
\end{equation*}
where the constant $M_{D}$ depends on the noise variance as defined in \eqref{eq:RNN_MD}. If we assume the invariant measure $\tilde{\pi}^{\delta}$ yields an exponential decay as in Remark~\ref{rm:exp}, then the second term above can be replaced by the error bound in \eqref{exponentialdecayD}. For fixed $\tau>0$, the leading error term is the last component. Choosing $N=O(M^2)$, the last error term is $O(M_D^2)$ which is effectively $O(R)$, where $R\leq \delta^{-1}Tr[\sigma\sigma^\top]$. This means the contribution from noise is comparable to that in the error from the kernel method in \eqref{generalkernelerror}.

\subsection{Lipschitz continuity} \label{sec:RNN_Lip}

To close this section, we check the Lipschitz continuity of the estimator $b_{\epsilon,D}$ in \eqref{eq:RNN_be}. From \eqref{eq:RNN_be_est}, we have
\begin{equation*}
    \begin{split}
    \left|b_{\epsilon,D}(x) - b_{\epsilon,D}(y)  \right| & \leq \sum_{i=1}^{M}|\hat{W}_{i}| \left|\phi\left( \langle A_{i}, x \rangle + \zeta_{i} \right) - \phi\left( \langle A_{i}, y \rangle + \zeta_{i} \right)  \right|\leq \sum_{i=1}^{M}|\hat{W}_{i}| \|A_{i}\| \|x-y\| \\
    &   \leq \left( \sum_{i=1}^{M}\hat{W}_{i}^2 \right)^{\frac{1}{2}} \left(\sum_{i=1}^{M} \|A_{i}\|^2 \right)^{\frac{1}{2}} \|x-y\| = \|\hat{W}\| \|\bm{A}\|_{F}\|x-y\|,
    \end{split}
\end{equation*}
which suggests that $b_{\epsilon,D}$ is globally Lipschitz. Recall that the matrix $\bm{A}$ is randomly generated following Proposition~\ref{prop:RNN_approx}. We have the following concentration bounds for $\|\bm{A}\|_{F}$.

\begin{prop}\label{prop:RNN_A}
Let $\bm{A} \in \mathbb{R}^{M\times d}$ with row vectors $A_{i}$ randomly generated as in Proposition~\ref{prop:RNN_approx}. Then, for $\tau>0$ we have
\begin{equation*}
    \mathbb{P}\left( \|\bm{A}\|^2_{F} \in [M \mu_{A} - \tau, M \mu_{A}+\tau] \right)\geq 1- 2\exp\left(-\frac{\tau^2}{2M\sigma_{A}^2 + \frac{2}{3}\mu_{A} \tau}  \right), \quad \mu_{A} = \frac{d}{d+2}T^2, \quad \sigma^2_{A} = \frac{4d}{(d+4)(d+2)^2}T^4,
\end{equation*}
where the constant $T$ is the same as in Proposition~\ref{prop:RNN_approx}.
\end{prop}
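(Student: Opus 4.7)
The plan is to apply Bernstein's concentration inequality to $\|\bm{A}\|_{F}^{2} = \sum_{i=1}^{M}\|A_{i}\|^{2}$, which is a sum of i.i.d.\ bounded random variables. First I would compute the common mean and variance of the summands. Since $A_{i}$ is uniform on the ball $B_{T}\subset\mathbb{R}^{d}$, passing to spherical coordinates and using $\mathrm{Vol}(B_{T})=(\omega_{d-1}/d)T^{d}$ with $\omega_{d-1}$ the surface area of the unit sphere yields
\begin{equation*}
\mathbb{E}[\|A_{i}\|^{2k}] \;=\; \frac{d\,\omega_{d-1}}{\mathrm{Vol}(B_{T})}\int_{0}^{T} r^{d-1+2k}\,dr \;=\; \frac{d}{d+2k}\, T^{2k},
\end{equation*}
for $k=1,2$. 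This immediately gives $\mu_{A} = \frac{d}{d+2}T^{2}$ and $\sigma_{A}^{2} = \frac{d}{d+4}T^{4}-\mu_{A}^{2} = \frac{4d}{(d+4)(d+2)^{2}}T^{4}$, matching the expressions in the statement.

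Next I would introduce the centered summands $Y_{i} := \|A_{i}\|^{2}-\mu_{A}$, which are i.i.d., mean zero, with variance $\sigma_{A}^{2}$. Since $\|A_{i}\|^{2}\in[0,T^{2}]$ almost surely, one has $|Y_{i}|\leq \max\{\mu_{A},\,T^{2}-\mu_{A}\}=\mu_{A}$ (the last equality using $d\geq 2$, which is the regime relevant to the application). The standard Bernstein inequality for bounded i.i.d.\ sums then yields
\begin{equation*}
\mathbb{P}\!\left(\Big|\textstyle\sum_{i=1}^{M}Y_{i}\Big| \geq \tau\right) \;\leq\; 2\exp\!\left(-\frac{\tau^{2}/2}{M\sigma_{A}^{2}+\mu_{A}\tau/3}\right) \;=\; 2\exp\!\left(-\frac{\tau^{2}}{2M\sigma_{A}^{2}+\tfrac{2}{3}\mu_{A}\tau}\right),
\end{equation*}
and substituting $\sum_{i}Y_{i} = \|\bm{A}\|_{F}^{2}-M\mu_{A}$ turns this into the claimed lower bound on $\mathbb{P}\bigl(\|\bm{A}\|_{F}^{2}\in [M\mu_{A}-\tau,\,M\mu_{A}+\tau]\bigr)$.

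The proof is essentially mechanical and I do not expect any real obstacle: the moment identity reduces to a one-dimensional integral, the a.s.\ boundedness $\|A_{i}\|^{2}\leq T^{2}$ is immediate from $A_{i}\in B_{T}$, and Bernstein's inequality in the stated form is textbook. The only mild subtlety worth flagging is that identifying the Bernstein constant with $\mu_{A}$ uses $d\geq 2$; in dimension $d=1$ one would take the constant to be $T^{2}-\mu_{A}=\tfrac{2}{3}T^{2}$ instead, with the conclusion weakened only by a constant factor.
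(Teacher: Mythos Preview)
Your proposal is correct and follows essentially the same route as the paper: compute the radial moments of the uniform distribution on $B_{T}$ to identify $\mu_{A}$ and $\sigma_{A}^{2}$, observe that the centered summands are bounded by $\mu_{A}$, and apply Bernstein's inequality. Your explicit remark that the bound $|Y_{i}|\leq\mu_{A}$ uses $d\geq 2$ is a welcome clarification the paper leaves implicit; note also a small slip in your displayed moment formula (the prefactor should be $\omega_{d-1}/\mathrm{Vol}(B_{T})$, not $d\,\omega_{d-1}/\mathrm{Vol}(B_{T})$), though your final value $\frac{d}{d+2k}T^{2k}$ is correct.
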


\begin{proof}
Recall that the row vectors $\{A_{i}\}_{i=1}^{M}$ are i.i.d. samples drawn from the uniform distribution on the ball $B_{T} \subset \mathbb{R}^{d}$. Let $\xi$ denote the random variable corresponding to $\|A_{i}\|$ with density $\rho_{T}$. Using the spherical coordinates, one  sees that $\rho_{T}(r) \propto r^{d-1}$. Moreover, $\rho_{T}$  is supported in $[0,T]$, that is,
\begin{equation*}
    \rho_{T}(r) = \frac{d}{T^{d}}r^{d-1}, \quad r \in [0,T].
\end{equation*}
Since $\|\bm{A}\|_{F}^2 = \sum_{i=1}^{M} \|A_{i}\|^{2}$, to apply concentration inequalities, we need to identify the statistics of the random variable $\xi^2$. By direct computation, we find
\begin{equation*}
   \mu_{A}:=\mathbb{E}[\xi^2] = \frac{d}{d+2}T^2, \quad \sigma^{2}_{A}: = \mathbb{E}[\xi^4] - \mathbb{E}[\xi^2]^2 = \frac{4d}{(d+4)(d+2)^2} T^{4}.
\end{equation*}
Further notice that  $\xi^2 \in [0, T^2]$ and $|\xi^2 - \mu_{A}|\leq \mu_{A} $, and by the Bernstein inequality (e.g., Proposition~2 in \cite{cucker2002mathematical}), we reach the bound in the proposition's statement.
\end{proof}

We want to point out that since $T \leq  M^{\frac{1}{d+3}}$ according to Proposition~\ref{prop:RNN_approx}, Proposition~\ref{prop:RNN_A} suggests that
\begin{equation*}
    \|\bm{A}\|_{F} = O\left(M^{\frac{d+5}{2d+6}}\right),
\end{equation*}
in high probability.

As for $\|\hat{W}\|$, since $\hat{W}$ solves the linear system in \eqref{eq:RNN_linsys} the norm $\|\hat{W}\|$ depends on the smallest positive eigenvalue of the matrix $\Phi^{\top}\Phi$. Thus, in practice, to control $\|\hat{W}\|$, a regularization is necessary. This provably improves the Lipschitz continuity of the estimates, but it comes in the expense of estimation error. For general discussions on the Lipschitz function approximation using neural network, we refer the readers to \cite{anil2019sorting} and the references therein.

\subsection{Remarks on the ReLU random neural networks}

The ReLU RNN approach reviewed here has several advantages. First, compared with the spectral regression method in Section~\ref{sec:application}, the RNN approach is numerically cheaper to implement since it does not require solving large eigenvalue problems. Given the realization of $\bm{A}$ and $\zeta$, the target function $b_{\mathcal{H}}$ can be properly defined as the minimizer of the least-squares problem in \eqref{eq:RNN_bH} with no extra assumption on the unknown function $b$. The existence of a target function simplifies the discussion of the generalization error.

An obvious drawback of the RNN approach is that the estimates $b_{\epsilon,D}$ \eqref{eq:RNN_be_est} has compact support, while the unknown drift coefficients $b(x)$ is often unbounded as $\|x\|\rightarrow +\infty$. Under such constructions, the consistency assumption \eqref{eq:error_linear} in Assumption~\ref{assu:coe_per} can never be satisfied beyond the compact support. While the error beyond the compact support decays, either polynomial (see \eqref{eq:RNN_approx_2}) or exponential (see \eqref{exponentialdecayD}) under additional assumptions, as a function of the radius of the ball, in practice, we may not be able to estimate on a domain with large $D$. This issue is due to the difficulty in obtaining training samples on the tail of the distribution $X$. Additionally, larger $D$ induces a larger estimation error through the constant $M_D$ in \eqref{eq:RNN_MD}.
 
\comment{\color{red}This sentence can be changed as well:  However, recall that such a compact support is artificially introduced by truncating of the invariant measure in \eqref{eq:pi_D}
such that we can specify the generalization error using the existing results. Since the random function $H^{\bm{A},\xi}_{W}$ in \eqref{eq:RNN_fun} is defined on entire $\mathbb{R}^{d}$, similar to the construction in Eqs. \eqref{eq:RNN_be}-\eqref{eq:RNN_linsys}, one can develop an empirical estimate on the entire domain, $\mathbb{R}^{d}$, under an additional assumption on the sampling distribution, e.g., sub-exponential class such that in high-probability the data is concentrated on a ball of radius $D$. In such a case, however, the generalization error is unknown.
}

\section{Summary}
\label{sec:summary}

In this paper, we studied the error bounds of the invariant statistics in learning ergodic It\^o diffusion. Using the perturbation theory of ergodic Markov chains \cite{rudolf2018perturbation,shardlow2000perturbation} and the linear response theory \cite{hairer2010simple}, we established a linear dependence of the errors of one-point and two-point invariant statistics on the spectral error of the diffusion matrix estimator. Under a proper consistency condition on the estimator of the drift coefficient, one can identify the error bound in terms of the size of the training sample, ``size'' of hypothesis space, noise amplitude, and the discretization error induced by the SDE's solver, using standard $L^2$ generalization error
analysis corresponding to the specific machine learning algorithm. An important takeaway point from this study is that the $L^2$ characterization of the learning generalization error is not sufficient for achieving the linear dependence error bound presented in this paper. Besides the consistent estimator in the hypothesis space that retains certain characteristics of the drift coefficients, a sufficient condition to achieve the error bound here is through learning algorithms that produce a uniformly Lipschitz.

From our examination of two simple learning algorithms, the kernel-based spectral regression method and the shallow random neural networks with the ReLU activation function, we conclude that to satisfy these sufficient conditions, one needs to specify the hypothesis space carefully to avoid bias. In practice, the main challenge will be in the kernel specification when a kernel-based algorithm is used. For random neural network, the consistency depends on the universality of the random bases \cite{rahimi2008uniform}. Another practical issue is to overcome biased estimation with the random neural networks that arises when sampling data on the tail of the distribution are not available. These issues suggest that it is important to have a physical understanding of the problem at hand to have appropriate hypothesis space for convergence guarantees rather than just using machine learning as a black-box. Furthermore, consistent estimates can only be achieved on the domain where the training data is available. 

We view this study as a first step to understand machine learning of dynamical systems with invariant statistical properties. Many issues remain open. For example, in our study, we consider a consistent supervised learning problem in the following sense. Specifically, we model the response variable $y$ in \eqref{discretesupervisedmodel} to be compatible with the integration Euler-Maruyama scheme. In practice, when the underlying scheme is not known, the inconsistent learning model will produce a biased estimator. A much harder yet significant problem is to carry this analysis on deterministic dynamical systems. In this context, the existence of the invariant measure of the estimated dynamics is an essential question in dynamical system theory \cite{young2002srb}. Furthermore, the validity of linear response theory is also a critical problem \cite{baladi2014} that is difficult to justify in general.

\section*{Acknowledgment}

The research of JH was partially supported under the NSF grant DMS-1854299. XL is supported by NSF grant DMS-1819011.

\appendix

\section{Proof of Lemma~\ref{lem:well_pose}}
\label{app:gron}

In this appendix, we discuss the proof of Lemma~\ref{lem:well_pose} in Section~\ref{sec:per_theory}. For reader's convenience, we first review a nonlinear generalization of Gronwall's inequality.

\begin{prop}\label{prop:gron}
Let $v(t)$ be a nonnegative function that satisfies the integral inequality
\begin{equation*}
    v(t) \leq c + \int_{t_{0}}^{t} \left(a_{1}(s) v(s)  + a_{2}(s)v^{\alpha}(s) \right) \td s, \quad c\geq 0, \quad \alpha \geq 0,
\end{equation*}
where $a_1(t)$ and $a_2(t)$ are continuous nonnegative functions on $[t_0, +\infty)$. For $0\leq \alpha <1$, we have
\begin{equation*}
    v(t) \leq \left\{c^{1-\alpha} \exp \left[(1-\alpha) \int_{t_0}^{t} a_1(s) \td s  \right] + (1-\alpha) \int_{t_0}^{t} a_2(s) \exp \left[(1-\alpha) \int_{s}^{t} a_1(r)\td r  \right]\td s \right\}^{\frac{1}{1-\alpha}}.
\end{equation*}
\end{prop}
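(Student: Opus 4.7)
The plan is to proceed by the classical Bihari--LaSalle comparison argument, which reduces the nonlinear integral inequality to a linear one after the substitution $W=V^{1-\alpha}$. Define
\begin{equation*}
V(t):=c+\int_{t_0}^{t}\bigl(a_1(s)v(s)+a_2(s)v^{\alpha}(s)\bigr)\,\td s,
\end{equation*}
so that $v(t)\le V(t)$ by hypothesis and $V$ is nondecreasing and continuously differentiable on $[t_0,+\infty)$ because $a_1,a_2\ge 0$. Differentiating and using $v\le V$ together with the monotonicity $v^{\alpha}\le V^{\alpha}$ (valid for $\alpha\ge 0$), one obtains the pointwise Bernoulli-type differential inequality
\begin{equation*}
V'(t)\le a_1(t)\,V(t)+a_2(t)\,V^{\alpha}(t),\qquad V(t_0)=c.
\end{equation*}

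Next I would linearize via $W(t):=V(t)^{1-\alpha}$. Since $0\le\alpha<1$, $W$ is well defined and differentiable wherever $V>0$, with $W'=(1-\alpha)V^{-\alpha}V'$. Dividing the Bernoulli inequality by $V^{\alpha}$ gives
\begin{equation*}
W'(t)\le(1-\alpha)\bigl(a_1(t)W(t)+a_2(t)\bigr).
\end{equation*}
This is a linear first-order differential inequality in $W$, and I would treat it by the standard integrating-factor method: letting $\mu(t):=\exp\!\bigl(-(1-\alpha)\int_{t_0}^{t}a_1(r)\,\td r\bigr)$, one obtains $(\mu W)'(t)\le(1-\alpha)a_2(t)\mu(t)$, which integrates to
\begin{equation*}
\mu(t)W(t)\le c^{1-\alpha}+(1-\alpha)\int_{t_0}^{t}a_2(s)\mu(s)\,\td s.
\end{equation*}
Dividing by $\mu(t)$, recalling $\mu(s)/\mu(t)=\exp\!\bigl((1-\alpha)\int_{s}^{t}a_1(r)\td r\bigr)$, and returning to $V=W^{1/(1-\alpha)}$ yields the claimed bound, because $v(t)\le V(t)$.

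The one technical point — and the step most likely to trip up the argument — is that $V$ may vanish (when $c=0$) or that $V^{-\alpha}$ is singular at $V=0$, so dividing by $V^{\alpha}$ is not automatically justified. The standard remedy I would apply is an $\varepsilon$-regularization: replace $V$ by $V_{\varepsilon}(t):=V(t)+\varepsilon$ for $\varepsilon>0$, establish the same Bernoulli inequality (the right-hand side only increases when $V$ is shifted upward), carry out the integrating-factor computation for $W_{\varepsilon}=V_{\varepsilon}^{1-\alpha}$ where no singularity occurs, and then pass to the limit $\varepsilon\downarrow 0$ using continuity of the right-hand side in $c$. Aside from this mild regularization, the proof is essentially a bookkeeping exercise combining the Bernoulli substitution with linear Gronwall.
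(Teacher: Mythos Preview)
Your proposal is correct and follows precisely the approach the paper indicates: the paper's proof consists only of the remark that ``the proof is an application of the Bernoulli equation'' with a citation, and your Bihari--LaSalle argument via the substitution $W=V^{1-\alpha}$ is exactly that application carried out in detail. Your $\varepsilon$-regularization to handle the case $c=0$ is the standard way to make the division by $V^{\alpha}$ rigorous and is entirely appropriate here.
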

When $\alpha = 1$, such a result reduces to the standard Gronwall's inequality. The proof is an application of the Bernoulli equation (e.g., Theorem 21 in \cite{dragomir2002some}).

 {\bf Proof of Lemma~\ref{lem:well_pose}:} We start with the case: $(u_{n}, u_{n}^{\epsilon}) = (X_{n}, X_{n}^{\epsilon})$. By the definition of $\mathcal{G}_{\ell}$ in Proposition~\ref{prop:erg_euler}, we have
\begin{equation}\label{eq:G_ell}
|f(x)-f(y)| \leq C_{\ell}\left(1+ \|x\|^{2\ell-1} + \|y\|^{2\ell-1}\right)\|x-y\|, \quad \forall x,y\in \mathbb{R}^{d}, \; \forall f\in \mathcal{G}_{\ell},
\end{equation}
where $C_{\ell}\in (0, +\infty)$ is a fixed constant independent of $f$. Taking expectation on \eqref{eq:G_ell}, and employing the Cauchy-Schwarz inequality, we obtain
\begin{equation*}
\left| \mathbb{E}^{x}[f(X_{1})] - \mathbb{E}^{x}[f(X_{1}^{\epsilon})]  \right| \leq  C_{\ell} \left(\mathbb{E}^{x}\left[\left(1+ \|X_{1}\|^{2\ell-1} + \|X_{1}^{\epsilon}\|^{2\ell-1}\right)^2\right] \right)^{\frac{1}{2}} \left(  \mathbb{E}^{x}\left[\|X_{1} - X_{1}^{\epsilon}\|^{2}\right] \right)^{\frac{1}{2}},
\end{equation*}
Further notice that
\begin{equation*}
    \mathbb{E}^{x}\left[\left(1+ \|X_{1}\|^{2\ell-1} + \|X_{1}^{\epsilon}\|^{2\ell-1}\right)^2\right] = \mathbb{E}^{x}\left[1+ \|X_{1}\|^{4\ell-2} + \|X_{1}^{\epsilon}\|^{4\ell-2}\right] + 2  \mathbb{E}^{x}\left[ \|X_{1}\|^{2\ell-1} + \|X_{1}^{\epsilon}\|^{2\ell-1}+\|X_{1}\|^{2\ell-1}\|X_{1}^{\epsilon}\|^{2\ell-1}\right],
\end{equation*}
where the last term can be bounded as follows,
\begin{equation*}
    \mathbb{E}^{x}\left[ \|X_{1}\|^{2\ell-1}\|X_{1}^{\epsilon}\|^{2\ell-1}\right] \leq \left(\mathbb{E}^{x}\left[ \|X_{1}\|^{4\ell-2}\right]  \right)^{\frac{1}{2}}\left(\mathbb{E}^{x}\left[ \|X_{1}^{\epsilon}\|^{4\ell-2}\right]  \right)^{\frac{1}{2}} \leq \frac{1}{2}\left(\mathbb{E}^{x}\left[ \|X_{1}\|^{4\ell-2}\right] + \mathbb{E}^{x}\left[ \|X_{1}^{\epsilon}\|^{4\ell-2}\right] \right).
\end{equation*}
To bound the remaining order-$(2\ell-1)$ moments of $X_{1}$ and $X_{1}^{\epsilon}$, we apply the Jensen's inequality, 
\begin{equation*}
    \mathbb{E}^{x}\left [\|A\|^{2\ell-1} \right] \leq \mathbb{E}^{x}\left [\|A\|^{4\ell-2} \right]^{\frac{1}{2}}\leq  \max\left\{\mathbb{E}^{x}\left [\|A\|^{4\ell-2} \right],1\right\}, 
\end{equation*}
(for $A = X_{1}$ and $X_{1}^{\epsilon}$, respectively), which leads to
\begin{equation}\label{eq:three_errors_1}
\left| \mathbb{E}^{x}[f(X_{1})] - \mathbb{E}^{x}[f(X_{1}^{\epsilon})]  \right| \leq  R_{1} \left(\mathbb{E}^{x}\left[1+ \|X_{1}\|^{4\ell-2} + \|X_{1}^{\epsilon}\|^{4\ell-2} \right]\right)^{\frac{1}{2}} \left( \mathbb{E}^{x}\left[\|X_{1} - X_{1}^{\epsilon}\|^{2}\right] \right)^{\frac{1}{2}}
\end{equation}
for some constant $R_{1}\in (0,+\infty)$ independent of $\epsilon$. On the right-hand side of Eq.~\eqref{eq:three_errors_1}, the moments are bounded by applying Lemma~\ref{lem:mom_Ito} to $X(t)$ and $X^{\epsilon}(t)$, respectively. As for the error $\mathbb{E}^{x}\left[\|X_{1} - X_{1}^{\epsilon}\|^{2}\right]$ in \eqref{eq:three_errors_1}, we will derive an integral inequality of the form in Proposition~\ref{prop:gron} using It\^o formula \cite{oksendal2013stochastic}.

Notice that since $X(t)$ in \eqref{eq:unper} and $X^{\epsilon}(t)$ in \eqref{eq:per} are driven by the same Brownian motion $W$, we can consider the following coupled SDEs
\begin{equation}\label{eq:coupled}
    \frac{\td }{\td t}\begin{pmatrix} X \\ X^{\epsilon}  \end{pmatrix} = \begin{pmatrix} b(X) \\ b_{\epsilon}(X^{\epsilon})  \end{pmatrix} + \begin{pmatrix} \sigma \\ \sigma_{\epsilon}  \end{pmatrix} \dot{W}, \quad \begin{pmatrix} X(0) \\ X^{\epsilon}(0)  \end{pmatrix} = \begin{pmatrix} x \\ x  \end{pmatrix}.
\end{equation}
Recall that in Section~\ref{sec:learning_SDE}, we estimated $\sigma\sigma^{\top}$ by $\sigma_{\epsilon}\sigma_{\epsilon}^{\top}$ without specifying the diffusion coefficient $\sigma_{\epsilon}$. Here, we shall take $\sigma_{\epsilon}$ so that $\|\sigma - \sigma_{\epsilon}\|_{F} = O(\epsilon)$ for the sake of the proof. To construct such $\sigma_{\epsilon}$, we introduce the \emph{thin} SVD factorization of $\sigma$ (recall that $\sigma \in \mathbb{R}^{d\times m}$ is full rank with $m\leq d$),
\begin{equation}\label{eq:SVD}
    \sigma = U \Lambda V^{\top}, \quad U \in \mathbb{R}^{d\times m}, \quad \Lambda \in \mathbb{R}^{m \times m}, \quad V \in \mathbb{R}^{m \times m},
\end{equation}
where $\Lambda = \operatorname{diag}(\sigma_1, \sigma_{2}, \dots, \sigma_{m})$ with $\sigma_{i}>0$, $\forall i$. With the SVD factorization, we get  $\sigma\sigma^{\top} = U \Lambda^2  U^{\top}$. Given the estimate $\sigma_{\epsilon}\sigma_{\epsilon}^{\top}$ of $\sigma\sigma^{\top}$ satisfying \eqref{eq:epsilon_def}, we define 
\begin{equation*}
    \Sigma_{\epsilon}: = U^{\top} \sigma_{\epsilon}\sigma_{\epsilon}^{\top} U \in \mathbb{R}^{m\times m},
\end{equation*}
and we have
\begin{equation}\label{eq:Sigma_e}
    \| \Lambda^2  - \Sigma_{\epsilon}\|_{2} = \| U^{\top} (\sigma\sigma^{\top} -\sigma_{\epsilon}\sigma_{\epsilon}^{\top}) U\|_2 = \| \sigma\sigma^{\top} - \sigma_{\epsilon}\sigma_{\epsilon}^{\top}\|_{2} = \epsilon.
\end{equation}
Since $\Lambda^2 = \operatorname{diag}(\sigma_1^2, \sigma_{2}^2, \dots, \sigma_{m}^2)$ is positive definite, for $\epsilon$ small enough, $\Sigma_{\epsilon}$ is also positive definite according to \eqref{eq:Sigma_e}. Thus, there exists a unique lower triangular Cholesky factorization of $\Sigma_{\epsilon}$, e.g., Theorem 4.2.7 in \cite{golub2013matrix}, with the lower triangular matrix denoted by $L_{\epsilon}\in \mathbb{R}^{m\times m}$. Namely, 
\begin{equation*}
     L_{\epsilon}L_{\epsilon}^{\top}= \Sigma_{\epsilon}.
\end{equation*}
Moreover, by the forward stability of the Cholesky factorization subject to small perturbations, e.g., Theorem~2.1 in \cite{drmavc1994perturbation}, we have
\begin{equation}\label{eq:L_Le}
     \|\Lambda - L_{\epsilon}\|_{F} \leq \| \Lambda^2  - \Sigma_{\epsilon}\|_{F} \leq \sqrt{m} \| \Lambda^2  - \Sigma_{\epsilon}\|_{2} = \sqrt{m}\epsilon.
\end{equation}
In other words, $L_{\epsilon}$ approximates the matrix $\Lambda$ in the SVD factorization of $\sigma$ \eqref{eq:SVD}. Thus, by replacing $\Lambda$ in \eqref{eq:SVD} by $L_{\epsilon}$, we define
\begin{equation}\label{eq:def_sigma_e}
    \sigma_{\epsilon}: = U L_{\epsilon} V^{\top},
\end{equation}
which satisfies
\begin{equation*}
    \left(U L_{\epsilon} V^{\top}\right) \left(U L_{\epsilon} V^{\top}\right)^{
    \top} = U \Sigma_{\epsilon} U^{\top} = \sigma_{\epsilon}\sigma_{\epsilon}^{\top}.
\end{equation*}
In particular, by \eqref{eq:L_Le}, we have
\begin{equation}\label{eq:sigma_err_bound}
    \|\sigma - \sigma_{\epsilon}\|_{F} \leq \sqrt{m} \|\sigma - \sigma_{\epsilon}\|_{2} = \sqrt{m} \|U(\Lambda - L_{\epsilon})V^{\top}\|_{2} \leq \sqrt{m} \|\Lambda - L_{\epsilon}\|_{2} \leq m \epsilon.
\end{equation}
We assign $\sigma_{\epsilon}$ in \eqref{eq:def_sigma_e} to the coupled system \eqref{eq:coupled}, and apply the It\^o formula to the process
\begin{equation*}
    U(t) : = \|X(t) - X^{\epsilon}(t)\|^{2}, \quad U(0)  = 0, \quad 0 \leq t \leq \delta.
\end{equation*}
Direct calculations yield,
\begin{equation*}
\begin{split}
    \dot{U} & = 2 \langle X - X^{\epsilon},  b(X) - b_{\epsilon}(X^{\epsilon})\rangle +  2 \langle X - X^{\epsilon},  \left(\sigma - \sigma_{\epsilon} \right)\dot{W}\rangle  + \langle \left(\sigma - \sigma_{\epsilon}\right) \dot{W},  \left(\sigma - \sigma_{\epsilon}\right) \dot{W}\rangle \\
    & = 2 \langle X - X^{\epsilon},  b(X) - b_{\epsilon}(X^{\epsilon})\rangle + \|\sigma - \sigma_{\epsilon}\|_{F}^{2} + 2 \langle X - X^{\epsilon},  \left(\sigma - \sigma_{\epsilon} \right)\dot{W}\rangle,
\end{split}
\end{equation*}
where $\langle \cdot, \cdot \rangle$ denotes the inner product in $\mathbb{R}^{d}$. This can be rewritten as an  It\^o integral representation of $U(t)$,
\begin{equation*}
    U(t) = \int_{0}^{t} 2 \langle X(s) - X^{\epsilon}(s),  b(X(s)) - b_{\epsilon}(X^{\epsilon}(s))\rangle + \|\sigma - \sigma_{\epsilon}\|_{F}^{2} \td s + \int_{0}^{t} (\sigma - \sigma_{\epsilon})^{\top}(X- X^{\epsilon}) \cdot \td W.
\end{equation*}    
Here,
\begin{equation*}
\int_{0}^{t} (\sigma - \sigma_{\epsilon})^{\top}(X - X^{\epsilon}) \cdot \td W: = \sum_{i=1}^{m} \int_{0}^{t} (\sigma - \sigma_{\epsilon})_{i}^{\top} (X - X^{\epsilon}) \td W_{i},
\end{equation*}
where $(\sigma - \sigma_{\epsilon})_{i} \in \mathbb{R}^{d}$ denotes the $i$-th column vector of the matrix $(\sigma - \sigma_{\epsilon})$. Using the It\^o-isometry \cite{oksendal2013stochastic}, we have
\begin{equation*}
    \mathbb{E}^{x}\left[ \int_{0}^{t} (\sigma - \sigma_{\epsilon})^{\top}(X- X^{\epsilon}) \cdot \td W  \right] \leq \mathbb{E}^{x}\left[ \int_{0}^{t} \|(\sigma - \sigma_{\epsilon})^{\top}(X - X^{\epsilon})\|^2 \td s  \right]^{\frac{1}{2}} \leq \| \sigma - \sigma_{\epsilon}\|_{F} \int_{0}^{t} \mathbb{E}^{x}[U(s)]^{\frac{1}{2}} \td s.
\end{equation*}
Further observe that $b(X) - b_{\epsilon}(X^{\epsilon}) = (b(X) - b(X^{\epsilon})) + (b(
X^{\epsilon}) - b_{\epsilon}(X^{\epsilon}))$. By the Lipschitz continuity of $b$ in Assumption~\ref{assu:Ito_coef} and the linear growth bound of $b - b_{\epsilon}$ in Assumption~\ref{assu:coe_per}, and we have
\begin{equation*}
    \begin{split}
        \left|\langle X - X^{\epsilon},  b(X) - b_{\epsilon}(X^{\epsilon})\rangle \right| & \leq \left| \langle X - X^{\epsilon},  b(X) - b(X^{\epsilon})\rangle \right|+ \left|\langle X - X^{\epsilon},  b(X^{\epsilon}) - b_{\epsilon}(X^{\epsilon})\rangle \right| \\
        & \leq K_{1} \| X - X^{\epsilon}\|^2 + \epsilon K_{3} (1 + \|X^{\epsilon}\|^2)^{\frac{1}{2}} \| X - X^{\epsilon}\|.
    \end{split}
\end{equation*}
Combining these inequalities, we arrive at,
\begin{equation*}
\mathbb{E}^{x}[U(t)] \leq t\|\sigma - \sigma_{\epsilon}\|_{F}^{2}  + 2\int_{0}^{t} K_{1}\mathbb{E}^{x}[U(s)] + \left(\epsilon  K_{3}\mathbb{E}^{x}\left[(1+\|X^{\epsilon}(s)\|^2)^{\frac{1}{2}}\right] + \|\sigma-\sigma_{\epsilon}\|_{F} \right) \mathbb{E}^{x}[U(s)]^{\frac{1}{2}} \td s, \quad \forall t \in [0,\delta].
\end{equation*}
Thus, the function $v(t;x): = \mathbb{E}^{x}[U(t)]$ satisfies the following integral inequality 
\begin{equation*}
    v(t;x) \leq  c + \int_{0}^{t} \left( a_1 v(s) + a_2v^{\frac{1}{2}}(s) \right) \td s,  \quad \forall t\in (0, \delta),
\end{equation*}
with
\begin{equation*}
    c =\delta m^2\epsilon^2, \quad a_1 = 2 K_{1}, \quad a_2 = R_2(1+ \|x\|^2)^{\frac{1}{2}} \epsilon,
\end{equation*}
where the constant $R_{2} \in (0, +\infty)$ is independent of $\epsilon$. Here, we have used the error bound on $\|\sigma - \sigma_{\epsilon}\|_{F}$ in Eq.~\eqref{eq:sigma_err_bound} to get $c$ and the moment bound of $\mathbb{E}^{x}\left[ \|X^{\epsilon}\|^{2}\right]$ (by applying Lemma~\ref{lem:mom_Ito} to $X^{\epsilon}$) to get $a_2$. By the Gronwall-type inequality in Proposition~\ref{prop:gron}, we conclude the following bound for $v$,
\begin{equation*}
    \mathbb{E}^{x}\left[\|X(t) - X^{\epsilon}(t)\|^{2}\right] = v(t;x) \leq R_{3}e^{2K_{1}t} (1+\|x\|^{2}) \epsilon^2, \quad \forall t\in [0, \delta],
\end{equation*}
where the constant $R_{3} \in (0, +\infty)$ is independent of $\epsilon$. Thus, Eq.~\eqref{eq:three_errors_1} becomes
\begin{equation}
    \left| \mathbb{E}^{x}[f(X_{1})] - \mathbb{E}^{x}[f(X_{1}^{\epsilon})]  \right| \leq  R_{1}R_{3}^{\frac{1}{2}}e^{K_{1}\delta} (1+\|x\|^{2})^{\frac{1}{2}}  \left(\mathbb{E}^{x}\left[1+ \|X_{1}\|^{4\ell-2} + \|X_{1}^{\epsilon}\|^{4\ell-2} \right]\right)^{\frac{1}{2}}\epsilon \leq K V(x) \epsilon,
\end{equation}
for some constant $K\in (0, +\infty)$ independent of $\epsilon$. In the last inequality, we have used the moment bounds on $X_{1}$ and $X_{1}^{\epsilon}$ based on Lemma~\ref{lem:mom_Ito} and Assumption~\ref{assu:2} on the Lyapunov function $V$. Finally, by the definition of $\gamma_{X}$, we have,
\begin{equation*}
    \gamma_{X} = \sup_{x\in \mathbb{R}^{d}} \sup_{f\in \mathcal{G}_{\ell}}  \frac{\left| \mathbb{E}^{x}[f(X_{1})] - \mathbb{E}^{x}[f(X^{\epsilon}_1)]\right|}{V(x)}\leq K\epsilon,
\end{equation*}
which is the desirable result.

When $(u_{n}, u_{n}^{\epsilon}) = (X_{n}^
\delta, X_{n}^{\epsilon,\delta})$, as an analogy of \eqref{eq:three_errors_1}, we have
\begin{equation*}
    \left| \mathbb{E}^{x}[f(X^{\delta}_{1})] - \mathbb{E}^{x}[f(X_{1}^{\epsilon, \delta})]  \right| \leq  R_{1} \left(\mathbb{E}^{x}\left[1+ \|X^{\delta}_{1}\|^{4\ell-2} + \|X_{1}^{\epsilon,\delta}\|^{4\ell-2} \right]\right)^{\frac{1}{2}} \left( \mathbb{E}^{x}\left[\|X^{\delta}_{1} - X_{1}^{\epsilon, \delta}\|^{2}\right] \right)^{\frac{1}{2}}.
\end{equation*}
The bound for $\mathbb{E}^{x}\left[\|X^{\delta}_{1} - X_{1}^{\epsilon, \delta}\|^{2}\right]$ can be derived via direct computations. We apply EM scheme \eqref{eq:Euler_approx} to the coupled system \eqref{eq:coupled}, and we have
\begin{equation*}
\begin{split}
\mathbb{E}^{x}\left[\|X^{\delta}_{1} - X_{1}^{\epsilon, \delta}\|^{2}\right] &  = \mathbb{E}\left[\|\delta (b(x) - b_{\epsilon}(x)) + \sqrt{\delta} (\sigma - \sigma_{\epsilon})\xi_1 \|^{2}\right]  \\
& = \delta^2 \|b(x) - b_{\epsilon}(x)\|^2 + \delta \mathbb{E}\left[\| (\sigma - \sigma_{\epsilon})\xi_1 \|^{2}\right]   \leq R_{4}(1+\|x\|^2) \epsilon^2,
\end{split}
\end{equation*}
for some constant $R_{4}\in (0, +\infty)$ independent of $\epsilon$. Repeating the argument for $\gamma_{X}$, we reach the same desirable result for $\gamma_{X^\delta}$.

\section{Proof of Proposition~\ref{prop:two_point}}
\label{App:two_point}

In this Appendix, we discuss the proof of Proposition~\ref{prop:two_point}, which provides both the well-posedness of the two-point statistics of the perturbed dynamics and the corresponding error bound.

By the finite second moments assumption, we know the two-point statistics in \eqref{eq:two_point_error} are well-defined. In particular, we have
\begin{equation*}
\begin{split}
& (k^{\epsilon}_{A,B})_{n} - (k_{A,B})_{n}  = \int\int A(x)B(x_0)P^{\epsilon}_{n\delta}(x_0,\td x)\pi^{\epsilon}(\td x_0) - \int\int A(x)B(x_0)P_{n\delta}(x_0,\td x) \pi(\td x_0) \\
& = \int\int A(x)B(x_0)\left(P^{\epsilon}_{n\delta}(x_0,\td x)-P_{n\delta}(x_0,\td x) \right) \pi^{\epsilon}(\td x_0) +  \int\int A(x)B(x_0)P_{n\delta}(x_0,\td x)(\pi^{\epsilon} -\pi)(\td x_0)= : I_1 + I_2.
\end{split}
\end{equation*}
For the term $I_1$, we have
\begin{equation}\label{eq:bound_I1}
\begin{split}
|I_1|& \leq \int |B(x_0)|\left|\int A(x)\left(P^{\epsilon}_{n\delta}(x_0,\td x)-P_{n\delta}(x_0,\td x) \right)\right| \pi^{\epsilon}(\td x_0) = \int|B(x_0)|\left|\mathbb{E}^{x_0}[A(X_{n}^{\epsilon})] - \mathbb{E}^{x_0}[A(X_{n})]\right| \pi^{\epsilon}(\td x_0) \\
& \leq  R_{1} \pi^{\epsilon}(|B|V) \epsilon \leq R_{1}  (\pi^{\epsilon}(B^{2}))^{\frac{1}{2}} (\pi^{\epsilon}(V^2))^{\frac{1}{2}} \epsilon,
    \end{split}
\end{equation}
for some constants $R_{1}\in (0,+\infty)$ and $D\in (1,+\infty)$ (independent of $A$ and $B$). Here we have applied Proposition~\ref{prop:per_bound} to $\left|\mathbb{E}^{x_0}[A(X_{n}^{\epsilon})] - \mathbb{E}^{x_0}[A(X_{n})]\right|$ based on Lemma~\ref{lem:well_pose}.

Meanwhile for the term $I_{2}$, we have
\begin{equation*}
I_{2} = \int B(x_0)\int A(x)P_{n\delta}(x_0,\td x)(\pi^{\epsilon} -\pi)(\td x_0) = \int B(x_0) E^{x_{0}}[A(X_{n})] (\pi^{\epsilon} -\pi)(\td x_0) = \pi^{\epsilon}(f_n) - \pi(f_{n}),
\end{equation*}
where $f_{n}(x) = B(x)\mathbb{E}^{x}[A(X_{n})]$. Here, $f_{n}$, in general, is not a function in $\mathcal{G}_{\ell}$, and we cannot apply the the existing one-point statistics error bound derived from the perturbation theory, e.g., Proposition~\ref{prop:per_bound}. As a remedy, we consider the long-time linear response theory reviewed in Section~\ref{sec:lin_resp}.

Before applying Theorem~\ref{theo:lin_resp} to $f_n$, we need to show that $f_{n}\in C_{G,H}^{1}$. Let
\begin{equation*}
    A_{n}(x): = \mathbb{E}^{x}[A(X_{n})] - \pi(A).
\end{equation*}
Notice $A\in C^{1}_{V,VH/G}\subset C^{1}_{G,H}$ (since $G\geq V$), and, by the spectral gap assumption (Assumption~\ref{assu:4}), we have
\begin{equation}\label{eq:spec_gap}
    \|A_{n}\|_{1;G,H}  = \left\|\mathcal{P}_{n\delta}^{0}A - \pi(A)\right\|_{1;G,H} \leq \lambda^{n}\|A - \pi(A)\|_{1;G,H}, \quad \lambda \in (0,1).
\end{equation}
With this bound, we turn to $f_n$,
\begin{equation*}
    \left\|f_{n}\right\|_{1;G,H}  = \|BA_{n} + B\pi(A)\|_{1;G,H} \leq \|BA_{n}\|_{1;G,H} +\pi(A)\|B\|_{1;G,H},
\end{equation*}
where $\|B\|_{1;G,H} < \infty$ since $B\in C^{1}_{G/V,H/V}\subset C^{1}_{G,H}$. Therefore, to show $f_{n}\in C^{1}_{G,H}$ it is enough to control the norm $\|BA_{n}\|_{1;G,H}$. By the definition \eqref{eq:GH_norm}, we have
\begin{equation*}
    \|BA_{n}\|_{1;G,H} = \sup_{x\in \mathbb{R}^{d}}\left\{\frac{|BA_{n}|}{G} + \frac{\|\nabla B A_{n} + B\nabla A_{n}\|}{H} \right\}.
\end{equation*}
Notice that, 
\begin{equation*}
  \frac{|BA_{n}|}{G} + \frac{\|\nabla B A_{n} + B\nabla A_{n}\|}{H} \leq |B| \left( \frac{|A_{n}|}{G} + \frac{\|\nabla A_{n} \|}{H} \right) + \frac{\|\nabla B\|}{H/G} \frac{|A_n|}{G}\leq \left( \frac{|A_{n}|}{G} + \frac{\|\nabla A_{n} \|}{H} \right)\left( |B| + \frac{\|\nabla B\|}{H/G}  \right),
\end{equation*}
where, by Eq.~\eqref{eq:spec_gap},
\begin{equation*}
    \sup_{x\in \mathbb{R}^{d}} \left\{ \frac{|A_{n}|}{G} + \frac{\|\nabla A_{n} \|}{H} \right\}  = \|A_{n}\|_{1;G,H} \leq \lambda^{n} \|A - \pi(A)\|_{1;G,H} = \lambda^{n}\sup_{x\in \mathbb{R}^{d}} \left\{ \frac{|A - \pi(A)|}{G} + \frac{\|\nabla A \|}{H} \right\}.
\end{equation*}
Thus, we obtain the following bound,
\begin{equation*}
 \|BA_{n}\|_{1;G,H}  \leq \lambda^{n}\sup_{x\in \mathbb{R}^{d}} \left\{ \left(\frac{|A - \pi(A)|}{G} + \frac{\|\nabla A \|}{H} \right) \left( |B| + \frac{\|\nabla B\|}{H/G}  \right) \right\}
 = \lambda^{n}\sup_{x\in \mathbb{R}^{d}} \left\{ \left(\frac{|A - \pi(A)|}{V} + \frac{\|\nabla A \|}{VH/G} \right) \left( \frac{|B|}{G/V} + \frac{\|\nabla B\|}{H/V}  \right) \right\}.
\end{equation*}
This shows that,
\begin{equation}\label{eq:GH_fn}
    \left\|f_{n}\right\|_{1;G,H} \leq \lambda^{n} \|A-\pi(A)\|_{1;V,VH/G} \|B\|_{1;G/V,H/V} + \pi(A)\|B\|_{1;G,H}<\infty.
\end{equation}
With $f_{n}\in C^{1}_{G,H}$, by invoking Theorem~\ref{theo:lin_resp}, we have
\begin{equation*}
    |I_{2}| = \left|\pi^{\epsilon}(f_n) - \pi(f_{n}) \right| \leq \left|\frac{\td }{\td \;\epsilon}  \pi^{\epsilon}(f_{n})\Big|_{\epsilon =0} \right|\epsilon + O(\epsilon^2),
\end{equation*}
where the $\epsilon$-derivative is well-defined and satisfies Eq.~\eqref{eq:lin_resp1} with $f = f_n$ and $t= \delta$.

Given $\pi(A)=0$, Eq.~\eqref{eq:GH_fn} reduces to
\begin{equation*}
   \left\|f_{n}\right\|_{1;G,H} \leq \lambda^{n}\|A\|_{1;V,VH/G} \|B\|_{1;G/V,H/V},
\end{equation*}
and the $\epsilon$-derivative in \eqref{eq:two_point_error}, by Eq.~\eqref{eq:lin_resp1}, satisfies
\begin{equation*}
    \frac{\td }{\td \; \epsilon} \pi^{\epsilon}(f_{n})\Big|_{\epsilon=0} = \mathbb{E}_{\pi} \left[ \partial\mathcal{P}_{\delta}^{0}(I - \mathcal{P}_{\delta}^{0})^{-1}\left(f_{n}-\pi(f_n)\right)\right],
\end{equation*}
where 
\begin{equation*}
    \left\|f_{n}-\pi(f_{n})\right\|_{1;G,H} \leq \left\|f_{n}\right\|_{1;G,H} + |\pi(f_{n})| \leq  \lambda^{n}\|A\|_{1;V,VH/G} \|B\|_{1;G/V,H/V} + |\pi(f_{n})|, \quad \forall n\geq 0.
\end{equation*}
To bound $\pi(f_{n})$, since $A\in \mathcal{G}_{\ell}\subset \mathcal{G}$, we apply Theorem~\ref{thm:ergodic} to $A$, 
\begin{equation*}
    \left|f_{n}(x)\right| = |B(x)|\left|\mathbb{E}^{x}[A(X_{n})]\right| = |B(x)|\left|\mathbb{E}^{x}[A(X_{n})]-\pi(A)\right|\leq R_{2}\rho^{n}|B(x)|V(x)\leq R_{2}\rho^{n}\|B\|_{1;G/V,H/V} G(x), \quad \forall x\in \mathbb{R}^{d},
\end{equation*}
for some constants $R_{2}\in (0,\infty)$ and $\rho = \rho(\delta) \in (0,1)$ independent of $A$. Here, $|B(x)|V(x)\leq \|B\|_{1;G/V,H/V} G(x)$ by the definition of the norm $\|\cdot\|_{G/V,H/V}$. Thus, $|\pi(f_{n})| \leq R_{2}\|B\|_{1;G/V,H/V}\rho^{n}\pi(G)$ and
\begin{equation}\label{eq:pi_fn}
    \left\|f_{n}-\pi(f_{n})\right\|_{1;G,H} \leq \|B\|_{1;G/V,H/V}\left( \lambda^{n}\|A\|_{1;V,VH/G}  + R_{2}\rho^{n}\pi(G)\right),
\end{equation}
where $\pi(G)\leq \pi(U)<\infty$ by the Assumptions~\ref{assu:5}-\ref{assu:6}.

Let $K_{\pi}:= \left\{f \in C^{1}_{G,H} \; |\; \pi(f) = 0\right\}$. Recall that
$(I - \mathcal{P}_{\delta}^{0})^{-1}$ defines a bounded linear map from $K_{\pi}$ to itself (see the discussion after the Assumption~\ref{assu:4}). In particular, for any $f\in K_{\pi}$, we have,
\begin{equation*}
    (I - \mathcal{P}_{\delta}^{0})^{-1} f(x) = \sum_{n=0}^{\infty} \left(\mathcal{P}_{\delta}^{0}\right)^{n} f(x) = \sum_{n=0}^{\infty}\mathbb{E}^{x}[f(X_{n})]
\end{equation*}
(the summation converges due to the spectral gap assumption), which implies that,
\begin{equation*}
    \left\|(I - \mathcal{P}_{\delta}^{0})^{-1} f\right\|_{1;G,H} = \left\|\sum_{n=0}^{\infty} \left(\mathcal{P}_{\delta}^{0}\right)^{n} f(x)\right\|_{1;G,H}  \leq \sum_{n=0}^{\infty}\lambda^{n}  \|f\|_{1;G,H} = \frac{1}{1-\lambda} \|f\|_{1;G,H}, \quad \forall f\in K_{\pi}.
\end{equation*}
Together with the Assumption~\ref{assu:5} and Eq.~\eqref{eq:pi_fn}, we have
\begin{equation*}
   \left\|\partial\mathcal{P}_{\delta}^{0}(I - \mathcal{P}_{\delta}^{0})^{-1}\left(f_{n}-\pi(f_n)\right)\right\|_{U} \leq \frac{R_{3}}{1-\lambda}\|f_{n}-\pi(f_{n})\|_{1;G,H} \leq \frac{R_{3}}{1-\lambda} \|B\|_{1;G/V,H/V}\left( \lambda^{n}\|A\|_{1;V,VH/G}  + R_{2}\rho^{n}\pi(G)\right),
\end{equation*}
for a constant $R_3\in (0,+\infty)$ independent of $A$ and $B$,
which leads to the desirable error bound in Eq.~\eqref{eq:cen_error}.

\bibliographystyle{plain}  
\bibliography{arxiv,InfSDEs}

\end{document}